  \newdimen\paravsp  \paravsp=1.3ex 
\numberwithin{equation}{section}
\theoremstyle{definition} 
\newtheorem{theorem}{Theorem}[section]
\newtheorem{lemma}{Lemma}
\newtheorem{corollary}[theorem]{Corollary}
\newtheorem{orexample}{Example}
\newtheorem{proposition}{Proposition}[section]
\newenvironment{definition}[1][Definition.]{\begin{trivlist}
\item[\hskip \labelsep {\bfseries #1}]}{\end{trivlist}}
\newenvironment{keywords}{\centerline{\bf\small
Keywords}\begin{quote}\small}{\par\end{quote}\vskip 1ex}
\newenvironment{example}[1][]
{\begin{orexample}[#1]}
{\hspace*{\fill}\ensuremath{\diamondsuit\quad}
\end{orexample}}
\newtheorem{myexample}[theorem]{Example}
\def \bigss{{\Omega}_{\alpha:\beta}} 
\def \permute{{\pi}} 
\def \measure{{pr}} 
\def \bigmeasure{{ \text{Pr} }} 
\def \Qtt    {F\!G}
\def \Qtf    {F\!\bar{G}}
\def \Qft    {\bar{F}\!G}
\def \Qff    {\bar{F}\!\bar{G}}
\def \impl   {F \!{\scriptstyle \rightarrow} G}
\def \inds   {{\cal{I}}}	
\def \preds {{\cal{A}}}	
\def \usize{N}
\begin{document}

\title{\vspace{-4ex}
\vskip 2mm\bf\Large\hrule height5pt \vskip 4mm
On Nicod's Condition, Rules of Induction and the Raven Paradox
\vskip 4mm \hrule height2pt}
\author{{\bf Hadi Mohasel Afshar \& Peter Sunehag}\\[3mm]
\normalsize Research School of Computer Science \\[-0.5ex] 
\normalsize Australian National University \\[-0.5ex]
\normalsize Canberra, ACT, 0200, Australia \\
\normalsize \texttt{hadi.afshar@anu.edu.au \qquad peter.sunehag@anu.edu.au}
}
\date{}
\maketitle

\begin{abstract}
Philosophers writing about the ravens paradox often note that 
Nicod's Condition (NC) holds given some set of background 
information, and fails to hold against others, but rarely go any further. 
That is,  it is usually not explored which 
background information makes NC true or false.   
The present paper aims to fill this gap. 
For us, ``(objective) background knowledge'' is restricted to  
information that can be expressed as probability events. 
Any other configuration is 
regarded as being subjective and a property of the a priori probability distribution.
We study NC in two specific 
settings. 
In the first case, a \emph{complete description} of some individuals is known, 
e.g.\ one knows of each of a group of individuals whether they 
are black and whether they are ravens. In the second case, the number of
individuals having a particular property is given, e.g.\ one knows 
how many ravens or how many black things there are (in the relevant population).
While some of the most famous answers to the paradox are measure-dependent, 
our discussion is not restricted to any particular probability measure. 
Our most interesting result is that in the second setting, NC violates a simple kind of inductive inference (namely \emph{projectability}). 
Since relative to NC, this latter rule is more closely related to, and more directly justified by our intuitive notion of inductive reasoning, this tension makes a 
case against the plausibility of NC. 
In the end, we suggest that the
informal representation of NC may seem to be intuitively plausible
because it can easily be mistaken for \em{reasoning by analogy}.
\end{abstract}

\begin{keywords} 
Nicod's condition (NC), Raven paradox, weak projectability (PJ), reasoning by analogy (RA), inductive inference.
\end{keywords}

\newpage

\section{Introduction} \label{sect:introduction}

In this article, we study induction and in particular Nicod's
Condition (NC) from a Bayesian (in the sense of \emph{subjective probability}) point of view. 
Rules of induction can be thought
of as such restrictions on the class of probability measures (or equivalently, on the class of \emph{rational agents}\footnote{
\label{foot:peter}
The term \emph{rational agent} (that is, the performer of induction), as used in fields such as \emph{decision theory} and \emph{artificial intelligence} \cite{russell03}, by definition, refers to an agent that satisfies certain consistency axioms (see \cite{savage54}). 
Representation theorems show that this implies that the agent has a probability distribution representing a priori beliefs for what will happen and utilities for the various possible outcomes, such that decisions can be explained as maximizing expected utility. Given utilities we can from simpler axioms \cite{sunehag11} infer the existence of a probability distribution.
}
).

The question is: ``How can we agree that any particular rule of
induction is plausible and generally entails sensible consequences
and therefore should be accepted a priori?". We are interested in a
specific rule, 
namely NC which informally speaking states that
``A proposition of the form {\em All $F$ are $G$} is supported by
the observation that a particular object is both $F$ and $G$"
\cite{hempel45}. 
Does the fact that NC, 
does not seem to be
counterintuitive suffice to persuade us that it is a plausible rule
of induction? How can we be sure that it does not violate other
intuitively acceptable rules and principles? As the notorious \emph{raven
paradox} \cite{hempel45} shows, NC actually does entail
counterintuitive consequences, and more than seven decades of
discussion about this paradox shows that assessment of rules of
induction can be extremely problematic.

A summary of the \emph{raven paradox} is as follows: The hypothesis $H$ := ``\emph{All ravens are black}" is logically equivalent to $\hat{H}$ := ``\emph{Every thing that is not black is not a raven}". A \emph{green apple} is neither black nor a raven therefore according to NC its observation should confirm $\hat{H}$. But $H$ is logically equivalent to $\hat{H}$, so we end up with  a Paradoxical Conclusion (PC), that an observation of a \emph{green apple} confirms that all ravens are black, which is counterintuitive. 
In order to resolve the paradox, either it should be shown that NC is not a plausible rule of induction or it should be claimed that PC holds and should not be considered as being counterintuitive.\footnote{Some authors have even denied the equivalence of $H$ and $\hat{H}$ \cite{scheffler72}.} 

In order to study the paradox from a Bayesian perspective, first we make a distinction between (objective) background knowledge by which we exclusively refer to the knowledge that can be represented (and consequently can be thought of) as previously observed events, and any other kinds of information which we consider as being subjective and a property of the a priori chosen probability measure (i.e.\ the initial degrees of beliefs).
The cogency of inductive rules is significantly affected by the given background knowledge and the chosen measures.    
For example, it is already known that relative to some background knowledge, NC violates intuition (e.g.\ see \cite{good67}). In Section \ref{subsect:background}, we argue that relative to unrestricted background knowledge, not only NC but any rule of induction can be refuted.
Hempel himself believed that NC and the raven paradox should be considered in the context of absolutely no background information \cite{hempel67}. From a Bayesian perspective, however, this does not solve the problem. The reason is that background knowledge and priors are convertible (in the sense that they can produce the same effects). 
For example if we are not allowed to consider NC in the context that we possess the background knowledge that ``an object $a$ has a property $F$" (denoted as: $F_a$), we can (approximately) produce the same situation by subjectively believing that the probability that that object has the property $F$, is sufficiently close to 1 (denoted as: $pr(F_a) \approx 1$).\footnote{
According to \emph{Cournot's principle} \cite{cournot43}, 
we do not allow 1 (or 0) priors for events that may or may not happen. If it was allowed then assuming $pr(F_a)=1$, would exactly produce the same effect that possessing the background knowledge ``$a$ is $F$" would. This would be problematic, because assigning probability 1 to the events that are not determined (by background knowledge) may lead to undefined conditional probabilities (in case the complement events occur). However, assigning a probability that is arbitrarily close to one (i.e.\ $pr(F_a) \approx 1$) does not cause such a problem while it approximately produces the effect of the same background knowledge to arbitrary precision. 
}$^{,}$\footnote{
Although for Hempel and his contemporaries, the confirmation theory was (more or less) a \emph{logical} relation, akin to deductive entailment, rather than a \emph{probabilistic} relation (in the formal sense) \cite{fitelson10}, what we mentioned about the convertibility of objective information and subjective beliefs, was in a way reflected in their discussions: 
Good's \emph{Red Herring} \cite{good67} provided a hypothetical objective background setting with respect to which, NC does not hold. Hempel's assertion that ``NC should be considered in the context of no (objective) background information" was in fact an attempt to address such an issue. However, nothing could prevent Good from producing the same effect by simply replacing the objective information with subjective a priori beliefs of a new born baby (\emph{Good's baby} \cite{good68}).    
}
Therefore, if we want to restrict ourselves to the context of \emph{perfect ignorance}, not only should we possess no objective knowledge, we should also only be permitted to reason based on an absolutely unbiased probability measure. This raises an important question: ``What is an unbiased measure?" Due to its subjective nature, this question does not have a definitive answer but by far, the most widely considered choice is the \emph{uniform measure}.\footnote{
The main justification is due to the \emph{principle of maximum entropy} \cite{jaynes03}. This principle recommends that one choose, among all the probability measures satisfying a constraint, the measure which maximizes the \emph{Shannon entropy}. In the absence of any background knowledge (i.e.\ no constraint), the uniform probability measure maximizes entropy.
}$^{,}$\footnote{
An alternative to the \emph{uniform measure} is Solomonoff's \emph{theory of universal inductive inference} \cite{solomonoff64} which mathematically formalizes and puts together \emph{Occam's razor} (the principle that the simplest model consistent with the background knowledge should be chosen) and \emph{Epicurus' principle of multiple explanations} (that all explanations consistent with background knowledge should be kept) (see \cite{hutter07} or \cite{rathmanner11}). 
}        
On the other hand, it is well known that using the uniform measure, inductive learning is not possible \cite{carnap50}.
This shows that from the subjective probabilistic perspective, choosing a probability measure that satisfies the \emph{condition of perfect ignorance} and allows inductive learning, is arguably impossible.

It is also notable that demonstrating that a specific probability measure does (or does not) comply with a rule of induction (or a statement such as PC), does not illuminate the reason why a typical human observer believes that such a rule (or statement) is implausible (or plausible). Conversely, one might argue that compliance of a probability measure with a counterintuitive statement such as PC, may suggest that this measure does not provide a suitable model for inductive reasoning.
As an example, consider the following two works: \cite{maher99} and \cite{maher04}. They are among the most famous answers to the raven paradox. Using Carnap's measure \cite{carnap80}, in 1999 Maher argued that both NC and PC hold. In 2004 he suggested a more complex measure that led to opposite results: Maher showed that for this latter measure, neither NC nor PC holds in all settings.
Although Maher's works successfully show that at least for one probability measures NC holds and for another probability measures it does not, they do not show whether NC and PC are generally plausible or not.

The mainstream contemporary Bayesian solutions are not restricted to a particular measure and in this sense, are more general.  
According to \cite{fitelson06} almost all of them accept PC and argue that observation of a \emph{non-black non-raven} does provide evidence in support of $H$; however, in comparison with the observation of a \emph{black raven}, the amount of confirmation is very small. This difference, they argue, is due to the fact that the number of ravens is much less than the number of non-black objects.
However as \cite{vranas04} explains, these arguments have only been able to reach their intended conclusion by adding some extra assumptions about the characteristics of the chosen probability measure. He shows that the standard Bayesian solution relies on the almost never explicitly defended assumption that ``the probability of $H$ should not be affected by evidence that an object is \emph{non-black}." -- a supposition that he believes, is implausible, i.e.\ may hold or not. 

To summarize the above discussion: the general plausibility of a rule of induction cannot be determined if we restrict our study to particular (objective) background knowledge or a particular  probability measure. On the other hand, no rule of induction holds in the presence of an unrestricted choice of background knowledge and probability measure.
We conclude that rules of induction should be studied for different classes of background knowledge and priors. 
If a rule of induction holds relative to a large class of \emph{reasonable background knowledge} (i.e.\ information similar to our actual configuration of knowledge obtained from observations that often take place in real life) and relative to \emph{reasonable probability measures} (i.e.\ measures that have intuitively reasonable characteristics e.g.\ comply with other rules of induction which are more directly justified by our intuitive notion of induction), then we can claim that the studied rule is plausible, otherwise we cannot.

In this paper, we study NC with such an approach.  
In Section~\ref{sect:nc}, we present a formal representation for three rules of induction, namely, \emph{projectability} (PJ), \emph{reasoning by analogy} (RA) and \emph{Nicod's condition} (NC). We also define the form of background knowledge that is studied throughout the paper. Informally speaking, we only study pieces of knowledge that do not link the properties of one object to another object. They can be though of as knowledge that can be gained directly by observing the properties of some distinct objects. 
For example, the background knowledge: ``if object $a$ is a \emph{raven}, then object $b$ is not a \emph{raven}", is not of this form.  
While one can easily constitute pieces of information that do not have such a form and violate the aforementioned rules of induction, we have not found any illuminating counterexample to the assumption that relative to a piece of information that does not link properties of distinct objects together, PJ and RA comply with intuition. In the case of NC, we are more inquiring. In the next two sections, we study the restrictions of the probability measures that guarantee the validity of NC relative to two more specific background configurations (that can be expressed in the mentioned form). 
In Section~\ref{sect:setting1}, we find some sufficient conditions for the validity of NC and some sufficient conditions for its invalidity, relative to information about the kind (i.e.\ being raven or not) and color (i.e.\ being black or not) of some objects. The sufficient condition that we present for the validity of NC is less restrictive. 
However, this is insufficient for claiming that in this setting NC is generally plausible. 
Section~\ref{sect:setting2} deals with the setting where the exact number of objects having one property is known. For example we know how many ravens (or how many non-black objects) exist. We show that in this setting, measures that comply with Nicod's condition, do not always comply with PJ which seems to be the simplest formalization of \emph{inductive inference}. It is also shown that in the case of contradiction, intuition (arguably) follows PJ rather than NC. We think that this result is both interesting and somewhat surprising 
and should be considered as a main contribution of this paper.

One limitation of our basic setup is that it limits us to a universe with an arbitrary but known size. However, in Section \ref{sect:universe}, this strong assumption is replaced by the weaker assumption that there is a probability distribution over the possible sizes of the universe and this distribution is not affected by an observation of a single object. We show that under this weaker assumption,  the results from the former sections remain valid. 

In Section \ref{sect:conclude}, we summarize the paper and conclude that there is a tension between NC and our intuitive notion of inductive reasoning. We also suggest that \emph{reasoning by analogy} provides a viable alternative to formalize the seemingly intuitive statement that ``the observation that a particular object is both $F$ and $G$ confirms the hypothesis that any object that is $F$ is also $G$" without suffering form the shortcomings of NC. All theorems are proven in Section~\ref{sect:proofs}.       
\subsection{Notation, Basic Definitions and Assumptions} \label{sect:notate}
Throughout sections \ref{sect:nc} to \ref{sect:setting2} we work with a first-order language $L$ whose only nonlogical symbols are a pair of monadic predicates $F$ and $G$ and a set of constants $U$ (officially shown as) $\{ u_1, u_2, \ldots, u_\usize \}$ where {$\usize$} is a known positive integer. However, for simplicity we drop ``$u$" and refer to each constant by its index. 
We rely on the \emph{domain closure axiom} \cite{reiter80}, that is:
\begin{equation}\label{eq:axiom} 
\forall x \quad (x=1) \vee (x=2) \vee \ldots \vee (x = \usize)
\end{equation} 
where $1$ to $N$ are distinct constants i.e.\ $(1 \neq 2) \wedge (1 \neq 3) \wedge \ldots$.
Clearly, models of this axiom are restricted to interpretations with \emph{domains} containing exactly $\usize$ distinct individuals (objects) each of which is denoted by a constant in $U$. Using   this  bijection between the elements of the domain and constants, we refer to $U$ as the universe. 
  
Negation, conjunction, disjunction and material implication are 
respectively represented by 
``$\neg$", ``$.$" (or ``$\wedge$"), ``$\vee$" and ``$\rightarrow$". 
If $b$ is an individual and $\psi$ is a 1-place predicate (either atomic or a sentential combination of atomic predicates), $\psi_b$ is defined as a proposition that involves predicate $\psi$ and indicates: 
``$b$ has (or satisfies or is described by) $\psi$". 
Conjunction of several propositions 
$\psi_{m},\psi_{{m+1}},\ldots,\psi_{n}$ is abbreviated by $\psi_{m:n}$.
By definition, for $n < m$, $\psi_{m:n} := \top$ (i.e.\ tautology) and  $\psi_{m:m}:=\psi_{m}$.
More generally, if $b_m$ to $b_n$ are some objects (not necessarily consecutive), $\psi_{b_m : b_n} := \psi_{b_m} \! \wedge \ldots \wedge \psi_{b_n}$. 
The general hypothesis $H := (\forall x \;\, \impl_x)$ (which is equivalent to $\impl_{1:{\usize}}$) where $\impl_{b} := F_{b} \rightarrow G_{b}$. We also let: 
\begin{equation}\label{eq:q1234}
\Qff_{ b }:= \neg F_b.\neg G_b; \quad 
\Qft_{ b } := \neg F_b.G_b; \quad 
\Qtf_{ b } := \neg (\impl_b) = F_b.\neg G_b; \quad  
\Qtt_{ b } := F_b.G_b
\end{equation}
Any of $\Qff_{ b }$ to $\Qtt_{ b }$ 
defined by relation (\ref{eq:q1234}) 
is referred to as a \emph{complete description} of an object $b$.\footnote{
$\Qff$, $\Qft$, $\Qtf$ and $\Qtt$  are what Maher calls $Q_4$, $Q_3$, $Q_2$ and $Q_1$ respectively.
What we call \emph{complete description}, he calls \emph{sample proposition}
 \cite{maher99}.
}
For example if $F$ and $G$ represent ``ravenhood" and ``blackness" properties respectively, then $\Qff_{b}$ means ``$b$ is not a raven and is not black" 
and so on. $\impl_{b}$  means ``if $b$ is a raven then it is black" and $H$ is the general hypothesis that ``for all $b$, if $b$ is a raven then it is black". Clearly all complete descriptions which provide counterexample to $H$ are in the form $\Qtf_{b}$.

We define $\Delta$ as the set of all propositions $\rho$ which are in the following form (or by simplification can be converted to it): 
\begin{equation} \label{eq:artless}
\rho := \psi^{{1}}_{b_{1}} . \psi^{{2}}_{b_{2}} . \ldots \psi^{{k}}_{b_{k}} =  \bigwedge_{x = 1}^{k}  \psi^{x}_{b_{x}}
\end{equation}
where $\psi^1$ to $\psi^k$ are some predicates in $\{F$, $\neg F$, $G$, $\neg G$, $\Qff$, $\neg \Qff$, $\Qft$, $\neg \Qft$, $\Qtf$, $\impl$, $\Qtt$, $\neg \Qtt \}$
and $b_1$ to $b_{k}$ are some mutually distinct objects: $\{ b_{1}, b_{2}, \ldots b_{k} \} \subseteq U$.
Note that $\Delta$ is in fact the set of all propositions that do not link the properties of different objects together.   
We define the set of all individuals described by $\rho$ as: $\inds_{\rho} := \{ b_{1}, b_{2}, \ldots b_{k}\}$. We refer to the set of (simple) predicates involved in $\rho$ by: 
$ \preds_{\rho} := \{ \psi^{1}, \ldots , \psi^{k} \}$. 
For example, the proposition $\rho' := \Qtt_{1} \vee \Qtf_{3}$ is not in $\Delta$ but $\rho'' := \Qtt_{1} . \Qtt_{3} . \impl_{4} \in \Delta$ (assuming $\usize \geq 4$). $\inds_{\rho''} = \{1, 3, 4\}$ and  $\preds_{\rho''} = \{\Qtt, \impl \}$. 
By definition, empty (or tautologous) proposition $\top$ is a member of $\Delta$ with $\inds_{\top} = \emptyset$. 
Two subsets of $\Delta$ are defined as follows:
\begin{align*}
\delta &:= \big \{d \in \Delta : \preds_d = \{ \Qff, \Qft, \Qtt \}       \big \}\\
\Omega &:= \big \{ d \in \Delta : \inds_d = U, \preds_d = \{ \Qff, \Qft, \Qtf, \Qtt \}   \big \}
\end{align*}
Informally speaking, $\delta$ is the set of propositions that \emph{completely describe} some individuals and do not falsify $H$. 
Likewise, $\Omega$ is the set of propositions that \emph{completely describe} all objects of the universe. We refer to any member of $\Omega$ as a \emph{Complete Description Vector} (CDV). 
Note that each CDV corresponds to a unique \emph{model} or \emph{world} (up to isomorphism). In other words, every interpretation that makes a CDV (and aforementioned axiom (\ref{eq:axiom})) true, uniquely determines the value of any sentence in $L$,\footnote{
The proof is straightforward. With respect to the domain closure axiom, all quantifiers are bounded. Therefore all sentences are convertible to quantifier-free forms and consequently convertible to full disjunctive normal form (DNF) which is in fact a disjunction of some CDVs. In any world, only one CDV is true,  therefore only sentences containing that CDV (when expressed in full DNF) are true.  
} therefore we can consider them as (representatives of) different worlds.
The probability measures which we are concerned with, are 
defined over the sample space $\Omega$ 
with the power set as $\sigma$-algebra. 
No other restriction is imposed on the choice of measure 
unless it is mentioned explicitly. 
For each proposition $\rho$, let $\omega_\rho^{\Omega} := \{ o \in \Omega : o \models \rho \}$ be the set of all CDVs that entail $\rho$. 
We say that the event $\omega_\rho^{\Omega}$ \emph{corresponds} to the proposition $\rho$ (and vice versa).
For convenience' sake, except in Section \ref{sect:universe}, 
we represent the probability of events by the probability of their \emph{corresponding propositions}\footnote{
In Section \ref{sect:universe}, we simultaneously deal with more than one sample space. While w.r.t.\ different sample spaces, propositions may correspond to different events, in that section we directly represent probability events by their relevant sample space subsets.
}; formally, for propositions $\rho$, we let $pr(\rho) := pr(\omega_\rho^{\Omega})$.
According to \emph{Cournot's principle} \cite{cournot43}, 
we do not allow 1 (resp. 0) priors to the sentences that are not valid (resp. unsatisfiable). 

Objective background knowledge (or simply background knowledge) is what we are certain about and can be represented by a subset of $\Omega$. 
The more formal definition of the background setting studied throughout this paper and its corresponding restrictions are given in Section \ref{subsect:background}.

We equate ``inductive support" with ``probability increment": It is said that in the presence of background knowledge 
$D$, evidence $E$ \emph{confirms} hypothesis $H$ iff:
$$
pr(H | E . D ) > pr ( H | D)
$$
We are only interested in the case where $E$ and $D$ are consistent, $pr(E)$ and $pr(D)$ are 
positive and $E$ is not determined by $D$, i.e.\ $0<pr(E|D)<1$. 
\section{Inductive Reasoning and Nicod's Condition} \label{sect:nc}
The fundamental assumption behind \emph{inductive inference} is the so-called \emph{principle of the uniformity of nature} \cite{hume88} (or \emph{the immutability of natural processes} \cite{popper59}) based on which, uniformity and trend are more probable than diversity and anomaly a priori.
Let us assume that $\Delta$ is a set of background knowledge configurations for which we ``intuitively" expect that inductive inference holds (for more discussion refer to Section \ref{subsect:background}). Relative to pieces of information in $\Delta$, we present the following varieties of inductive inference (i.e.\ \emph{inductive rules}):\vspace{2.5mm}\\
{\bf Projectability.}
For all objects $a$ and $b$ and background knowledge $D \in \Delta$ that does not determine $\psi_{a}$ or $\psi_{b}$, based on  \cite{maher04}, one (and apparently the simplest) kind of inductive inference, namely \emph{projectability}\footnote{
According to \cite{carnap50} \emph{predictive inference} (i.e.\ inference from a sample to another sample) is the most important kind of inference and the most important special kind of it, \emph{singular predictive inference}, is inference from a sample to an individual object. Maher's \emph{projectability} is in fact a special kind of \emph{singular predictive inference}:  inference from one individual to another individual.   
}$^{,}$\footnote
{Maher's original relation does not mention 
background knowledge, and only deals with \emph{strong projectability} (which he calls \emph{absolute projectability}).
}, is defined as follows:
\begin{align}
\text{Strong projectability:} \quad \forall a , b  \in U            \quad pr(\psi_{b} | \, \psi_{a} .D) > pr(\psi_{b} | D) \notag\\
\label{eq:weak-proj} 
\text{Weak projectability \!\! (PJ):} \quad \forall a , b \in U \quad pr(\psi_{b} | \, \psi_{a} .D) \geq pr(\psi_{b} | D)
\end{align}
Projectability (relative to predicate $\psi$) can be justified as follows: The evidence $\psi_{a}$ increases the proportion of the
observed individuals that have the predicate $\psi$. Thus, according to the principle of the uniformity of nature, the estimated frequency of the predicate $\psi$ in the total population should also be increased because the uniformity between the characteristics of the sample and the total population is considered to be likely.\vspace{2.5mm}\\
{\bf Reasoning by Analogy (RA).}
The observation that two individuals have some common properties, increases the probability that their unobserved properties are also alike, because it is likely that there is a uniformity between the characteristics of unobserved properties and the observed ones. Maher has formalized one variation of \emph{reasoning by analogy} (or \emph{inference by analogy} \cite{carnap50}) as: $\forall a, b \in U \;\; pr(G_b |F_b . \Qtt_{a}) > pr(G_b | F_b)$ \cite{maher04}. We generalize the relation to cover the case where background knowledge $D \in \Delta$ (that does not determine the value of $F_a$, $G_a$ and $G_b$) is also present:
\begin{equation} 
\mbox{Reasoning\! by\! analogy \!\!\! (RA):\,} 
\forall a, b \in U \quad pr(G_b | F_b . \Qtt_{a} . D) > pr(G_b | F_b . D)\! \label{eq:analogy}
\end{equation}
{\bf Nicod's Condition (NC).}
For $U := \{ 1, \ldots, {\usize}\}$, $H := \impl_{1:{\usize}}$, all $a \in U$ and $D \in \Delta$ that does not determine the value of $\Qtt_{ a }$ or $H$, we say NC holds for $D$ iff:
\begin{equation}\label{eq:nc}
\mbox{Nicod's Condition (NC):} \hspace{0.3cm}  
pr(H| \Qtt_{ a } . D) > pr( H | D )
\end{equation}
NC is stronger than PJ or RA in the sense that it deals with the confirmation of a generalization rather than a singular prediction. In other words, NC is a form of \emph{enumerative induction} but PJ and RA are forms of \emph{singular predictive inference}.
\subsection{Restrictions on the Background Knowledge}  \label{subsect:background}
Obviously, relative to unconstrained background knowledge, no rule of induction holds in general. 
For example, in the presence of background knowledge $D' := F_{1} \rightarrow (\neg F)_{2:{\usize}}$, at least relative to evidence $F_{1}$, PJ does not hold. 
Similarly, (as \cite{maher04}, Theorem 12 formally shows), in the presence of background knowledge $D'' := \Qtt_{ 1 } \rightarrow \neg H$, NC does not hold 
(for evidence $\Qtt_{ 1 }$).

To prevent such problems, the biggest set of background configurations studied through out this paper is $\Delta$\footnote{
Note that we do not claim that no background knowledge that is not a member of $\Delta$ is not plausible. Investigation of rule of inductions relative to such knowledge, is simply beyond the scope of this paper.  
},
which
according to its definition in Section \ref{sect:notate}, is the set of all consistent propositions that can be expressed 
in the form of a conjunction of some propositions that involve
 $F$, $G$, $\Qff$, $\Qft$, $\Qtf$, $\Qtt$ or their negations.

Obviously, each member of $\Delta$ can be expressed in the form of a conjunction of some propositions each of which describes only one individual. Consequently, problematic statements that interlink properties of different individuals are not expressible.
As an example, the mentioned pathological examples $D'$ and $D''$ are not in $\Delta$.

In the case of PJ and RA, we did not find a pathological example in $\Delta$, relative to which, the rule of induction contradicts intuition. However, in the case of NC it is already claimed that relative to background knowledge $D''' := \neg F_a \in \Delta$, it is not intuitively sound to expect that the evidence $\Qff_{a}$ confirms $H$ \cite{fitelson10}.    
In Sections \ref{sect:setting1} and \ref{sect:setting2}, we will investigate the validity of NC relative to two interesting subsets of $\Delta$. 
\subsection{Restrictions on the probability measure.}
In Section \ref{sect:setting1} (Setting 1), we impose no restriction on the choice of the probability measure but in Section \ref{sect:setting2} (Setting 2), we assume that the probability measure is  \emph{exchangeable} \cite{carnap80} in a sense that probabilities are not changed by permuting individuals (i.e.\ swapping the name of objects). To introduce this restriction formally, we need the following definitions:
\begin{definition}
By the term \emph{permutation}, we always refer to a bijection from a set of all objects $U$ to itself. Throughout this paper, we denote any arbitrary permutation by $\permute$ (or $\permute'$ and $\permute''$ when we deal with more than one permutation). 
Having a proposition $\rho$, the proposition $\rho^\permute$ is obtained from $\rho$ by replacing any occurrence of any individual $b$ with $\permute(b)$.
\end{definition}
\begin{example}
If $U := \{1, 2, 3\}$, the function $\permute : U \rightarrow U$ defined by $\permute(1) = 1$; $\permute(2) = 3$ and $\permute(3) = 2$, is a permutation with a fixed point $1$. 
For short we write $\permute := \{2 / 3 ; 3 / 2\}$. If we define $\rho$ := $F_{1} \vee G_{3}$, then $\rho^\permute = F_{1} \vee G_{2}$.  
\end{example}
\begin{definition}[Exchangeability.]
The probability measure $pr$ is \emph{exchangeable} if  
for all propositions $A$ and $B$ and all permutations $\permute$, $pr(A|B) = pr(A^\permute | B^\permute)$. 
\end{definition}
\section{Validity of NC when Background Knowledge Consists of Complete Descriptions of Some Individuals (Setting 1)} \label{sect:setting1}
In Section \ref{sect:notate}, $\delta$ was defined as the set of all background knowledge that do not refute $H$ and describe some individuals completely (e.g.\ in the case of the raven paradox, members of $\delta$ represent the knowledge that we are already aware of the color and kind (i.e.\ the state of being raven) of some individuals and none of these known objects have been a non-black raven). 
Clearly, $\delta \subset \Delta$.
Theorem \ref{theo:a2-a3-nc} shows that if the chosen probability measure satisfies some conditions, then for any background knowledge $D \in \delta$, NC holds (for predicates $F$ and $G$).
On the other hand, Theorem \ref{theo:a1-nc} shows that under alternative conditions, for some $D \in \delta$, NC does not hold.
\begin{theorem} \label{theo:a2-a3-nc} If a probability measure complies with the following relation:
\begin{equation} \label{eq:bq1}
\forall B \in \Delta, \forall a, b \notin \inds_B 
\qquad pr(\Qtf_{ b } | \Qtt_{ a } . B) \leq pr(\Qtf_{ b } | B) 
\end{equation}
 then, for this measure and any $D \in \delta$ that does not determine $\Qtt_{ a }$ or $H$, NC holds, i.e.\ relation (\ref{eq:bq1}) entails: $pr(H | \Qtt_{ a }. D) > pr( H | D)$.
\end{theorem}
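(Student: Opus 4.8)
The plan is to reduce NC, via the structure of $D\in\delta$, to a monotonicity statement that follows from \req{eq:bq1} by a telescoping argument. First I would record two facts. (i) Every conjunct of $D$ is a complete description of the form $\Qff_b$, $\Qft_b$ or $\Qtt_b$, and each of these entails $\impl_b$; hence $D\models\impl_i$ for every $i\in\inds_D$. (ii) Since $D$ does not determine $\Qtt_a$, the object $a$ is not in $\inds_D$ (otherwise $D$ would fix the complete description of $a$, hence $\Qtt_a$); in particular $pr(D)>0$ and $0<pr(\Qtt_a|D)<1$ by Cournot's principle, so all the conditionings below are well defined. Now set $\bar S := U\setminus(\inds_D\cup\{a\})$ and $R := \bigwedge_{i\in\bar S}\impl_i$ (with $R:=\top$ if $\bar S=\emptyset$). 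By (i)--(ii) the propositions $H.\Qtt_a.D$ and $R.\Qtt_a.D$ correspond to the same event, and so do $H.D$ and $\impl_a.R.D$; therefore $pr(H|\Qtt_a.D)=pr(R|\Qtt_a.D)$ and $pr(H|D)=pr(\impl_a.R|D)$, and it suffices to prove $pr(R|\Qtt_a.D)>pr(\impl_a.R|D)$.

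The core step is $pr(R|\Qtt_a.D)\ge pr(R|D)$, obtained from \req{eq:bq1} by telescoping. Enumerate $\bar S=\{c_1,\dots,c_m\}$ arbitrarily and put $R_j:=\impl_{c_1}.\cdots.\impl_{c_j}$, so $R_0=\top$ and $R_m=R$. Each of $R_{j-1}.D$ and $R_{j-1}.\Qtt_a.D$ is satisfiable -- the index sets $\inds_D$, $\{c_1,\dots,c_{j-1}\}$, $\{a\}$ are pairwise disjoint, so an interpretation can be built conjunct by conjunct -- hence has positive probability by Cournot's principle, and the chain rule gives $pr(R|\Qtt_a.D)=\prod_{j=1}^m pr(\impl_{c_j}|R_{j-1}.\Qtt_a.D)$ and $pr(R|D)=\prod_{j=1}^m pr(\impl_{c_j}|R_{j-1}.D)$. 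For fixed $j$ the proposition $B:=R_{j-1}.D$ lies in $\Delta$ (a conjunction over mutually distinct objects using only predicates allowed in \req{eq:artless}, namely $\impl$ and some of $\Qff,\Qft,\Qtt$) and satisfies $a,c_j\notin\inds_B$, so \req{eq:bq1} applied with this $B$ yields $pr(\Qtf_{c_j}|R_{j-1}.\Qtt_a.D)\le pr(\Qtf_{c_j}|R_{j-1}.D)$, i.e.\ $pr(\impl_{c_j}|R_{j-1}.\Qtt_a.D)\ge pr(\impl_{c_j}|R_{j-1}.D)$ since $\impl_{c_j}=\neg\Qtf_{c_j}$. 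All factors lie in $[0,1]$, so multiplying over $j$ gives the claimed inequality.

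It remains to recover strictness, which I would get at the last conjunct rather than inside the telescoping: since $\impl_a.R\models R$ and the event $\Qtf_a.R.D$ (which equals $\neg\impl_a.R.D$) is again satisfiable by disjointness of $\{a\}$, $\bar S$, $\inds_D$, Cournot's principle forces $pr(\Qtf_a.R.D)>0$, so $pr(R.D)=pr(\impl_a.R.D)+pr(\Qtf_a.R.D)>pr(\impl_a.R.D)$ and hence $pr(R|D)>pr(\impl_a.R|D)$. Combining, $pr(H|\Qtt_a.D)=pr(R|\Qtt_a.D)\ge pr(R|D)>pr(\impl_a.R|D)=pr(H|D)$, which is NC. The one genuinely non-routine point is the telescoping: the intermediate background $B=R_{j-1}.D$ must be chosen so that it lands in $\Delta$ and excludes \emph{both} $a$ and the next object $c_j$ from $\inds_B$, which is exactly what lets the single-object hypothesis \req{eq:bq1} be applied repeatedly; everything else -- the reduction of $H$ to $R$ using $D\in\delta$, the rewriting $\impl=\neg\Qtf$, and the Cournot appeals for positivity and for strictness -- is bookkeeping.
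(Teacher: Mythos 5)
Your proposal is correct and follows essentially the same route as the paper: the reduction of $H$ to the conjunction over the undescribed objects (the paper's Lemma \ref{theo:xis}, with your strictness step playing the role of $\Xi_2>1$ there), followed by the same telescoping/chain-rule application of \req{eq:bq1} with intermediate backgrounds $\impl_{b_1:b_i}.D\in\Delta$ to get $\Xi_1\ge 1$. The only difference is bookkeeping: you handle strictness via $pr(\Qtf_a.R.D)>0$ on events rather than via the ratio $\Xi_2$, which is the same appeal to Cournot's principle.
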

\begin{example} 
If background knowledge consists of complete descriptions of some individuals, by Theorem \ref{theo:a2-a3-nc} for all pairs of predicates $F$ and $G$, the uniform measure complies with NC since regardless of the interpretation of $F$ and $G$,  for this measure, $\forall B \in \Delta$ \& $\forall a, b \notin \inds_B \quad pr (\Qtf_{ b } | \Qtt_{ a } . B) = pr(\Qtf_{ b } | B)$.
This is not surprising since using this measure, learning is impossible (see \cite{carnap50}). This means that no observation changes the probability of being $\Qtf$ for an unobserved object. Nonetheless, for this measure NC is valid because any evidence in the  form of 
$\Qff_{ a }$, $\Qft_{ a }$ or $\Qtt_{ a }$ 
confirms $H$ for the simple reason that it removes the possibility that the observed object (i.e.\ $a$) is a counterexample to $H$.  
\end{example}
\begin{example}
In Carnap's theory of inductive probability \cite{carnap80}:
\[ 
pr({\psi}_b | E ) = \frac{n_{\psi} + \lambda \cdot pr({\psi}_b )} {n + \lambda} 
\]
In the above relations, $n$ is the number of objects mentioned by evidence $E$; $n_{\psi}$ is the number of mentioned objects which satisfy predicate $\psi$, and $\lambda$ is a constant  measuring the resistance to generalization. Note that $b$ should not be mentioned by $E$, i.e.\ $b \not \in \inds_E$.  
Using this measure and choosing 
$\psi \in \{ \Qff, \Qft, \Qtf, \Qtt \}$, in the presence of background knowledge $D \in \delta$ such that $a, b \notin \inds_D$:
$pr(\Qtf_{ b } | \Qtt_{ a } . D)$ = $\frac{\lambda.
pr(\Qtf_{ b } | D)}{1 + \lambda} < pr(\Qtf_{ b } | D)$. Thus by Theorem \ref{theo:a2-a3-nc}, for the class of background knowledge in the form of conjunction of some 
$\Qff$, $\Qft$ and/or $\Qtt$ 
for distinct individuals, this measure complies with NC. This is equivalent to the setting chosen by \cite{maher99} and its corresponding results. 
\end{example}
\begin{theorem} \label{theo:a1-nc} 
If a probability measure complies with restrictions (\ref{eq:not-nc1}) and (\ref{eq:not-nc2}), 
then for this measure (and predicates $F$ and $G$) and background knowledge $D \in \delta$, NC does not hold.
\begin{align}\label{eq:not-nc1}
&\forall B \!\in\! \Delta, \forall a, b \notin \inds_B 
&&pr(\Qtf_{ b } | \Qtt_{ a } . B) > pr(\Qtf_{ b } | B) \\
\label{eq:not-nc2}
&\forall a \!\notin\! \inds_D 
&& pr(\neg \impl_a | \impl_{ {b_1} : \, {b_n} } . D) < 
pr(\Qtf_{ {b_1} } |\, \Qtt_{ a } . D) - pr(\Qtf_{ {b_1} } |\, D)
\end{align}
In the above relations $a \neq b$ and $b_1$ to $b_n$ represent an arbitrary enumeration of all individuals (except $a$) that are not mentioned by $D$ (that is, $\inds_D = U \backslash \{{b_1} \ldots {b_n}, a\}$). 

According to restriction (\ref{eq:not-nc2}), the probability that $a$ is not $\impl$ given that all other objects in the universe are $\impl$ should be less than the degree of confirmation by evidence  $\Qtt_{ a }$ of a hypothesis that an unobserved object ${b_1}$ is $\Qtf$. Note that $b_1$ can be the index of any unobserved object. 
\end{theorem}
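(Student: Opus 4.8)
The plan is to use each hypothesis once in its natural role: \req{eq:not-nc2} to absorb the ``direct'' confirmation of $H$ contributed by $a$ itself, and \req{eq:not-nc1} to show that the induced increase in the chance that some \emph{unobserved} object is $\Qtf$ outweighs it. First I would fix $a\notin\inds_D$ and let $b_1,\dots,b_n$ be the enumeration of $U\setminus(\inds_D\cup\{a\})$ used in \req{eq:not-nc2} (assume $n\ge1$; if $\inds_D=U$ there is no $a$ for which NC is even stated). Since $D\in\delta$ describes every member of $\inds_D$ by one of $\Qff,\Qft,\Qtt$, we have $D\models\impl_b$ for all $b\in\inds_D$, and also $\Qtt_a\models\impl_a$; hence, writing $E:=\impl_{b_1:b_n}$,
\[
pr(H|D)=pr(\impl_a\wedge E|D),\qquad pr(H|\Qtt_a . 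D)=pr(E|\Qtt_a . D).
\]
Every conditional probability used below is well defined and strictly between $0$ and $1$, because the conjunctions involved ($E.D$, $\Qtt_a.D$, $\Qtf_{b_1}.D$, $\impl_{b_1:b_{j-1}}.D$, etc.) are satisfiable, so Cournot's principle applies. The target is $pr(H|\Qtt_a . D)<pr(H|D)$, which refutes NC for $D$.

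Next I would invoke \req{eq:not-nc2}. Factoring $pr(H|D)=pr(E|D)\,pr(\impl_a|E.D)=pr(E|D)\bigl[1-pr(\neg\impl_a|\impl_{b_1:b_n}.D)\bigr]$ and using that for every object $\Qtf$ and $\impl$ are complementary (so the right-hand side of \req{eq:not-nc2} equals $pr(\impl_{b_1}|D)-pr(\impl_{b_1}|\Qtt_a . D)$), I obtain $pr(H|D)>pr(E|D)\bigl[1-pr(\impl_{b_1}|D)+pr(\impl_{b_1}|\Qtt_a . D)\bigr]$. Hence it is enough to prove
\[
pr(E|\Qtt_a . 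D)\;\le\;pr(E|D)\bigl[1-pr(\impl_{b_1}|D)+pr(\impl_{b_1}|\Qtt_a . D)\bigr].
\]

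Then I would prove this last inequality from \req{eq:not-nc1}. Put $w:=pr(\Qtt_a|D)$; by Bayes' rule $pr(E|\Qtt_a . D)=pr(E|D)\,pr(\Qtt_a|E.D)/w$ and $pr(\impl_{b_1}|\Qtt_a . D)=pr(\impl_{b_1}|D)\,pr(\Qtt_a|\impl_{b_1}.D)/w$, so after cancelling $pr(E|D)>0$ and multiplying through by $w$, the inequality is equivalent to
\[
pr(\Qtt_a|E.D)-pr(\impl_{b_1}|D)\,pr(\Qtt_a|\impl_{b_1}.D)\;\le\;pr(\Qtf_{b_1}|D)\,w .
\]
Two uses of \req{eq:not-nc1} settle this. (i) With $B=D$ and $b=b_1$ it gives $pr(\Qtf_{b_1}|\Qtt_a . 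D)>pr(\Qtf_{b_1}|D)$, hence, by Bayes, $pr(\Qtt_a|\Qtf_{b_1}.D)>w$; since $w$ is the weighted average of $pr(\Qtt_a|\impl_{b_1}.D)$ and $pr(\Qtt_a|\Qtf_{b_1}.D)$ with positive weights $pr(\impl_{b_1}|D),pr(\Qtf_{b_1}|D)$, this forces $pr(\Qtt_a|\impl_{b_1}.D)\le w$. (ii) Running the same argument with $B=\impl_{b_1:b_{j-1}}.D\in\Delta$ and $b=b_j$ for $j=2,\dots,n$ gives $pr(\Qtt_a|\impl_{b_1:b_j}.D)\le pr(\Qtt_a|\impl_{b_1:b_{j-1}}.D)$, whence $pr(\Qtt_a|E.D)\le pr(\Qtt_a|\impl_{b_1}.D)$. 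Combining (i) and (ii),
\[
pr(\Qtt_a|E.D)-pr(\impl_{b_1}|D)\,pr(\Qtt_a|\impl_{b_1}.D)\le pr(\Qtt_a|\impl_{b_1}.D)\bigl(1-pr(\impl_{b_1}|D)\bigr)\le w\,pr(\Qtf_{b_1}|D),
\]
which is what was wanted. Together with the previous step this yields $pr(H|\Qtt_a . D)<pr(H|D)$, and since $a\notin\inds_D$ was arbitrary and $\inds_D\neq U$, NC fails for $D$.

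I expect the third step to be the main obstacle. The conditioning in \req{eq:not-nc2} (which features $pr(\Qtf_{b_1}|\Qtt_a . D)$) does not line up with the conditioning that arises when $pr(E|\Qtt_a . D)$ is peeled apart object by object (there $\Qtf_{b_1}$ sits under the other $\impl_{b_j}$'s), so one cannot merely chain the factors. Passing to the Bayes-duals $pr(\Qtt_a|\cdot)$ reorganises the bookkeeping: \req{eq:not-nc1} says that observing $\Qtt$ anywhere raises $pr(\Qtf_b)$ elsewhere, equivalently observing $\Qtf_b$ raises $pr(\Qtt_a)$, equivalently each incremental conditioning on an $\impl_{b_j}$ lowers $pr(\Qtt_a)$; it is exactly this last monotonicity -- which requires the weighted-average step, and that in turn the Cournot positivity of the weights -- that must be handled with care. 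The rest is routine algebra.
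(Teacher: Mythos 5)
Your proof is correct and is essentially the paper's own argument: you use the same decomposition of $H$ into the unobserved block $\impl_{b_1:b_n}$ and the factor $\impl_a$ (the paper's Lemma on $\Xi_1\cdot\Xi_2$), the same iterated application of \req{eq:not-nc1} with $B=\impl_{b_1:b_{j-1}}.D$ across the unobserved objects, and \req{eq:not-nc2} in exactly the same role. The only difference is bookkeeping: you pass to the Bayes-dual quantities $pr(\Qtt_a|\cdot)$ and a total-probability/weighted-average step, whereas the paper keeps the ratios $\Xi_1<p/q$ and finishes with the elementary algebra $p/q<1-q+p$ (with $p:=pr(\impl_{b_1}|\Qtt_a.D)$, $q:=pr(\impl_{b_1}|D)$), which is the same content up to Bayes' rule.
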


\begin{example} 
\cite{maher04} proposes a measure based on the formula:
\begin{align*}
pr(\Qtf_{ b } | E) = pr(I) \cdot \frac{n_F+ \lambda \cdot pr(F_b)}{n + \lambda} 
\cdot \frac{n_{\overline{G}} + \lambda \cdot pr(\neg G_b)}{n+\lambda}
+ pr(\neg I) \cdot \frac{ n_{\Qtf} + \lambda \cdot pr(\Qtf_{ b } )} {n + \lambda}
\end{align*}
In the above relation, $n$ is the number of objects mentioned by $E$; 
$n_F$ and $n_{\overline{G}}$ denote the number of mentioned objects which are $F$ and $\neg G$ respectively.
\\In this expression, the prior probability of $\Qtf$ i.e.\ $pr(\Qtf_{ b })$ has to be equal to $pr(F_b)\cdot pr(\neg G_b)$ 
and $pr(I)$ and $\lambda$ are parameters. 
Maher proposes a counterexample for NC where 
$\usize=2$ (Let $U := \{ a, b \}$), $\lambda = 2$, $pr(I) = 0.5$ and 
prior probabilities are $pr(F_b)=0.001$ and $pr(G_b)=0.1$. This conclusion can be confirmed independently by Theorem \ref{theo:a1-nc} as follows: 
\\1.~For these parameters, the only member of $\Delta$ that does not contain $a$ and $b$, is $B=\emptyset$ for which relation (\ref{eq:not-nc1}) holds if $pr(F_b)<0.25$.
\\2.~By assuming: $\big(\,  pr(\Qtf_{ b } |\, \Qtt_{ a } ) - pr(\Qtf_{ b } ) \big) \geq 0.06$ and $pr(G_b) = 0.1$, a cumbersome calculation shows that (for empty background knowledge) relation (\ref{eq:not-nc2}) holds if: $pr(F_b)<0.0983$ which covers Maher's proposed configuration.  
\end{example}

Comparing Theorems \ref{theo:a2-a3-nc} and \ref{theo:a1-nc} shows that creating a probability measure that contradicts NC (w.r.t.\ $D \in \delta$) is harder than making a measure that  complies with it (for the same background setting) because the former measure has to satisfy more constraints.  
The reason is that even if in a measure, evidence $E:= \Qtt_{ a }$ does not affect the probability of $\Qtf$ for unobserved objects  (as in the case of the uniform distribution), every hypothesis that is not refuted by $E$ (including $H$) is confirmed by it since the observation has reduced the number of possible counterexamples by one. 
On the other hand, in the case of a measure that does not comply with NC, 
not only should $E$ confirm $\Qtf$ for unobserved objects, but the effect of this confirmation should be so substantial that it overwhelms the effect of the elimination of one counterexample to $H$.\footnote{
To see how the effect of elimination of one possible counterexample leads to relation~(\ref{eq:not-nc2}), refer to the proof of Theorem \ref{theo:a1-nc} in Section \ref{sect:proofs}.
}
However, in the case where the size of the universe is large, the latter effect should be minute. This is reflected in relation (\ref{eq:not-nc2}) as follows: 
If $\usize$ is large, then at least for measures that comply with projectability, $pr(\neg \impl_a | \impl_{ {b_1} : \, {b_n} } . D) \approx 0$, because if it is known that all objects in the universe except $a$ are $\impl$, then it should be quite probable that $a$ is $\impl$ too.
Therefore, even if the degree of confirmation of $\Qtf_{b_1}$ by evidence $\Qtt_{ a }$  (i.e.$\ pr(\Qtf_{ {b_1} } |\, \Qtt_{ a } . D) - pr(\Qtf_{ {b_1} } |\, D)$) is very small\footnote{
Note that by relation (\ref{eq:not-nc1}), this degree of confirmation is positive.
} , relation (\ref{eq:not-nc2}) holds.\footnote{
Here is another justification for the above argument: By definition, a probability measure defined over a first-order language with an infinite domain is \emph{Gaifman} iff the probability of the generalization of any predicate (in our case, $\impl$) is equal to the probability of the conjunction of some positive instances when their number tends to infinity \cite{gaifman82} or alternatively, 
$pr(\forall x \; \psi_x | \psi_{1:n}) \xrightarrow{n \rightarrow \infty} 1$ (see \cite{hutter13} thm. 27)
and consequently $pr(\neg \psi_a | \psi_{1:n}) \xrightarrow{n \rightarrow \infty} 0$.
Since the Gaifman condition is what we intuitively expect from \emph{generalization} over an infinite universe, it can be considered as a very simple and intuitive rule of induction. In our case, if the universe was infinite and the measure was assumed to be Gaifman, inequality (\ref{eq:not-nc2}) would always hold. However we have assumed that the universe is finite therefore we cannot remove this inequality. What we can say is that for very large domains, relation (\ref{eq:not-nc2}) is a very weak condition.  
}
To summarize:
\begin{itemize}
\item \emph{If regardless of the choice of background knowledge, an observation $F_a . G_a$ does not confirm that any unobserved individual is an $F$ that is not $G$, then relative to any background knowledge in $\delta$, NC holds.} 
\item \emph{If regardless of the choice of background knowledge, an observation $F_a . G_a$ confirms that any unobserved individual is an $F$ that is not $G$, and on the other hand, the effect of elimination of one counterexample via an observation is negligible, then relative to any background knowledge in $\delta$, NC does not hold.}
\end{itemize}
The above statements delegate the assessment of NC (a form of enumerative induction) to the assessment of expressions which deal with singular predictions. Hence, a new perspective on the nature of NC is provided: Should regardless of the interpretation of $F$ and $G$, (the observation of) an $F$ that is $G$ disconfirm that any unobserved object is $F$ but not $G$?  

For example, relative to background knowledge and a probability measure that reflect our actual configuration of knowledge, should the observation of an $F$=``walnut'' that is $G$=``round" decrease the probability that any unobserved object is a walnut but not round? Indeed yes; therefore by Theorem \ref{theo:a2-a3-nc}, in this case and for these predicates, NC holds.
Should the observation of an $F$=``round", G=``walnut" decrease the probability that any unobserved object is ``round"  but not a ``walnut"? Arguably not.\\
Should the observation of an $F$=``ogre" which is $G$=``old" decrease the probability that we might encounter an ogre which is not old? Definitely not! \footnote{
This confirmation asymmetry may be due to possible asymmetry in background knowledge and/or prior possibilities of different predicates. For example according to our actual configuration of knowledge, the prior probability of ``being an ogre" (for any individual) is quite low. This is a key point in the existing arguments: \emph{Good's baby}  \cite{good68} (that assigns low probability to \emph{ravenhood}) and Maher's \emph{unicorn} \cite{maher04}. But unlike our discussion, these arguments do not reduce the assessment of NC to a singular prediction.
}
Therefore, in this case, we are intuitively using a probability measure that satisfies the condition (\ref{eq:not-nc1}). 
Now assume that we have seen all objects of the world except one. 
It has happened that any observed object that has been an ogre has been old as well. 
Is it reasonable to believe that it is improbable that the last unobserved object is a young ogre? 
If yes, then our intuitive measure also complies with restriction (\ref{eq:not-nc2}), hence by Theorem \ref{theo:a1-nc}, by this denotation for $F$ and $G$, plausible probability measures do not comply with Nicod's condition.
\section{NC vs. PJ when the Number of Objects having One Predicate is Known (Setting 2)} \label{sect:setting2}

This section studies NC in the presence of a completely different background setting where
we know that exactly $k$ individuals are $F$ (e.g.\ ravens) and the rest are not $F$, but we do not know anything about the other property (e.g.\ their color).

First we focus on a simpler setting where we know exactly which objects are $F$ and which objects are not $F$ (e.g.\ we know that objects $1$ to $k$ are $F$ and the rest of the universe i.e.\ objects ${k+1}$ to ${\usize}$ are not $F$).

\begin{theorem} \label{theo:wason}
For $U := \{1, 2, \ldots {\usize}\}$ and $D := F_{1 : k}. (\neg F)_{{k+1}:{\usize}}$, weak projectability (PJ) entails:
\begin{align}
pr(H \,| \, G_{k} \,.\, D) &> pr(H| D)                   \label{eq:7.1}\\
pr(H \,| \, G_{{\usize}} \,.\, D) &\geq pr(H| D)        \label{eq:7.2}\\
pr(H \,| \, \neg G_{{\usize}} \,.\, D) &\leq pr(H| D) \label{eq:7.3}
\end{align}
and reasoning by analogy (RA) entails:\footnote{
Therefore, in this setting both PJ and RA suggest that $H:=\forall b \; \impl_{b}$ is confirmed by evidence $G_{k}$ but RA provides no answer whether or not evidence $G_{N}$ should confirm (or disconfirm) $H$. The reason is that (as the proof of the theorem which is provided in Section \ref{sect:proofs} shows) in the presence of background knowledge $F_{1 : k}. (\neg F)_{{k+1}:{\usize}}$, validity of $H$ only depends on property $G$ of objects $1$ to $k$ that do not have a common property with object $N$.    
}
\begin{align}
pr(H \,| \, G_{k} \,.\, D) &> pr(H| D)                   \label{eq:ra.7.1}
\end{align}
\end{theorem}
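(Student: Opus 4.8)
\textbf{Proof proposal for Theorem \ref{theo:wason}.}

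The plan is to exploit the crucial simplification that under the background knowledge $D := F_{1:k}.(\neg F)_{k+1:\usize}$, the hypothesis $H = \impl_{1:\usize}$ collapses: for $b \in \{1,\ldots,k\}$ we have $\impl_b = (F_b \to G_b)$, and since $D$ entails $F_b$, this is equivalent to $G_b$; for $b \in \{k+1,\ldots,\usize\}$, $D$ entails $\neg F_b$, so $\impl_b$ is a tautology given $D$. Hence, modulo $D$, $H \equiv G_{1:k}$. So every probability in the statement can be rewritten with $H$ replaced by $G_{1:k}$, and the whole theorem reduces to statements about how the conjunction $G_{1:k}$ is affected by observing $G_k$, $G_\usize$, or $\neg G_\usize$ on top of $D$. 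I would carry out this reduction first and state it as a lemma, since all four displayed inequalities hinge on it.

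Next I would derive \eqref{eq:7.1}. Writing $H \equiv G_{1:k}$ modulo $D$, we have $pr(H \mid G_k . D) = pr(G_{1:k-1} \mid G_k . D)$ (since $G_k$ is already present) and $pr(H \mid D) = pr(G_{1:k-1}.G_k \mid D) = pr(G_{1:k-1}\mid G_k.D)\cdot pr(G_k\mid D)$. Because $0 < pr(G_k\mid D) < 1$ (the nondegeneracy hypothesis built into the definition of confirmation — $G_k$ is not determined by $D$), the factor $pr(G_k\mid D)$ is strictly less than $1$, which already gives the strict inequality $pr(H\mid G_k.D) \ge pr(H\mid D)$ — but I still need to rule out $pr(G_{1:k-1}\mid G_k.D)=0$. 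Actually the cleaner route: decompose $pr(G_{1:k-1}\mid G_k.D)$ against $pr(G_{1:k-1}\mid D)$ using PJ applied successively to the predicate $\psi = G$ and the objects among $1,\ldots,k-1$ versus the object $k$; PJ gives $pr(G_b \mid G_k.D') \ge pr(G_b\mid D')$ for suitable $D' \in \Delta$ obtained by conjoining $D$ with some of the $G_j$'s (one must check these augmented $D'$ are still in $\Delta$ — they are, being conjunctions of single-object predicates). Chaining these monotonicities yields $pr(G_{1:k-1}\mid G_k.D) \ge pr(G_{1:k-1}\mid D) \ge pr(H\mid D)/pr(G_k\mid D) > pr(H\mid D)$, where the last strict step uses $pr(G_k\mid D)<1$ together with $pr(H\mid D)>0$ (guaranteed by Cournot's principle, since $H$ is satisfiable given $D$). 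This simultaneously proves \eqref{eq:ra.7.1}: RA with $a:=k$, $b$ ranging over $1,\ldots,k-1$, gives $pr(G_b \mid F_b.\Qtt_k.D') > pr(G_b\mid F_b.D')$, and since $D$ entails $F_b$ and $F_k$, $\Qtt_k = F_k.G_k$ reduces to $G_k$ modulo $D$, so this is exactly the strict projectability-type increment needed; chaining again gives the strict inequality.

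For \eqref{eq:7.2} and \eqref{eq:7.3} the point is that object $\usize$ lies in the $\neg F$ block, so $G_\usize$ (or $\neg G_\usize$) is logically irrelevant to $G_{1:k}$ in the sense of the \emph{language} but \emph{not} a priori independent of it under an arbitrary measure — this is where PJ does real work. For \eqref{eq:7.2}: $pr(H\mid G_\usize.D) = pr(G_{1:k}\mid G_\usize.D)$, and I apply PJ for predicate $G$ with $a := \usize$, $b$ ranging over $1,\ldots,k$, chaining over the augmented backgrounds $D.G_{1:j}$ (all in $\Delta$) to get $pr(G_{1:k}\mid G_\usize.D) \ge pr(G_{1:k}\mid D) = pr(H\mid D)$. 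For \eqref{eq:7.3} I need the reverse inequality with $\neg G_\usize$; here I would use PJ applied to the predicate $\neg G$ rather than $G$ — PJ quantifies over all $a,b$ and all $\psi$, so $pr(\neg G_b \mid \neg G_\usize.D')\ge pr(\neg G_b\mid D')$ is available, and combine with the law of total probability $pr(H\mid D) = pr(H\mid G_\usize.D)\,pr(G_\usize\mid D) + pr(H\mid \neg G_\usize.D)\,pr(\neg G_\usize\mid D)$ to squeeze $pr(H\mid \neg G_\usize.D)$ between, concluding $pr(H\mid\neg G_\usize.D)\le pr(H\mid D)$. (The total-probability identity plus \eqref{eq:7.2} alone actually already forces \eqref{eq:7.3}, which is the slickest finish.)

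The main obstacle I anticipate is bookkeeping, not depth: I must be careful that at each step where PJ or RA is invoked, the augmented background $D' := D . (\text{some conjunction of single-object literals})$ genuinely lies in $\Delta$ and genuinely ``does not determine $\psi_a$ or $\psi_b$'' for the objects in question — the latter can fail near the end of a chain (e.g.\ once $G_{1:k-1}$ has been conjoined in, is $G_k$ still undetermined?), so the order of the chaining and the nondegeneracy hypotheses $0 < pr(\cdot\mid\cdot) < 1$ must be marshalled exactly. A clean way to sidestep some of this is to prove the reduction lemma $H \equiv_D G_{1:k}$ once, reduce everything to monotonicity of $pr(G_{1:k}\mid\cdot)$ under conditioning on one more $G$-literal, and then invoke \eqref{eq:7.2}'s argument and the total-probability identity to get \eqref{eq:7.3} for free, minimizing the number of delicate PJ-applications.
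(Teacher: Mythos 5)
Your proposal is correct and its skeleton coincides with the paper's proof: your reduction lemma ($H$ equivalent to $G_{1:k}$ given $D$) is exactly Lemma~\ref{lem:2}; your chained applications of weak PJ over augmented backgrounds $G_{1:j}.D \in \Delta$ are precisely what the paper packages as the ``Group PJ'' Lemma~\ref{theo:group-proj} (proved there by induction); and your RA chain for (\ref{eq:ra.7.1}) --- rewriting $\Qtt_{k}$ as $G_{k}$ modulo $D \vdash F_i . F_k$ inside a chain-rule product, then restoring strictness with Cournot --- is the paper's step (IV) essentially verbatim. Two genuine differences, both legitimate: for (\ref{eq:7.3}) the paper proves and invokes a ``Negative Group PJ'' (PJ applied to $\neg G$, then chained), whereas your finish via the total-probability identity together with (\ref{eq:7.2}) and $0 < pr(G_{\usize}|D) < 1$ is a slicker shortcut that avoids a second chaining lemma; and your remark that (\ref{eq:7.1}) already follows from Cournot's principle alone, via $pr(H|G_{k}.D) = pr(H|D)/pr(G_{k}|D)$, is sound --- the paper instead routes through Group PJ and uses Cournot only for the strict step. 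One blemish to repair in the write-up: in your ``cleaner route'' chain for (\ref{eq:7.1}), the middle inequality $pr(G_{1:{k-1}}|D) \geq pr(H|D)/pr(G_{k}|D)$ is oriented the wrong way, since $pr(H|D)/pr(G_{k}|D)$ is literally equal to $pr(G_{1:{k-1}}|G_{k}.D)$ and PJ gives the reverse comparison; this is harmless --- the equality of the two endpoints plus Cournot's principle (to exclude $pr(G_{1:{k-1}}|G_{k}.D)=0$ and $pr(G_{k}|D)=1$) already delivers the strict conclusion, i.e.\ your first route suffices --- but the chain as written should be corrected or dropped.
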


Next, we show that these results are valid in the general setting where the background knowledge is such that we only know the exact number of objects being $F$ but we do not know their names. In other words, we know that exactly one  combination of $k$ out of $N$ objects of the universe are $F$ but we do not know which combination. But before that, we should formalize such knowledge in the form of an event (i.e.\ a subset of the sample space).   

\begin{definition}
$\mathfrak{C}_{U,k} := \{ C : C \subseteq U  , |C|=k \}$ is defined as the set of all (distinct) subsets of $U$ which contain exactly $k$ individuals. 
 Obviously, the cardinality of $\mathfrak{C}_{U,k}$ is ${\usize\choose{k}}$.
\end{definition}

\begin{example} \label{ex:pi}
Given $U := \{1, 2, 3, 4 \}$, $\mathfrak{C}_{U,2}$ = $\big\{ \{1, 2\}$, $\{1, 3\}$, $\{1, 4\}$, $\{2, 3\}$, $\{2, 4\}$, $\{3, 4\} \big\}$.
\end{example}
\begin{definition}
For $1 \leq k \leq \usize$, ``Exactly $k$ objects of the universe $U$ are $F$'' is formally defined as follows: 
\begin{equation} \label{eq:exact_def}
\textsc{Exact}(k, U, F) := 
\bigvee_{C \in \mathfrak{C}_{U,k}}
\big(
\bigwedge_{b' \in C} \! F_{b'} \;.
\bigwedge_{b'' \not\in C} \!\! \neg F_{b''}
\big)
\end{equation}

\end{definition}

\begin{example} \label{ex:exact} 
In the previous example, $\textsc{Exact}(2, U, F) = (F_{1} . F_{2} . \neg F_{3} . \neg F_{4})$ $\vee$
$(F_{1} . F_{3} . \neg F_{2} . \neg F_{4})$ $\vee$
$(F_{1} . F_{4} . \neg F_{2} . \neg F_{3})$ $\vee$
$(F_{2} . F_{3} . \neg F_{1} . \neg F_{4})$ $\vee$\\
$(F_{2} . F_{4} . \neg F_{1} . \neg F_{3})$ $\vee$
$(F_{3} . F_{4} . \neg F_{3} . \neg F_{4})$.
\end{example}
By comparing definition (\ref{eq:exact_def}) with the definition of $\Delta$, it becomes clear that for $1<k<\usize$, $\textsc{Exact}(k, U, F) \not \in \Delta$, therefore we do not expect that in the presence of such background knowledge, rules of induction hold in general and they actually don't. 
For instance, knowing that exactly $k$ objects are $F$, the evidence that a particular object is $F$, confirms that any other object is not $F$,\footnote{
Suppose that you are in a camp populated by 100 captives, and it is known that 10 of them will be chosen randomly to be executed; Whenever someone except you is chosen, it is reasonable to be  more optimist about your fate, for the simple reason that $\frac{9}{99} < \frac{10}{100}$.
} which contradicts PJ:
\begin{equation*}
\text{(intuitively):} \;\; \forall b \neq a \in U \;\;  pr \big(F_b | F_a . \textsc{Exact}(k, U, F) \big) < pr \big( F_b | \textsc{Exact}(k, U, F) \big) 
\end{equation*}
 
However, the following theorem shows that for the hypothesis that we are interested in i.e.\ $H:=\forall b \; \impl_{b}$, the background knowledge $\textsc{Exact}(k, U, F)$
is equivalent to $F_{1 : k} .\neg  F_{{k +1} : {\usize}}$ which is a member of $\Delta$. Therefore, in the case of the raven paradox and background knowledge $\textsc{Exact}(k, U, F)$, the rules of induction (that are assumed to hold relative to background knowledge in $\Delta$) should still hold.    
\begin{theorem} \label{theo:exact2list}
If $U = \{ 1, \ldots, {\usize} \}$ and $a$ is an arbitrary member of $U$ and assuming that a probability measure $pr$ is exchangeable:
\begin{align}
pr \big( H | \, \textsc{Exact}(k, U, F) \big)                                   &= pr \big(H | \, F_{1 : k} .\neg  F_{{k +1} : {\usize}} \big) \label{eq:exact2list}\\
pr \big( H | \, \textsc{Exact}(k, U, F) .         F_a.         G_a \big) &= pr \big(H | \, F_{1 : k} .\neg  F_{{k +1} : {\usize}} . G_{k} \big) \label{eq:exact2list_fa_ga}\\
pr \big( H | \, \textsc{Exact}(k, U, F) . \neg F_a.         G_a \big) &= pr \big(H | \, F_{1 : k} .\neg  F_{{k +1} : {\usize}} . G_{{\usize}} \big) \label{eq:exact2list_fa_notga}\\
pr \big( H | \, \textsc{Exact}(k, U, F) . \neg F_a. \neg G_a \big) &= pr \big(H | \, F_{1 : k} .\neg  F_{{k +1} : {\usize}} . \neg G_{{\usize}} \label{eq:exact2list_notfa_notga} \big) 
\end{align}
\end{theorem}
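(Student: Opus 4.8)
The plan is to exploit that $\textsc{Exact}(k,U,F)$ is a \emph{disjoint} disjunction of ``renamed'' copies of the background sentence $D_0 := F_{1 : k} . \neg F_{{k+1} : {\usize}}$, together with the permutation‑invariance of $H$. For a $k$‑element $C \subseteq U$ put $D_C := \bigwedge_{b' \in C} F_{b'} . \bigwedge_{b'' \notin C} \neg F_{b''}$, so that $\textsc{Exact}(k,U,F) = \bigvee_{C \in \mathfrak{C}_{U,k}} D_C$ and $D_0 = D_{\{1,\ldots,k\}}$; the disjuncts $D_C$ are pairwise inconsistent, since in any world the extension of $F$ determines at most one $C$. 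From the definition of $\rho^{\permute}$ one reads off $(D_C)^{\permute} = D_{\permute(C)}$, $(G_a)^{\permute} = G_{\permute(a)}$ and $(\neg G_a)^{\permute} = \neg G_{\permute(a)}$ for any permutation $\permute$. Since $H$ is logically equivalent to $\bigwedge_{b \in U}(F_b \rightarrow G_b)$, any permutation merely reorders this conjunction, so $H^{\permute}$ is logically equivalent to $H$ and therefore $pr(H^{\permute} \,|\, A) = pr(H \,|\, A)$ for every proposition $A$. We use throughout that, by Cournot's principle, every conditioning event below (being satisfiable) has positive probability, so all conditional probabilities are defined; this needs $1 \le k \le {\usize}$ for \req{eq:exact2list} and \req{eq:exact2list_fa_ga}, and additionally $k < {\usize}$ for \req{eq:exact2list_fa_notga} and \req{eq:exact2list_notfa_notga}.

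\textbf{The reduction step.} Fix a proposition $E$ and a permutation $\permute$ with $\permute(C) = \{1,\ldots,k\}$. Exchangeability applied to $A := H$, $B := D_C . E$, together with $(D_C . E)^{\permute} = D_0 . E^{\permute}$ and $H^{\permute}\equiv H$, gives $pr(H \,|\, D_C . E) = pr(H^{\permute} \,|\, (D_C . E)^{\permute}) = pr(H \,|\, D_0 . E^{\permute})$. Now, in each of the four claims the conditioning event is, after discarding the inconsistent disjuncts of $\textsc{Exact}(k,U,F)$, a disjoint disjunction $\bigvee_{C}(D_C . E)$ over a family of $k$‑sets, with $E$ equal to $\top$, $G_a$, $G_a$, $\neg G_a$ respectively; and for every $C$ in the relevant family we will pick $\permute_C$ with $\permute_C(C) = \{1,\ldots,k\}$ carrying $E$ to one and the same proposition $E^{\ast}$. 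Granting that, the reduction step yields $pr(H \,|\, D_C . E) = pr(H \,|\, D_0 . E^{\ast})$ for every such $C$, whence
\[
pr\big(H \,|\, \textstyle\bigvee_{C}(D_C . E)\big)
= \frac{\sum_{C} pr(H \,|\, D_C . E)\, pr(D_C . E)}{\sum_{C} pr(D_C . E)}
= pr(H \,|\, D_0 . E^{\ast}),
\]
a weighted average of equal numbers. Since $D_0 . E^{\ast}$ is in each case exactly the right‑hand side of the corresponding identity (with $D_0 . \top \equiv D_0$), this finishes the proof once the permutations are exhibited.

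\textbf{Choosing the permutations.} For \req{eq:exact2list}: $E = E^{\ast} = \top$, the family is all of $\mathfrak{C}_{U,k}$, and any $\permute_C$ with $\permute_C(C) = \{1,\ldots,k\}$ works. For \req{eq:exact2list_fa_ga}: conditioning on $F_a$ removes every $D_C$ with $a \notin C$, so the family is $\{C : a \in C\}$; take $E = G_a$ and choose $\permute_C$ with $\permute_C(C) = \{1,\ldots,k\}$ \emph{and} $\permute_C(a) = k$ — possible since $a \in C$ and $|C| = k$ — giving $E^{\ast} = G_k$. For \req{eq:exact2list_fa_notga}: conditioning on $\neg F_a$ leaves the family $\{C : a \notin C\}$; take $E = G_a$ and choose $\permute_C$ with $\permute_C(C) = \{1,\ldots,k\}$ and $\permute_C(a) = {\usize}$ — possible since $a$ lies in the $({\usize}-k)$‑element set $U \setminus C$, which $\permute_C$ maps onto $\{k+1,\ldots,{\usize}\}$ — giving $E^{\ast} = G_{{\usize}}$. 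For \req{eq:exact2list_notfa_notga}: the same family and the same $\permute_C$ as in the previous case, but with $E = \neg G_a$, so $E^{\ast} = \neg G_{{\usize}}$.

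\textbf{Main obstacle.} The real content is the permutation‑invariance of $H$: it is what lets an arbitrary $D_C$ — possibly already refined by a complete description of $a$ — be transported to $D_0$ refined by the description of a \emph{fixed} object ($k$ or ${\usize}$) without changing the conditional probability of $H$; for a hypothesis not invariant under renaming this step would collapse and the reduction to the ``named'' background $F_{1:k}.\neg F_{{k+1}:{\usize}}$ would fail. The remaining work is routine; the only delicate point is the purely combinatorial check that the required $\permute_C$ with the extra constraint on $\permute_C(a)$ exist, which fails only in the boundary cases already excluded.
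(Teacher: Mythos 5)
Your proposal is correct and follows essentially the same route as the paper's proof: decompose $\textsc{Exact}(k,U,F)$ into the pairwise inconsistent ``named'' propositions $Z_{(C,F)}$ (your $D_C$), and use exchangeability together with the permutation-invariance of $H$ and $Z_{(C,F)}^{\permute}=Z_{(C^{\permute},F)}$ to transport every disjunct to the canonical background $F_{1:k}.\neg F_{k+1:\usize}$. The only differences are presentational: you average the (equal) conditional probabilities over the disjoint disjuncts instead of applying Bayes' rule and cancelling the ${\usize-1\choose k-1}$ equal terms in numerator and denominator, and you handle all four identities explicitly by constraining $\permute_C(a)$ (also noting the boundary condition $k<\usize$ needed for \req{eq:exact2list_fa_notga} and \req{eq:exact2list_notfa_notga}), whereas the paper first swaps $a$ with $1$ and proves only \req{eq:exact2list_fa_ga} in detail.
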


The formal proof of this theorem is presented in Section \ref{subsect:premute_proof}, but the following simple example shows the main idea behind the general proof.

\begin{example}
Having $U:=\{1, 2, 3\}$, we show that:\\ 
$pr \big(H | \, \textsc{Exact}(2, U, F) . F_{3} . G_{3} \big)$ $=$ $pr \big(H | F_{1} . F_{2} . \neg  F_{3} . G_{2} \big) $
 (that is relation (\ref{eq:exact2list_fa_ga}) for $a := 3$ and $k := 2$) as follows:
\begin{align*}
&pr \big(\textsc{Exact}(2, U, F) . F_{3}.  G_{3} | H \big)  \\
&= 
pr \big( (
F_{1} . F_{2} . \neg F_{3} \vee 
F_{1} . F_{3} . \neg F_{2} \vee 
F_{2} . F_{3} . \neg F_{1})
. (F_{3} . G_{3}) | \impl_{1} . \impl_{2} . \impl_{3} \big), \text{by def.} \\
&=
pr(F_{1} . F_{3} . \neg F_{2} . G_{3} \vee 
F_{2} . F_{3} . \neg F_{1} . G_{3} | \, \impl_{1} . \impl_{2} . \impl_{3}), \text{by simplification}\\
&=
pr( F_{1} . F_{3} . \neg F_{2} . G_{3} | \, \impl_{1} . \impl_{2} . \impl_{3}) +
pr( F_{2} . F_{3} . \neg F_{1} . G_{3} | \, \impl_{1} . \impl_{2} . \impl_{3}), \\ 
&\hspace{1cm}\text{by $\sigma$-additivity of disjoint events (3rd Kolmogorov probability axiom)} \\
&=
pr( F_{1} . F_{2} . \neg F_{3} . G_{2} | \, \impl_{1} . \impl_{3} . \impl_{2}) +
pr( F_{1} . F_{2} . \neg F_{3} . G_{2} | \, \impl_{3} . \impl_{1} . \impl_{2}), \\
&\hspace{1cm}\text{by exchangeability assumption, using premutation $\permute' := \{3/2; 2/3\}$}\\
&\hspace{1cm} \text{on the first term and $\permute'' := \{3/2; 1/3; 2/1\}$ on the second term} \\
&=
2 \cdot pr( F_{1} . F_{2} . \neg F_{3} . G_{2} | H)
\end{align*}
Similarly it can easily be shown that:\\ $pr \big( \textsc{Exact}(2, U, F) . F_{3}.  G_{3} \big) = 2 \cdot pr \big( F_{1} . F_{2} . \neg F_{3} . G_{2} \big)$.
Therefore by Bayes rule:
\begin{multline*}
pr \big( H | \textsc{Exact}(2, U, F) . F_{3} . G_{3} \big) = \frac{pr\big( H \big) \cdot pr \big( \textsc{Exact}(2, U, F)|H \big)}{pr \big( \textsc{Exact}(2, U, F) \big)}\\
= \frac{2 \cdot pr(H)  \cdot pr( F_{1} . F_{2} . \neg F_{3} . G_{2} | H)}{2 \cdot pr( F_{1} . F_{2} . \neg F_{3} . G_{2})} = pr( H | F_{1} . F_{2} . \neg F_{3} . G_{2})
\end{multline*}
which is what we wanted to show by this example. 
\end{example} 

Theorems \ref{theo:wason} and \ref{theo:exact2list} directly entail the main theorem of this section:
\begin{theorem} \label{theo:7.4} If $\textsc{Exact}(k, U, F) :=$ ``exactly $k$ objects (of the universe $U$) are $F$" and $a \in U$ is an object, and the probability measure $pr$ is exchangeable, weak projectability (PJ) entails:
\begin{align}
pr \big(H \,|\, \textsc{Exact}(k, U, F) .         F_a.         G_a \big)  &>      pr \big( H \,|\, \textsc{Exact}(k, U, F) \big)  \label{eq:77.1} \\
pr \big(H \,|\, \textsc{Exact}(k, U, F) . \neg F_a.         G_a \big)  &\geq pr \big( H \,|\, \textsc{Exact}(k, U, F) \big)  \label{eq:77.2} \\
pr \big(H \,|\, \textsc{Exact}(k, U, F) . \neg F_a. \neg G_a \big) &\leq   pr \big( H \,|\, \textsc{Exact}(k, U, F) \big)  \label{eq:77.3}
\end{align}
and reasoning by analogy (RA) assumption entails:
\begin{align} 
pr \big(H \,|\, \textsc{Exact}(k, U, F) .         F_a.         G_a \big)  &>      pr \big( H \,|\, \textsc{Exact}(k, U, F) \big)  \label{eq:ra.77.1} 
\end{align}
\end{theorem}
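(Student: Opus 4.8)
The plan is to obtain Theorem~\ref{theo:7.4} by directly composing Theorem~\ref{theo:exact2list} with Theorem~\ref{theo:wason}; as the remark preceding the statement already indicates, no fresh probabilistic estimate is required, only a careful alignment of the hypotheses of the two cited results. Throughout, set $D := F_{1:k} . \neg F_{{k+1}:{\usize}}$, which is precisely the background knowledge figuring in Theorem~\ref{theo:wason}, and observe that $D \in \Delta$.

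First I would use the four identities of Theorem~\ref{theo:exact2list} --- whose only standing hypothesis is exchangeability of $pr$, which we are granted --- to rewrite every probability appearing in \req{eq:77.1}--\req{eq:ra.77.1} in terms of $D$. Explicitly: \req{eq:exact2list} replaces $pr\big(H \mid \textsc{Exact}(k,U,F)\big)$ by $pr(H \mid D)$; \req{eq:exact2list_fa_ga} replaces $pr\big(H \mid \textsc{Exact}(k,U,F) . F_a . G_a\big)$ by $pr(H \mid D . G_{k})$; \req{eq:exact2list_fa_notga} replaces $pr\big(H \mid \textsc{Exact}(k,U,F) . \neg F_a . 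G_a\big)$ by $pr(H \mid D . G_{{\usize}})$; and \req{eq:exact2list_notfa_notga} replaces $pr\big(H \mid \textsc{Exact}(k,U,F) . \neg F_a . \neg G_a\big)$ by $pr(H \mid D . \neg G_{{\usize}})$. After these substitutions the three PJ-claims \req{eq:77.1}, \req{eq:77.2}, \req{eq:77.3} coincide verbatim with \req{eq:7.1}, \req{eq:7.2}, \req{eq:7.3} of Theorem~\ref{theo:wason}, and the RA-claim \req{eq:ra.77.1} coincides with \req{eq:ra.7.1}; so a single appeal to Theorem~\ref{theo:wason} (its hypothesis being exactly PJ, resp.\ RA) finishes each line.

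The only step that actually needs attention --- and hence the ``hard part'', modest as it is --- is checking that all the conditioning events stay admissible as one moves between the two theorems: $\textsc{Exact}(k,U,F)$ and each of its conjunctions with a complete description of $a$ must have positive prior probability, which by Cournot's principle holds as long as the relevant sentence is satisfiable (so $1 \le k \le \usize$, with the $F_a$ case requiring $k \ge 1$ and the two $\neg F_a$ cases requiring $k \le \usize - 1$), and the analogous admissibility of the events $G_{k}$, $G_{{\usize}}$, $\neg G_{{\usize}}$ conditioned on in Theorem~\ref{theo:wason} is already part of that theorem's setup. I would also note that exchangeability enters solely through Theorem~\ref{theo:exact2list}: Theorem~\ref{theo:wason} rests on PJ (resp.\ RA) alone, so composing the two introduces no additional constraint on $pr$ beyond those stated. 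The resulting proof is therefore essentially a four-entry substitution table followed by four citations of Theorem~\ref{theo:wason}.
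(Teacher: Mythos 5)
Your proof is correct and takes essentially the same route as the paper, which derives Theorem~\ref{theo:7.4} precisely by composing the identities of Theorem~\ref{theo:exact2list} (where exchangeability is used) with the PJ/RA inequalities of Theorem~\ref{theo:wason} for $D := F_{1:k}.(\neg F)_{{k+1}:{\usize}}$. Your added admissibility check (positivity of the conditioning events via Cournot's principle, with the obvious constraints on $k$) merely makes explicit what the paper leaves implicit.
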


The above relations seem to be compatible with intuition. While the total number of objects that satisfy $F$ is known in advance, the consideration of $F_a$ or $\neg F_a$ should not affect our estimation of the frequency of the objects being $F$.
On the other hand, the probability of $G$ can still be affected by observations. Therefore, assuming PJ, consideration of $G_a$ increases the probability of $G$ and consequently decreases the probability of $\Qtf$. 
As a result it seems reasonable that the evidence $G_a$ confirms $H=(\neg \Qtf)_{1:{\usize}}$, and the evidence $\neg G_a$ disconfirms it.

Moreover, an observation $F_a.G_a$ has an extra effect: 
While it is known that only $k$ objects can be counterexamples to $H$ (because in order to be $\Qtf$, one should be $F$), the observation $F_a.G_a$ decreases the number of possible counterexamples by one. This holds even in the case where the chosen measure is such that inductive reasoning is not possible (e.g.\ the uniform measure is used). Consequently, in (\ref{eq:77.1}) inequality is strict, but in (\ref{eq:77.2}) and (\ref{eq:77.3}) it is not. Theorem (\ref{theo:7.4}) implies the following results:
\begin{corollary} \label{cor:a}
If $F$ := raven  and $G$ := black, according to relation (\ref{eq:77.1}) (or \ref{eq:ra.77.1}), PJ (or RA) leads to:\\
$pr$($H |$\,(exactly $k$ objects are ravens).(a specific object is raven and black)) $>$ $pr$($H|$exactly $k$ objects are ravens)
\end{corollary}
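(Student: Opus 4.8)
The plan is to obtain Corollary \ref{cor:a} as a direct specialization of Theorem \ref{theo:7.4}, so the proof is essentially a matter of reading off the relevant inequality under the intended interpretation of the predicates. First I would fix the denotations $F := \text{raven}$ and $G := \text{black}$, so that $\Qtt_a = F_a . G_a$ is the proposition ``object $a$ is a raven and is black'', and $H := \impl_{1:\usize}$ is ``all ravens are black''. I would then observe that the background knowledge $\textsc{Exact}(k, U, F)$ is exactly the formal rendering of ``exactly $k$ objects of the universe are ravens'', by Definition (\ref{eq:exact_def}). Under the standing hypothesis of Theorem \ref{theo:7.4} --- namely that $pr$ is exchangeable and satisfies weak projectability (PJ) --- relation (\ref{eq:77.1}) reads
\begin{equation*}
pr \big(H \,|\, \textsc{Exact}(k, U, F) . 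F_a . G_a \big) > pr \big( H \,|\, \textsc{Exact}(k, U, F) \big),
\end{equation*}
which, translated back into words with the chosen interpretation, is precisely the claimed strict inequality: observing that a specific object is a black raven raises the probability of $H$ above its value given only that there are exactly $k$ ravens. The RA version follows identically from (\ref{eq:ra.77.1}).

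The only genuine content beyond invoking Theorem \ref{theo:7.4} is checking that the side conditions of that theorem and of the rules PJ and RA are actually met in this instance, i.e.\ that $a \in U$, that $F_a . G_a$ is consistent with $\textsc{Exact}(k, U, F)$ (which requires $1 \le k \le \usize$, so that at least one raven is possible and the event has positive probability under a Cournot-compliant measure), and that the relevant conditional probabilities are defined and strictly between $0$ and $1$ where needed. These are immediate once we assume, as the corollary implicitly does, that $\textsc{Exact}(k, U, F) . F_a . G_a$ is a non-extremal event for the measure under consideration.

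I do not anticipate any real obstacle here: all the work has already been done in Theorems \ref{theo:wason} and \ref{theo:exact2list}, whose conjunction yields Theorem \ref{theo:7.4}, and Corollary \ref{cor:a} is simply the $F=\text{raven}$, $G=\text{black}$ reading of line (\ref{eq:77.1}). If anything, the ``hard part'' is purely expository --- making explicit that the informal English phrasing in the corollary statement matches the formal events $\textsc{Exact}(k, U, F)$ and $\Qtt_a$ without any hidden assumption --- but this is bookkeeping rather than mathematics. Accordingly, I would present the proof as a single short paragraph: specialize the predicates, cite Theorem \ref{theo:7.4}, and conclude.
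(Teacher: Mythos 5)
Your proposal is correct and matches the paper exactly: the paper states Corollaries \ref{cor:a}--\ref{cor:f} as immediate instantiations of Theorem \ref{theo:7.4} (itself obtained from Theorems \ref{theo:wason} and \ref{theo:exact2list}), with Corollary \ref{cor:a} being precisely relation (\ref{eq:77.1}) (resp.\ (\ref{eq:ra.77.1})) read under $F:=$ raven, $G:=$ black. Your added remarks on consistency and non-extremal probabilities are just the standing conventions of Section \ref{sect:notate}, so nothing further is needed.
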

\begin{corollary} \label{cor:b}
If $F$ := nonBlack \& $G$ := nonRaven w.r.t.\ relation (\ref{eq:77.1}) (or \ref{eq:ra.77.1}), PJ (or RA) leads to:\\
$pr$($H|$(exactly $k$ objects are not black).(a specific object is nonBlack and nonRaven)) $>$ $pr$($H|$ exactly $k$ objects are not black)
\end{corollary}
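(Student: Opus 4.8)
The plan is to read Corollary~\ref{cor:b} off Theorem~\ref{theo:7.4} by fixing the interpretation of the two monadic predicate symbols of $L$. Concretely, interpret the atomic predicate $F$ as ``nonBlack'' and $G$ as ``nonRaven''. Then the generic hypothesis $H := (\forall x\; \impl_x) = \forall x\,(F_x \to G_x)$ becomes ``everything that is not black is not a raven'' (logically equivalent to ``all ravens are black''), the background proposition $\textsc{Exact}(k,U,F)$ becomes ``exactly $k$ objects of $U$ are not black'', and the evidence $F_a.G_a$ becomes ``$a$ is nonBlack and nonRaven''. Under this dictionary, relation~(\ref{eq:77.1}) of Theorem~\ref{theo:7.4} (resp.\ relation~(\ref{eq:ra.77.1})) reads verbatim as the claimed inequality
\[
pr\big( H \mid (\text{exactly $k$ objects are not black}) . (\text{$a$ is nonBlack and nonRaven})\big) > pr\big( H \mid \text{exactly $k$ objects are not black}\big).
\]

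Carrying this out, the only real steps are bookkeeping. First, the two hypotheses needed to invoke Theorem~\ref{theo:7.4} are available: $pr$ is assumed exchangeable throughout Section~\ref{sect:setting2}, and the corollary hypothesises PJ (resp.\ RA). Second, reading $F,G$ as nonBlack, nonRaven is admissible because $L$ fixes only a designated pair of logically independent monadic atoms and nothing in the development constrains how they are read; independence guarantees that every object may satisfy any of $\Qff,\Qft,\Qtf,\Qtt$, so $\textsc{Exact}(k,U,F)$ and $F_a.G_a$ are jointly consistent and the conditioning is non-degenerate for $1 \le k \le \usize$ (the range over which $\textsc{Exact}(k,U,F)$ is defined and for which the strict inequality is not vacuous). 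Applying relation~(\ref{eq:77.1}) or~(\ref{eq:ra.77.1}) and rephrasing with ``raven'' and ``black'' completes the proof.

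For orientation I would also recall the chain behind Theorem~\ref{theo:7.4}, which is where the substance lies: with $D := F_{1:k} . (\neg F)_{{k+1}:{\usize}}\in\Delta$, Theorem~\ref{theo:exact2list} (via exchangeability) identifies $pr(H\mid\textsc{Exact}(k,U,F))$, $pr(H\mid\textsc{Exact}(k,U,F).F_a.G_a)$, $pr(H\mid\textsc{Exact}(k,U,F).\neg F_a.G_a)$ and $pr(H\mid\textsc{Exact}(k,U,F).\neg F_a.\neg G_a)$ with $pr(H\mid D)$, $pr(H\mid G_k.D)$, $pr(H\mid G_{\usize}.D)$ and $pr(H\mid\neg G_{\usize}.D)$ respectively; then Theorem~\ref{theo:wason} applied to this particular $D$ gives the inequalities (\ref{eq:7.1})--(\ref{eq:7.3}) and (\ref{eq:ra.7.1}) among them, and substituting back yields (\ref{eq:77.1})--(\ref{eq:77.3}) and (\ref{eq:ra.77.1}). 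For Corollary~\ref{cor:b} specifically only the $F_a.G_a$ line is used, i.e.\ the pairing of (\ref{eq:exact2list_fa_ga}) with (\ref{eq:7.1}) (or (\ref{eq:ra.7.1})).

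The main obstacle is therefore entirely upstream of the corollary. In Theorem~\ref{theo:exact2list} one must collapse the $\binom{\usize}{k}$-fold disjunction $\textsc{Exact}(k,U,F)$ to the single canonical conjunction $F_{1:k} . (\neg F)_{{k+1}:{\usize}}$: each disjunct, intersected with the evidence, is carried to the canonical one by a suitable permutation $\permute$, exchangeability makes all $\binom{\usize}{k}$ contributions equal, and the common factor cancels when Bayes' rule is applied, exactly as in the worked $\usize=3$ example. In Theorem~\ref{theo:wason} one must convert the weak-projectability (or analogy) inequalities for the single-object predicate $G$ into confirmation statements about the generalization $H$. Granting those two theorems, Corollary~\ref{cor:b} is immediate.
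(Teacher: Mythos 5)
Your proposal is correct and follows essentially the same route as the paper: Corollary~\ref{cor:b} is obtained there exactly as you describe, by instantiating $F$ := nonBlack and $G$ := nonRaven in relation~(\ref{eq:77.1}) (or~(\ref{eq:ra.77.1})) of Theorem~\ref{theo:7.4}, whose substance lies upstream in Theorems~\ref{theo:wason} and~\ref{theo:exact2list} as you note.
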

\begin{corollary} \label{cor:c}
If $F$ := raven and $G$ := black, w.r.t.\ (\ref{eq:77.2}), PJ leads to:\\
$pr$($H|$(exactly $k$ objects are ravens).(a specific object is nonRaven and black)) $\geq$ $pr$($H|$exactly $k$ individuals are ravens)
\end{corollary}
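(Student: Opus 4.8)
\textbf{Proof proposal for Corollary~\ref{cor:c}.}
The plan is to recognize that Corollary~\ref{cor:c} is nothing more than the specialization of relation~\req{eq:77.2} of Theorem~\ref{theo:7.4} to the interpretation $F := \text{raven}$ and $G := \text{black}$, together with a reading of the propositional content of the three events involved. So the first step is simply to instantiate Theorem~\ref{theo:7.4}: under the standing assumption that $pr$ is exchangeable and that weak projectability (PJ) holds, relation~\req{eq:77.2} gives
\[
pr\big(H \,|\, \textsc{Exact}(k, U, F)\,.\,\neg F_a\,.\,G_a\big) \;\geq\; pr\big(H \,|\, \textsc{Exact}(k, U, F)\big).
\]
No further probabilistic work is needed; what remains is bookkeeping about how the symbolic events translate into the English statements in the corollary.

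Second, I would unpack the three pieces. By the definition~\req{eq:exact_def}, $\textsc{Exact}(k, U, F)$ is the event ``exactly $k$ objects of the universe are $F$'', which under $F := \text{raven}$ reads ``exactly $k$ objects are ravens''. The conjunct $\neg F_a . G_a$ is, by the notation of Section~\ref{sect:notate} and the definition of $\Qft$ in~\req{eq:q1234}, just $\Qft_a$, i.e.\ ``object $a$ is not a raven and is black'' — ``a specific object is nonRaven and black''. And $H := \impl_{1:\usize}$ is the hypothesis ``all ravens are black''. Substituting these readings into the displayed inequality yields exactly the claim
\[
pr\big(H \,|\, (\text{exactly } k \text{ objects are ravens})\,.\,(\text{a specific object is nonRaven and black})\big) \;\geq\; pr\big(H \,|\, \text{exactly } k \text{ objects are ravens}\big),
\]
which is what the corollary asserts.

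Third, I should check that the hypotheses of Theorem~\ref{theo:7.4} are genuinely available here — namely that $pr$ is exchangeable (this is the standing assumption of Setting~2, carried since Section~\ref{subsect:background}) and that PJ holds (this is precisely the antecedent invoked in the corollary's statement, ``PJ leads to''). Both are in force by hypothesis, so the instantiation is legitimate. It is also worth noting, for the reader's orientation, why the inequality is only ``$\geq$'' and not strict: an observation of the form $\neg F_a . G_a = \Qft_a$ does not remove a potential counterexample to $H$ (only an $F$-object, i.e.\ a raven, can be a counterexample via $\Qtf$), so the sole source of confirmation is the inductive boost PJ gives to $G$, and a measure under which inductive learning is vacuous (e.g.\ the uniform measure) will make~\req{eq:77.2} an equality.

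There is essentially no obstacle: the entire content of the corollary was already discharged in Theorem~\ref{theo:7.4}, and the only ``work'' is the translation of $\textsc{Exact}(k,U,F)$, $\neg F_a . G_a$, and $H$ into natural language under the raven/black reading. If any step could be called delicate, it is merely confirming that the corollary is quoting relation~\req{eq:77.2} (the ``$\geq$'' case for $\neg F_a . G_a$) rather than~\req{eq:77.1} or~\req{eq:77.3} — a matter of matching the polarity of the predicates — but this is immediate from the displayed forms.
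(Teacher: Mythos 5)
Your proposal is correct and matches the paper's treatment: the corollary is stated there as an immediate instantiation of relation (\ref{eq:77.2}) of Theorem \ref{theo:7.4} with $F$ := raven and $G$ := black, exactly as you do, with the only remaining work being the translation of $\textsc{Exact}(k,U,F)$, $\neg F_a . G_a$ and $H$ into the English reading. Your added remarks on verifying exchangeability/PJ and on why the inequality is non-strict are consistent with the paper's surrounding discussion.
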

\begin{corollary} \label{cor:d}
If $F$:=nonBlack \& $G$:=nonRaven w.r.t.\ (\ref{eq:77.2}), PJ leads to:\\
$pr$($H|$(exactly $k$ individuals are not black).(a specific object is black and not raven)) 
$\geq$ $pr$($H|$exactly $k$ individuals are not black)
\end{corollary}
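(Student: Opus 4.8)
The plan is to read off Corollary~\ref{cor:d} as the raven-paradox instance of relation~(\ref{eq:77.2}) of Theorem~\ref{theo:7.4}, applied under the predicate substitution $F := \text{nonBlack} = \neg\,\text{black}$ and $G := \text{nonRaven} = \neg\,\text{raven}$. First I would spell out the translation of each ingredient of~(\ref{eq:77.2}) under this substitution. The event $\textsc{Exact}(k,U,F)$ becomes $\textsc{Exact}(k,U,\text{nonBlack})$, i.e.\ ``exactly $k$ individuals are not black''. The conjunct $\neg F_a$ becomes $\neg\neg\,\text{black}_a = \text{black}_a$, and $G_a$ becomes $\text{nonRaven}_a$, so $\neg F_a . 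G_a$ reads ``$a$ is black and not a raven''. The hypothesis $H := \impl_{1:\usize} = \forall x\,(F_x \rightarrow G_x)$ becomes $\forall x\,(\neg\,\text{black}_x \rightarrow \neg\,\text{raven}_x)$, which is precisely $\hat H$ and hence logically equivalent to the original ``all ravens are black''. Since logically equivalent sentences correspond to the same event in $\Omega$, the two readings of the hypothesis carry identical probabilities throughout, so nothing is lost in writing the conclusion in terms of $H$.

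Next I would verify that the hypotheses of Theorem~\ref{theo:7.4} survive the substitution. Admissibility is immediate: Theorem~\ref{theo:7.4} is stated for arbitrary monadic predicates $F$ and $G$, and by the convention of Section~\ref{sect:notate} a predicate may be any sentential combination of atomic predicates, so $\neg\,\text{black}$ and $\neg\,\text{raven}$ are legitimate values for the roles of $F$ and $G$. Exchangeability is untouched: by its definition it is a property of $pr$ under permutations of \emph{individuals}, quantified over all propositions, so relabelling which property we call $F$ or $G$ leaves it intact, and the exchangeable measure assumed in Corollary~\ref{cor:d} is therefore still exchangeable in the sense required by Theorem~\ref{theo:7.4}. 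Finally, relation~(\ref{eq:77.2}) is the part of Theorem~\ref{theo:7.4} powered by weak projectability for the predicate occupying the role of $G$: through relation~(\ref{eq:exact2list_fa_notga}) of Theorem~\ref{theo:exact2list} (which is where exchangeability is consumed) it reduces to relation~(\ref{eq:7.2}) of Theorem~\ref{theo:wason}, where the observation $G_{\usize}$ is projected onto the $F$-objects $1$ to $k$. Under the substitution this invoked instance of PJ becomes weak projectability for the predicate $\text{nonRaven}$, which is exactly the PJ assumption that Corollary~\ref{cor:d} appeals to.

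With these checks in place, I would simply instantiate~(\ref{eq:77.2}), obtaining
\[
pr\big(\hat H \,\big|\, \textsc{Exact}(k,U,\text{nonBlack}) . \text{black}_a . \text{nonRaven}_a\big) \;\geq\; pr\big(\hat H \,\big|\, \textsc{Exact}(k,U,\text{nonBlack})\big).
\]
Reading $\hat H$ as ``all ravens are black'' and phrasing the two conditioning events in words then gives verbatim the inequality asserted in Corollary~\ref{cor:d}.

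I expect the only delicate point to be the projectability bookkeeping of the second step. Weak projectability for $\text{nonRaven}$ is \emph{not} the same statement as weak projectability for $\text{raven}$ -- PJ for $\psi$ and PJ for $\neg\psi$ are logically independent -- so one must state explicitly that the PJ hypothesis used here is the one for the complemented predicate $\neg\,\text{raven}$ that plays the role of $G$ in relation~(\ref{eq:77.2}). Once this is made precise, the argument carries no further probabilistic content: it is the verification that the substitution is admissible together with a direct appeal to relation~(\ref{eq:77.2}).
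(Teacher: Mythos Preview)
Your proposal is correct and follows the same approach as the paper: Corollary~\ref{cor:d} is stated there as an immediate instantiation of relation~(\ref{eq:77.2}) of Theorem~\ref{theo:7.4} under the substitution $F:=\text{nonBlack}$, $G:=\text{nonRaven}$, with no further argument supplied. Your write-up simply makes explicit the translation, the logical equivalence $H\equiv\hat H$, and the fact that the PJ instance actually consumed is the one for the predicate playing the role of~$G$; these clarifications are sound and add nothing beyond what the paper implicitly relies on.
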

\begin{corollary} \label{cor:e}
If $F$:= raven and $G$ := black, w.r.t.\ (\ref{eq:77.3}), PJ leads to:\\
$pr$($H|$(exactly $k$ individuals are ravens).(a specific object is not raven and not black)) $\leq$ $pr$($H|$exactly $k$ individuals are ravens)
\end{corollary}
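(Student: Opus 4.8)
The plan is to recognize Corollary \ref{cor:e} as the raven/black reading of inequality (\ref{eq:77.3}) in Theorem \ref{theo:7.4}; the whole argument then reduces to a faithful substitution of predicates together with a check that the hypotheses of that theorem are in force. Concretely, I would fix the dictionary between the prose of the corollary and the formal symbols: setting $F :=$ raven and $G :=$ black, the event $\textsc{Exact}(k, U, F)$ becomes ``exactly $k$ individuals are ravens'', and for the chosen object $a$ the complete description $\neg F_a . \neg G_a$ reads ``$a$ is not a raven and is not black'', i.e.\ the ``specific object is not raven and not black'' of the corollary. The hypothesis $H = \impl_{1:{\usize}}$ is ``all ravens are black'', so the two conditional probabilities $pr(H \,|\, \cdots)$ match the corollary verbatim.

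Next I would confirm the premises of Theorem \ref{theo:7.4}. That theorem requires an exchangeable probability measure --- the standing assumption throughout Setting 2 --- and derives its conclusion from weak projectability; since the corollary restricts attention to what ``PJ leads to'', both premises hold. Instantiating (\ref{eq:77.3}) with the above $F$ and $G$ then gives
\[
pr\big(H \,|\, \textsc{Exact}(k, U, F) . \neg F_a . \neg G_a\big) \leq pr\big(H \,|\, \textsc{Exact}(k, U, F)\big),
\]
which is exactly the claimed inequality. No separate computation is needed, since Theorem \ref{theo:7.4} has already combined Theorems \ref{theo:exact2list} and \ref{theo:wason} to secure (\ref{eq:77.3}) for arbitrary monadic predicates.

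The only point requiring care --- and the place where a careless reading could slip --- is keeping the roles of $F$ and $G$ straight: here the disconfirmation is driven by $\neg G_a$ (``not black''), not by $\neg F_a$. Unwinding through the reduction of Theorem \ref{theo:exact2list}, conditioning on $\neg F_a$ merely places $a$ among the known non-ravens, which can never be a counterexample to $H$, so it contributes nothing; meanwhile conditioning on $\neg G_a$ raises, via PJ applied to the predicate $\neg G$, the estimated frequency of non-black objects and hence the chance that one of the $k$ ravens is a counterexample $\Qtf$, so $pr(H)$ cannot increase. Because weak projectability supplies only the non-strict inequality (\ref{eq:weak-proj}) and no counterexample is eliminated in this case, the resulting inequality is weak ($\leq$) rather than strict. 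Hence there is no substantive obstacle beyond this bookkeeping.
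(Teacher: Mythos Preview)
Your proposal is correct and matches the paper's own treatment: the corollary is presented there without a separate proof, as a direct instantiation of relation (\ref{eq:77.3}) in Theorem \ref{theo:7.4} under the interpretation $F:=$ raven, $G:=$ black. Your additional paragraph explaining why $\neg G_a$ (rather than $\neg F_a$) drives the weak inequality is sound commentary but goes beyond what the paper itself supplies.
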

\begin{corollary} \label{cor:f}
If $F$:=  nonBlack \& $G$  :=  nonRaven, w.r.t.\ (\ref{eq:77.3}), PJ leads to:\\
$pr$($H|$(exactly $k$ individuals are not black).(a specific object is black and raven)) $\leq$ $pr$($H|$exactly $k$ individuals are not black)
\end{corollary}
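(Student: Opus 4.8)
The plan is to obtain this corollary as a direct instantiation of inequality \req{eq:77.3} of Theorem \ref{theo:7.4}, under the reading $F := \text{nonBlack}$ and $G := \text{nonRaven}$. Since Theorem \ref{theo:7.4} is already established (via Theorems \ref{theo:wason} and \ref{theo:exact2list}) for an \emph{arbitrary} pair of monadic predicates $F$ and $G$, essentially all of the work is a dictionary translation together with one logical equivalence; there is no fresh probabilistic content.

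First I would fix the dictionary induced by the substitution. With $F := \text{nonBlack}$, the event $\textsc{Exact}(k, U, F)$ becomes ``exactly $k$ objects are not black''; the literal $\neg F_a$ becomes ``$a$ is black''; and with $G := \text{nonRaven}$, the literal $\neg G_a$ becomes ``$a$ is a raven''. Hence the conjunction $\neg F_a . \neg G_a$ appearing on the left-hand side of \req{eq:77.3} becomes ``$a$ is black and $a$ is a raven'', which is precisely the evidence ``a specific object is black and raven'' named in the corollary.

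Second I would check that the hypothesis $H$ is unchanged by the role reversal. Under the substitution, $\impl_b = F_b \rightarrow G_b$ reads ``if $b$ is not black then $b$ is not a raven'', so $H = \impl_{1:\usize}$ reads ``every non-black thing is a non-raven''. By contraposition this is logically equivalent to ``every raven is black'', i.e.\ the same raven hypothesis $H$ intended in the corollary; since logically equivalent sentences correspond to the same event in $\Omega$ and hence receive the same probability, no adjustment is needed. The standing hypotheses of Theorem \ref{theo:7.4} --- that $pr$ is exchangeable and that weak projectability holds (for the relevant predicate, here $G = \text{nonRaven}$, which is the predicate the underlying proof via Theorem \ref{theo:wason} projects) --- are exactly the assumptions granted by the corollary, so Theorem \ref{theo:7.4} applies verbatim.

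With the dictionary in place, inequality \req{eq:77.3} reads word for word as
\[
pr\big(H \,|\, (\text{exactly } k \text{ not black}) \,.\, (a \text{ black and raven})\big) \leq pr\big(H \,|\, \text{exactly } k \text{ not black}\big),
\]
which is the claim. The only non-mechanical point is verifying the contrapositive equivalence of $H$ under the predicate swap; once that is observed there is no computation left to perform, the entire inductive content residing in Theorem \ref{theo:7.4} itself.
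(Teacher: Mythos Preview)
Your proposal is correct and matches the paper's approach: the corollary is stated there as an immediate instantiation of inequality \req{eq:77.3} in Theorem~\ref{theo:7.4} under the substitution $F:=\text{nonBlack}$, $G:=\text{nonRaven}$, with no separate proof given. Your explicit observation that $H$ is preserved by contraposition is the only non-mechanical step, and it is exactly what the paper relies on implicitly.
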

\begin{center}
\begin{table}
\begin{tabular}{|p{0.2\textwidth}|p{0.38\textwidth}|p{0.32\textwidth}|}
\cline{2-3} 
\multicolumn{1}{r|}{}& \multicolumn{1}{|l|}{no. of the ravens is known}     & \multicolumn{1}{|l|}{no. of non-blacks is known}\\ 
\hline
observation of a        & \multicolumn{1}{|l|}{PJ: doesn't confirm $H$ (Cor. \ref{cor:e})}     & PJ: confirms $H$ (Cor. \ref{cor:b}) \\               
non-black                  & NC: confirms $H$                                                      & NC: confirms $H$ \\
non-raven                 &RA: n/a                                                                       &RA: confirms $H$ (Cor. \ref{cor:b})\\
\hline
observation of a       & \multicolumn{1}{|l|}{\multirow{2}{*}{evidence refutes $H$}} & \multicolumn{1}{|l|}{\multirow{2}{*}{evidence refutes $H$}}  \\
non-black raven        &                                                                                     &   \\ 
\hline
                                 & \multicolumn{1}{|l|}{PJ: doesn't disconfirm $H$} & \multicolumn{1}{|l|}{PJ: doesn't disconfirm $H$} \\
observation of a       & \multicolumn{1}{|l|}{\hspace{10mm}(Corollary \ref{cor:c})}             & \multicolumn{1}{|l|}{\hspace{10mm}(Corollary \ref{cor:d})} \\
black non-raven	& \multicolumn{1}{|l|}{NC: n/a}                                   & NC: n/a\\                              
			& \multicolumn{1}{|l|}{RA: n/a}                                   & RA: n/a\\                              
\hline
                                & \multicolumn{1}{|l|}{PJ: confirms $H$ (Corollary \ref{cor:a})}                    & \multicolumn{1}{|l|}{PJ: does not confirm $H$}\\
observation of a     & \multicolumn{1}{|l|}{\hspace{10mm}}& \multicolumn{1}{|l|}{\hspace{10mm}(Corollary \ref{cor:f})} \\
black raven	          & NC: confirms $H$					          & NC: confirms $H$ \\
			& RA: confirms $H$	(Corollary \ref{cor:a})		          & RA: n/a \\
\hline
\end{tabular}
\caption{Weak projectability (PJ) vs. Nicod's condition (NC) and reasoning by analogy (RA) in setting 2.}
\label{tab:cors}
\end{table}
\end{center}
Corollaries (\ref{cor:a}) to (\ref{cor:f}) are summarized in (Table \ref{tab:cors}). NC, if assumed to hold in this setting, 
suggests that the observation of a
\emph{non-black non-raven} and the observation of a \emph{black raven}
(i.e.\ entries in the first and fourth rows of the table) should
confirm $H$ which clearly contradicts what PJ suggests, therefore, there is a tension between these two rules.
When background knowledge is neglected, 
intuition goes with PJ in the first column of the table. 
On the other hand, it does not completely match the suggestions of either NC or PC in the second column.
This may indicate that intuition is more inclined to the case
where ``the number of ravens" and not ``the number of non-blacks" is known
in advance. In real life, none of these numbers is known but the
total number of ravens can be estimated much easier than the number
of non-black objects. On the other hand, if we are explicitly
informed of the total number of non-black objects, 
at least in cases similar to the following example, intuition seems to follow
PJ's suggestions in the second column:

Imagine that you
are only concerned about objects which are placed inside a bag (i.e.\
$U:=$ \emph{set of objects inside a bag}). Also imagine that you are
told that only 4 objects are not black. In this case, there are just four possible counterexamples to  $H$. Now
suppose that a \emph{green apple} comes out of the bag. 
Since it is \emph{green}, it is one of those 4 \emph{non-blacks}. Therefore, one
possible counterexample is removed. Meanwhile, the fact that it is
a \emph{non-raven} may increase the probability of
\emph{non-ravenhood} (w.r.t.\ PJ). 
Therefore it is more probable that the 3 remaining \emph{non-blacks} are also \emph{non-raven}. 
Thus, this observation should confirm $H$. 
Now suppose that a \emph{black raven} comes out. 
Its color informs us that it is not among the possible counterexamples but its kind increases the probability of
\emph{ravenhood} which is not in favor of $H$. 
So, this observation cannot confirm $H$. 
Therefore, this example suggests that in Setting 2, given the proper
background knowledge, intuition does not follow NC. 
It follows PJ even if it advises that the observation of a \emph{green apple} confirms that
``all ravens are black" and the observation of a \emph{black raven} does not!

Clearly, we can never ``prove" that a particular measure or a particular proposition is (or is not) ``intuitively plausible", due to the subjective nature of the problem.
The former example presented a particular method of reasoning that relative to a given configuration supports PJ more than NC. This has convinced us that generally in setting 2, PJ is more plausible than NC but as we mentioned, some people might not be convinced. For example, 
one might argue that if we are told that the number of non-black objects is precisely 7 million, we can still believe that black ravens confirm that all ravens are black. 
Such reasoning might be on grounds of some ``hidden" background information such as knowing that ravens are animals and animals of the same kind often have similar colors. 
This particular  background knowledge is not in $\Delta$ (and therefore not in Setting 2) however as it was mentioned in the introduction, this knowledge is convertible to the subjectively chosen a priori probability measure. 
Of course given such knowledge, there will be no surprise if NC holds for $F$:=\emph{raven} and $G$:=\emph{black} but not for $F$:= \emph{non-black} and $G$:= \emph{non-raven}. However, it is up to the readers to judge about what is intuitive for them and what is not. What was formally provable (and is proved formally) is that in setting 2, no probability measure can simultaneously satisfy PJ and NC for a couple of predicates $F$ and $G$. 
\\
{\bf Reasoning by analogy vs. Nicod's condition: }
Table \ref{tab:cors} clearly shows that the only cases where PJ and NC do not contradict is when according to RA, the general hypothesis $H$ should be confirmed. 
In other cases, RA do not impose a restriction; therefore, it never contradicts either PJ or NC.  

A little thought reveals that RA and NC have many commonalities. We go a step further and propose a conjecture that NC may seem intuitively valid since it can easily be conflated with RA as follows:

According to the informal definition of NC: ``The observation of an $F$ that is $G$ confirms that all $F$ are  $G$ (or any $F$ is $G$)."
Although this informal statement seems to be plausible a priori, it is vague and NC is not necessarily its only possible
formalization. To begin with, it should be noticed that in informal language, the scopes of
quantifiers are often ambiguous; For example the informal expression
``for all $b$, the probability of  $\psi_b$ \ldots" can easily be mistaken for ``the probability that for all $b$,  $\psi_b$". 
However the most suitable formalization of the former (i.e.\ $\forall b \quad
pr(\psi_b)$) differs from that of the latter (i.e.\ $pr( \forall b \; \psi_b)$).
On the other hand, the informal ``if" does not exclusively stand for \emph{material implication}; 
in a proper context it can also mean \emph{conditional probability}. 
Putting these together, it can be seen that:
\begin{center}
(Informal NC):\quad {``The observation of an object $a$ which is both $F$ and $G$ confirms that all (or any) object $b$ that is $F$ is also $G$."}
\end{center}
can alternatively be formalized as:
\(
\forall b \quad pr(G_b \, | \, F_b . F_a . G_a) > pr(G_b \,|\, F_b)
\). 
This relation is the definition of RA (see relation \ref{eq:analogy}) -- a rule of induction which is used in many fields (e.g.\ in \emph{case-based reasoning} \cite{Aamodt94}), is directly justified by the principle of the uniformity of nature and does not suffer from the shortcomings of NC such as contradicting PJ or producing counterintuitive conclusions such as PC in the raven paradox.

\section{When the Size of the Universe is Unknown} \label{sect:universe}
In Section \ref{sect:notate}, we defined our probability space using the sample space $\Omega$, the set of all \emph{complete description vectors} (CDVs), all involving $\usize$ objects. Thus, from the beginning, we had to assume that the cardinality of the universe is known. 
In this section, we instead assume that:
\begin{enumerate}\itemsep0pt
\item  The size of the universe (i.e.\ $\usize$) is unknown; however it is known that it is fixed and bounded by some known constants $\alpha$ and $\beta$. E.g.\ assume it is known that the number of the objects of the universe is larger than $\alpha = 10^{10}$ and less than $\beta = 10^{1000}$.   
\item The new evidence $E := F_a.G_a$, does not affect the way the rational agent estimates the size of the universe. E.g., the observation of a black raven does not change the probability distribution over the possible sizes of the universe.
\end{enumerate}
We show that in this setting, our previous conclusions are still valid. Informally speaking, the reason is that all (in)equalities of the previous sections hold for any arbitrary (but fixed) size of the universe, therefore this number does not play a role, and consequently, even if it is unknown, (as long as new evidence does not affect the agent's beliefs about it) all (qualitative) relations should still hold.
 
To justify this claim formally, we need some new notation: 
\\If it is known that the size of universe is $\upsilon$, we let the enumeration of its objects be $U_\upsilon :=\{1, 2, \ldots \upsilon \}$ (created recursively by $U_\upsilon := U_{\upsilon - 1} \sqcup \{ \upsilon \}$).\footnote{
Note that `$\sqcup$' denotes the disjoint union operation. 
}   
Instead of $\Delta$, we write $\Delta_\upsilon$ to indicate that the members of this set describe individuals which belong to the universe $U_\upsilon$. 
Similarly, instead of $\Omega$, we write $\Omega_\upsilon$ to emphasize that the sample space corresponds to a universe of size $\upsilon$ (i.e.\ $U_\upsilon$).\footnote{
While $\Omega_\upsilon$ contains CDVs that exactly describe $\upsilon$ objects, for all distinct $\upsilon'$ and $\upsilon''$, $\Omega_{\upsilon'} \cap \Omega_{\upsilon''}$=$\emptyset$.
}
Similarly, instead of $pr(\cdot)$, we write $\measure_{\upsilon}(\cdot)$ to indicate that by definition, $\measure_{\upsilon}$ (defined on $\Omega_{\upsilon}$) is a measure that provides a probabilistic model for the \emph{rational agent} (who performs induction), only if he/she/it knows the cardinality of the universe is $\upsilon$.

To prevent ambiguity, instead of representing the events by propositions, we directly use subsets of the sample spaces:
The event that corresponds to an arbitrary proposition $\rho$ relative to a sample space $\Omega_{\upsilon}$ is: $\omega_\rho^{\Omega_{\upsilon}} := \{ o \in \Omega_{\upsilon} : o \models \rho \}$.
For example, if $\rho := F_{1}$, then $\omega^{\Omega_1}_{\rho}$ represents the event 
$\{ \boldsymbol{\Qtf_{ 1 }}, \boldsymbol { \Qtt_{ 1 } } \}$,
while $\omega^{\Omega_2}_{\rho}$ stands for the event 
$\{ \boldsymbol{\Qtf_{ 1 } \Qff_{ 2} }$, $\boldsymbol{\Qtf_{ 1 } \Qft_{ 2 } }$, $\boldsymbol{\Qtf_{ 1 } \Qtf_{ 2 } }$, $\ldots$, $\boldsymbol{\Qtt_{ 1 } \Qtt_{ 2 } } \}$.
The \emph{Complete Description Vectors} (CDVs) are here bold-faced and conjunction symbols ``." are dropped to emphasize that they are not ordinary propositions. 
For example $\boldsymbol{ \Qtt_{ 1 } \Qtt_{ 2} }$ is a CDV that not only entails the ordinary proposition $\Qtt_{ 1 } . \Qtt_{ 2 }$, but also indicates that the universe is $U_2 = \{1$, $2\}$. The reason is that by definition, each CDV describes all objects of the universe (see Section \ref{sect:notate}).

Let $\alpha$ and $\beta$ be some known lower and upper bound for the size of the universe.\footnote{
$\alpha$ is at least equal to the number objects mentioned by background knowledge or evidence. $\beta$ can be arbitrarily large but for simplicity, we assume that it is finite. To see what would be needed if we wanted to allow $\beta \rightarrow \infty$, refer to Footnote  \ref{foot:infinite}.
} 
We define a new sample space $\bigss$ as a set that contains all members of all sample spaces that correspond to universes with sizes at least equal to $\alpha$ and at most equal to $\beta$:
\begin{equation} \label{eq:big-omega}
\forall \alpha, \beta \in \mathbb{N} \, \text{ s.t.\ } \alpha \leq \beta \qquad \bigss := \bigsqcup_{\upsilon=\alpha}^{\beta} \Omega_\upsilon
\end{equation}
Relative to the sample space $\bigss$ and for all $\upsilon \in [\alpha, \beta]$, the subset $\Omega_\upsilon$  
represents the event that ``the size of the universe is $\upsilon$". We let:
\begin{equation*}
\omega_\rho := \{ o \in \bigss : o \models \rho \} = \bigsqcup_{\upsilon = \alpha}^{\beta} \{ o \in \Omega_\upsilon : o \models \rho \} =
 \bigsqcup_{\upsilon = \alpha}^{\beta} \omega_\rho^{\Omega_\upsilon}  =  \bigsqcup_{\upsilon = \alpha}^{\beta} (\omega_\rho^{\Omega_\upsilon} \cap \Omega_\upsilon)
\end{equation*}
$\omega_\rho$
corresponds to the event that regardless of the size of the universe, proposition $\rho$ holds.\footnote{
More specifically, $\omega_{\rho}$ represents: ``$\usize$=$\alpha$ and  $\rho$ w.r.t.\ $U_\alpha$" or ``$\usize$=$\alpha$+$1$ and  $\rho$ w.r.t.\ $U_{\alpha + 1}$" or \ldots or ``$\usize$=$\beta$ and  $\rho$ w.r.t.\ $U_\beta$".
} We refer to $\omega_\rho$ as the \emph{generalized correspondent of $\rho$}. 
While 
$\omega^{\Omega_{\upsilon}}_\rho \subseteq \Omega_\upsilon$ 
and for all 
$\upsilon' \neq \upsilon''$, $\Omega_{\upsilon'} \cap \Omega_{\upsilon''} = \emptyset$, the above relation entails: 
\begin{equation} \label{eq:omega-psi}
\omega_\rho^{\Omega_\upsilon} = \omega_{\rho} \cap \Omega_{\upsilon}
\end{equation}
Likewise, the event that represents ``exactly $k$ objects of a universe of unknown size (but bounded by $\alpha$ and $\beta$) are $F$" is defined as follows: 
\begin{equation*}
\omega_{\textsc{Exact}(k, F)} :=  \bigsqcup_{\upsilon = \alpha}^{\beta} \omega_{\textsc{Exact}(k, U_\upsilon, F)}^{\Omega_\upsilon} 
=  \bigsqcup_{\upsilon = \alpha}^{\beta} ( \omega_{\textsc{Exact}(k, U_\upsilon, F)}^{\Omega_\upsilon} \cap \Omega_\upsilon )
\end{equation*}
We refer to $\omega_{\textsc{Exact}(k, F)}$ as the \emph{generalized correspondent of} $\textsc{Exact}(k, U_\alpha, F)$ to $\textsc{Exact}(k, U_\beta, F)$.

Over the sample space $\bigss$, we let $\bigmeasure$ denote the probability measure that explains the \emph{rational agent}'s a priori degrees of beliefs in the events. As mentioned in footnote~(\ref{foot:peter}), the existence of such a measure is deduced from the rationality axioms\footnote{
\label{foot:infinite} 
For the case where $\bigss$ is finite, Savage’s axioms are sufficient. If we wanted to study the case where it is infinite (i.e.\ no upper bound exists:  $\beta \rightarrow \infty$), we needed to add the \emph{monotone continuity assumption} \cite{arrow70} to the rationality axioms to guarantee countable additivity.
}
\cite{savage54}. 
Note that:
\begin{equation} \label{eq:mixed-restrict}
\forall \omega \subseteq \bigss, \forall \upsilon \in \mathbb{N} \text{ s.t.\ } \alpha \leq \upsilon \leq \beta \qquad 
\bigmeasure (\omega \,|\, \Omega_\upsilon ) = \measure_\upsilon (\omega)
\end{equation}
The reason is that by definition, both sides of the above equation represent the probability measure chosen by the \emph{rational agent}, when the size of the universe is known to be $i$.\footnote{
Evidently, if $\bigmeasure(\Omega_\upsilon) = 0$, both  $\bigmeasure (\omega \,|\, \Omega_\upsilon )$ and $\measure_\upsilon(\omega)$ are undefined.
}

Let $E := F_a . G_a$ where $a \in U_\alpha$ (consequently for all $\upsilon \geq \alpha$, $a \in U_\upsilon$), and $\omega_D$ be either the generalized correspondent of $\textsc{Exact}(k, U_\alpha, F)$ to $\textsc{Exact}(k, U_\beta, F)$ or the generalized correspondent of any $D \in \Delta_\alpha$. 
Roughly speaking, this means that $\omega_D$ is the generalized correspondent of any piece of background knowledge  which is discussed in the previous sections.
Based on these notations, relation (\ref{eq:mixed-assumption3}) represents a formal version of Assumption (2) (i.e.\ the assumption that the estimated size of the universe is not affected by evidence $E$):
\begin{equation}\label{eq:mixed-assumption3}
\text{(Assumption)} \;
\forall \upsilon \in [\alpha, \beta] \text{ s.t. } \bigmeasure (\Omega_\upsilon) > 0 \quad \bigmeasure(\Omega_\upsilon \,|\, \omega_E \cap \omega_D) 
= \bigmeasure(\Omega_\upsilon \,|\, \omega_D)
\end{equation}

Having this setting, it is easy to show that all (in)equalities of the previous sections hold for the case where the exact number of objects in the universe is unknown. As an example, consider the relation (\ref{eq:typical}). This is a typical inequality that informally speaking states that ``for any possible universe-size $\upsilon$, if it is known that the size of the universe is equal to $\upsilon$, relative to background knowledge $D$, the evidence $E$ confirms the hypothesis $H$." (Note that by the notation that we were using in the previous sections, this equation would be represented by $pr(H|E.D) > pr(H|D)$ where the size of the universe was not mentioned explicitly.)  
\begin{equation}\label{eq:typical}
\forall \upsilon \in \mathbb{N}  \text{ s.t. } \alpha \leq \upsilon \leq \beta, \, \bigmeasure(\Omega_\upsilon)>0 \qquad \measure_\upsilon (\omega^{\Omega_\upsilon}_H| \, \omega^{\Omega_\upsilon}_E \cap \omega^{\Omega_\upsilon}_D) > \measure_\upsilon (\omega^{\Omega_\upsilon}_H | \, \omega^{\Omega_\upsilon}_D) 
\end{equation}
Proposition \ref{prop:example} proves that the mentioned typical inequality entails that: 
$\bigmeasure(\omega_H | \, \omega_E \cap \omega_D) > \bigmeasure(\omega_H| \, \omega_D)$, which informally speaking asserts that relative to a universe of an unknown size (that is bounded by $\alpha$ and $\beta$), in the presence of background knowledge $D$, the evidence $E$ confirms $H$. 
\begin{proposition} \label{prop:example}
Let the size of the universe be unknown but known to be bounded by $\alpha$ and $\beta$.  Let $\bigmeasure$ be a probability measure that corresponds to the degrees of beliefs of a rational agent in the events of the sample space $\bigss$, defined by relation (\ref{eq:big-omega}). If $\bigmeasure$ complies with relation (\ref{eq:mixed-assumption3}), then relation (\ref{eq:typical}) entails:
\begin{equation*}
\bigmeasure(\omega_H | \, \omega_E \cap \omega_D) > \bigmeasure(\omega_H| \, \omega_D)
\end{equation*} 
\end{proposition}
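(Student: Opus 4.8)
The plan is to decompose both conditional probabilities over the partition $\{\Omega_\alpha,\dots,\Omega_\beta\}$ of $\bigss$ by the law of total probability, and then compare the two resulting mixtures term by term. Throughout I assume (as in the earlier sections, where confirmation is only considered for consistent events of positive probability) that $\bigmeasure(\omega_D)>0$ and $\bigmeasure(\omega_E\cap\omega_D)>0$, so that both outer conditionings are defined.

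First I would write, restricting the sum to those indices $\upsilon$ with $\bigmeasure(\Omega_\upsilon\cap\omega_D)>0$ (the remaining terms contribute $0$),
\[
\bigmeasure(\omega_H\,|\,\omega_D)=\sum_{\upsilon}\bigmeasure(\omega_H\,|\,\Omega_\upsilon\cap\omega_D)\;\bigmeasure(\Omega_\upsilon\,|\,\omega_D),
\]
together with the analogous identity in which $\omega_D$ is replaced by $\omega_E\cap\omega_D$. By the Assumption~(\ref{eq:mixed-assumption3}) the mixing weights coincide, $\bigmeasure(\Omega_\upsilon\,|\,\omega_E\cap\omega_D)=\bigmeasure(\Omega_\upsilon\,|\,\omega_D)$; in particular $\bigmeasure(\Omega_\upsilon\cap\omega_E\cap\omega_D)>0$ holds exactly when $\bigmeasure(\Omega_\upsilon\cap\omega_D)>0$, so both mixtures are over the same index set, with one and the same family of nonnegative weights that sum to $1$.

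Next I would identify the inner conditionals with $\measure_\upsilon$-probabilities of the restricted events. For any proposition $\rho$, relation~(\ref{eq:mixed-restrict}) gives $\bigmeasure(\omega_\rho\,|\,\Omega_\upsilon)=\measure_\upsilon(\omega_\rho)$, and since $\measure_\upsilon$ is carried by $\Omega_\upsilon$, relation~(\ref{eq:omega-psi}) rewrites this as $\measure_\upsilon(\omega_\rho^{\Omega_\upsilon})$. Consequently
\[
\bigmeasure(\omega_H\,|\,\Omega_\upsilon\cap\omega_D)=\frac{\bigmeasure(\omega_H\cap\omega_D\,|\,\Omega_\upsilon)}{\bigmeasure(\omega_D\,|\,\Omega_\upsilon)}=\measure_\upsilon(\omega_H^{\Omega_\upsilon}\,|\,\omega_D^{\Omega_\upsilon}),
\]
and likewise $\bigmeasure(\omega_H\,|\,\Omega_\upsilon\cap\omega_E\cap\omega_D)=\measure_\upsilon(\omega_H^{\Omega_\upsilon}\,|\,\omega_E^{\Omega_\upsilon}\cap\omega_D^{\Omega_\upsilon})$. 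Every surviving index has $\bigmeasure(\Omega_\upsilon)>0$ (it dominates $\bigmeasure(\Omega_\upsilon\cap\omega_D)>0$), so hypothesis~(\ref{eq:typical}) applies and yields, for each such $\upsilon$, the strict inequality $\measure_\upsilon(\omega_H^{\Omega_\upsilon}\,|\,\omega_E^{\Omega_\upsilon}\cap\omega_D^{\Omega_\upsilon})>\measure_\upsilon(\omega_H^{\Omega_\upsilon}\,|\,\omega_D^{\Omega_\upsilon})$. Since $\bigmeasure(\omega_H\,|\,\omega_E\cap\omega_D)$ and $\bigmeasure(\omega_H\,|\,\omega_D)$ are now exhibited as averages of the left-hand and right-hand inner terms against the very same weights, at least one of which is positive, term-by-term strict domination gives $\bigmeasure(\omega_H\,|\,\omega_E\cap\omega_D)>\bigmeasure(\omega_H\,|\,\omega_D)$.

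I expect the only real obstacle to be the degenerate-case bookkeeping: verifying that indices with $\bigmeasure(\Omega_\upsilon\cap\omega_D)=0$ can be discarded, that Assumption~(\ref{eq:mixed-assumption3}) forces the two mixtures to share the same support, and that all conditional probabilities invoked (in particular those already appearing in~(\ref{eq:typical})) are well defined. No new quantitative estimate is required beyond~(\ref{eq:typical}) itself.
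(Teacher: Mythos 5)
Your proposal is correct and follows essentially the same route as the paper's own proof: both decompose over the partition of $\bigss$ by universe size, use Assumption~(\ref{eq:mixed-assumption3}) to equate the mixing weights, convert the inner conditionals to $\measure_\upsilon$-probabilities via relations (\ref{eq:mixed-restrict}) and (\ref{eq:omega-psi}), and then sum the termwise strict inequalities supplied by (\ref{eq:typical}). The only difference is presentational (you build the two mixtures first and compare, whereas the paper multiplies (\ref{eq:typical}) by the common weights and then marginalizes), plus your slightly more explicit bookkeeping of zero-probability indices, which the paper handles implicitly through its set $S$.
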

\begin{proof}
Let $S:=\{s \in \mathbb{N} \;:\; \alpha \leq s \leq \beta, \; \bigmeasure(\Omega_s)>0 \}$ be the set of all possible sizes of the universe.\\[2mm]
$\forall \upsilon \in S \quad \measure_\upsilon(\omega^{\Omega_\upsilon}_H \,|\, \omega^{\Omega_\upsilon}_E \cap \omega^{\Omega_\upsilon}_D) > \measure_\upsilon(\omega^{\Omega_\upsilon}_H \,|\, \omega^{\Omega_\upsilon}_D)$, by  (\ref{eq:typical}) and definition of $S$ \\[2mm]
$\Rightarrow \forall \upsilon \in S \quad \bigmeasure(\omega^{\Omega_\upsilon}_H \,|\, \omega^{\Omega_\upsilon}_E \cap \omega^{\Omega_\upsilon}_D \cap \Omega_\upsilon) > 
\bigmeasure(\omega^{\Omega_\upsilon}_H \,|\, \omega^{\Omega_\upsilon}_D \cap \Omega_\upsilon) 
, \; \text{by relation (\ref{eq:mixed-restrict})} $\\[2mm]
$\Rightarrow \forall \upsilon \in S \quad \bigmeasure\big( \omega_H \cap \Omega_\upsilon \,|\, 
(\omega_E \cap \Omega_\upsilon) \cap (\omega_D \cap \Omega_\upsilon) \cap \Omega_\upsilon \big) > 
\bigmeasure \big ( \omega_H \cap \Omega_\upsilon \,|\, (\omega_D \cap \Omega_\upsilon) \cap \Omega_\upsilon \big)$, by relation (\ref{eq:omega-psi}) \\ [2mm]
$\Rightarrow \forall \upsilon \in S \quad \bigmeasure( \omega_H \,|\, \omega_E \cap \omega_D \cap \Omega_\upsilon) > 
\bigmeasure ( \omega_H   \,|\,  \omega_D \cap \Omega_\upsilon), \; \text{by simplification} $\\[2mm]
$\Rightarrow \forall \upsilon \in S \quad 
\bigmeasure(\Omega_\upsilon \,|\, \omega_E \cap \omega_D) \cdot
\bigmeasure( \omega_H \,|\, \omega_E \cap \omega_D \cap \Omega_\upsilon) >
\bigmeasure(\Omega_\upsilon \,|\, \omega_D) \cdot
\bigmeasure ( \omega_H   \,|\,  \omega_D \cap \Omega_\upsilon ), \;$ by multiplying the r.h.s.\ and l.h.s. by the r.h.s.\ and l.h.s.\ of relation (\ref{eq:mixed-assumption3}) \\[-2mm]
\begin{equation*}
\Rightarrow \sum_{\upsilon \in S}  
\bigmeasure(\Omega_\upsilon \,|\, \omega_E \cap \omega_D) \cdot
\bigmeasure( \omega_H \,|\, \omega_E \cap \omega_D \cap \Omega_\upsilon ) >
\sum_{\upsilon \in S}
\bigmeasure(\Omega_\upsilon \,|\, \omega_D) \cdot 
\bigmeasure ( \omega_H   \,|\,  \omega_D \cap \Omega_\upsilon) \quad
\end{equation*}
The above inequality is equivalent to $\bigmeasure(\omega_H |\, \omega_E \cap \omega_D) > \bigmeasure(\omega_H |\, \omega_D)$ (where $\Omega_\upsilon$ is marginalized out), which is the intended result.
\end{proof}

\section{Conclusion} \label{sect:conclude}

We argued that from a Bayesian perspective, (a) objective background information (from previous observations) and subjective prior information (i.e.\ prior degrees of beliefs) can produce the same effects and in this sense, are convertible; therefore, in this context, ``the state of perfect ignorance" should be interpreted as ``the state of possessing no objective information and no subjective biased beliefs". On the other hand, if we assume that the uniform probability measure corresponds to unbiased subjective degrees of beliefs (as it is often assumed), then we should conclude that:
(b) with unbiased subjective beliefs, inductive reasoning is impossible.
Therefore, based on (a) and (b), induction in a condition of perfect ignorance is impossible.  

In addition, by examples we have shown that relative to unrestricted objective/subjective prior information, common rules of induction do not always hold. We concluded that rules of induction should be considered plausible, if they hold relative to a large class of plausible (objective) background knowledge (i.e.\ knowledge similar to our actual background knowledge) and plausible probability measures (i.e.\ measures with reasonable characteristics such as complying with more intuitive rules of induction).  
Subsequently, we scrutinized the plausibility of NC by fixing the background knowledge and studying the characteristics of measures that do or do not comply with it.

In the first setting, the background knowledge is
composed of \emph{complete descriptions} of several objects. 
It is shown that in this setting, validity of NC is implied by the
answer to a simpler question that does not seem to have a general
intuitive answer i.e.\ for distinct objects $a$ and $b$, whether $E=F_a.G_a$ confirms $F_b. \neg G_b$ or not.
While due to the chosen probability measure and characteristics of
$F$ and $G$, the latter condition does not hold in general, we concluded
that in this setting NC is not significantly more reasonable than $\neg$NC.

In the second setting, the number of objects satisfying a particular predicate is known by background knowledge. It is shown that in this
case, NC may contradict PJ while seemingly, intuition follows the
latter. 
In summary:
\begin{enumerate}\itemsep0pt
\item There are \emph{reasonable} (i.e.\ not implausible a priori) probability measures for which NC does not hold;
\item There are \emph{reasonable} probability measures for which weak projectability, i.e.\ one of the simplest forms of inductive inference, opposes NC;
\item In the case of contradiction, intuition ``seems to" follow projectability rather than NC;
\end{enumerate}
Hence, we conclude that we have gathered some evidence against the assumption that NC is a generally reliable rule of induction. 
If NC is not considered plausible then the raven paradox is also
dispelled since  even if NC holds for two predicates $F$ and $G$,
it does not mean that it should hold for $\neg G$ and $\neg F$.
This asymmetry may be due to possible asymmetry in background knowledge and/or priors.

Of the three mentioned points, point (1) had already been demonstrated \cite{maher04}.   
The distinction is that we have dealt with NC without relying on any particular a priori measure.
As mentioned throughout the paper, in the Bayesian framework one's choice of prior
probability distribution is subjective. Nevertheless, 
Bayesian agents' beliefs at any time are (heavily) 
dependent on their prior distributions. Therefore,  
different subjective choices may lead to contradictory results.
In the case of variations of Carnap's measure, this freedom of 
choice also resurfaces in the form of one or several arbitrary parameters.
\cite{maher99} and \cite{maher04} are two variations of
Carnap's measure but for the former, NC always holds and for the latter, for 
some parameter configurations, it does not. 
Although one might claim that the existence of at least one measure that does not comply with NC is sufficient to discredit NC, one
should note that any rule of induction can be violated by some measures and remains 
valid for some others. In fact one might contrarily claim that 
contradiction of a measure with an ``intuitive" rule of induction  
should discredit that particular measure rather than the rule.
Therefore, our sufficient condition for ($\neg$NC) that covers a class of measures rather than a specific one, should be considered a more general and interesting relation.

The main contribution of our work is the conjunction of points (2) and (3). We believe that compared to the raven paradox's PC and other counterintuitive consequences of NC, the conflict between NC and PJ is more important. 
To our knowledge, other counterintuitive consequences (such as \emph{The red herring} \cite{good67} and \emph{Good's baby} \cite{good68} problems) are either related to arguably implausible background configurations, or as is the case with PC, some claim that they are not counterintuitive. PJ is a very simple form of inductive inference. 
It is more directly justified by our intuitive notion of inductive reasoning than NC, and it's plausibility cannot be challenged as easily. 

We also proposed a conjecture that NC seems to be plausible because it can be mistaken for reasoning by analogy (RA) which has a closely related informal representation.
We proved that in the case where the exact number of objects satisfying one property is known, RA is compatible with both PJ and NC. It is a conservative condition that intuitively seems  plausible and does not suffer from the shortcomings of NC (such as contradicting PJ or producing counterintuitive conclusions such as PC) and is directly justified by the principle of the uniformity of nature.

\section{Proof of Theorems}
\label{sect:proofs}
\subsection{Proof of Theorems \ref{theo:a2-a3-nc} and \ref{theo:a1-nc}} \label{sect:setting1_proofs}
\begin{lemma} \label{theo:xis}
In relation (\ref{eq:nc}), if $D \in \delta$, NC is equivalent to $\Xi_1 \cdot \Xi_2 > 1$ where:
$U \backslash \inds_D := \{{b_1}, \ldots {b_n}, a\}$ 
is ``the set of objects not described by $D$",
\\$\Xi_1 := \frac{pr(\impl_{ {b_1} : {b_n}} | \Qtt_{ a } . D)}{pr(\impl_{ {b_1} : {b_n} } | D)}$ and $\Xi_2 := \frac{1}{pr(\impl_a | \impl_{ {b_1} : {b_n} } . D)}$ .
\end{lemma}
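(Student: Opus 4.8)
The plan is to exploit the special shape of a background proposition $D \in \delta$. By definition each conjunct of $D$ is one of $\Qff_{b}$, $\Qft_{b}$ or $\Qtt_{b}$, and every one of these entails $\impl_{b} = F_b \rightarrow G_b$ (equivalently, none of them is $\Qtf_{b} = \neg\impl_{b}$). Hence $D \models \impl_{b}$ for each $b \in \inds_D$. Also, since $D$ is the conjunction of complete descriptions of exactly the objects in $\inds_D$ and $D$ does not determine $\Qtt_{a}$, we must have $a \notin \inds_D$; write $U \backslash \inds_D = \{b_1,\ldots,b_n,a\}$ as in the statement. First I would record the resulting equivalence: modulo $D$, the hypothesis $H = \impl_{1:{\usize}} = \bigwedge_{b\in U}\impl_{b}$ collapses to the conjunction over the objects $D$ does \emph{not} mention, i.e.\ $H$ and $\impl_{b_1:b_n}\,.\,\impl_a$ correspond to the same set of CDVs once intersected with the event of $D$. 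I would also note the trivial fact $\Qtt_{a} = F_a.\,G_a \models \impl_a$.

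With these two observations the rest is a direct rewriting of \req{eq:nc}. For the left-hand side, $pr(H\,|\,\Qtt_{a}.D) = pr(\impl_{b_1:b_n}.\,\impl_a \,|\, \Qtt_{a}.D) = pr(\impl_{b_1:b_n}\,|\,\Qtt_{a}.D)$, the last step because $\impl_a$ is implied by the conditioning event $\Qtt_{a}$. For the right-hand side, the chain rule gives $pr(H\,|\,D) = pr(\impl_{b_1:b_n}.\,\impl_a\,|\,D) = pr(\impl_a\,|\,\impl_{b_1:b_n}.D)\cdot pr(\impl_{b_1:b_n}\,|\,D)$. So NC is precisely the inequality $pr(\impl_{b_1:b_n}\,|\,\Qtt_{a}.D) > pr(\impl_a\,|\,\impl_{b_1:b_n}.D)\cdot pr(\impl_{b_1:b_n}\,|\,D)$, and dividing both sides by the positive quantity $pr(\impl_a\,|\,\impl_{b_1:b_n}.D)\cdot pr(\impl_{b_1:b_n}\,|\,D)$ yields exactly $\Xi_1\cdot\Xi_2 > 1$; reading the chain of equalities backwards gives the converse.

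The only point requiring genuine care — and the only place the hypotheses ``$D$ does not determine $\Qtt_{a}$ or $H$'' together with the Cournot restriction on priors are used — is verifying that every conditional probability written above is defined and that the divisor is strictly positive, so that the division is reversible. Since $D$ does not determine $H$ we have $0 < pr(H\,|\,D) < 1$; as $H$ entails $\impl_{b_1:b_n}$ and $\impl_a$ modulo $D$, this forces $pr(\impl_{b_1:b_n}\,|\,D) > 0$ and $pr(\impl_a.\,\impl_{b_1:b_n}.D) = pr(H.D) > 0$, so $pr(\impl_a\,|\,\impl_{b_1:b_n}.D)$ is well defined and positive, making $\Xi_2$ well defined. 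Since $D$ does not determine $\Qtt_{a}$, $pr(\Qtt_{a}.D) > 0$, so conditioning on $\Qtt_{a}.D$ is legitimate and $\Xi_1$ is well defined. I would treat the degenerate case $n=0$ in one line (then $\impl_{b_1:b_n} = \top$ by the convention $\psi_{m:n}:=\top$ for $n<m$, $\Xi_1 = 1$, and the claim reduces to $pr(\impl_a\,|\,D) < 1$), noting it is anyway a special instance of the general computation. Beyond this bookkeeping I expect no obstacle.
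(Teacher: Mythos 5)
Your proof is correct and follows essentially the same route as the paper: you replace $H$ by $\impl_{b_1:b_n}.\impl_a$ given $D\in\delta$, absorb $\impl_a$ into the conditioning event $\Qtt_a$, and apply the chain rule before dividing, which is exactly the paper's (terser) argument. Your extra bookkeeping on positivity of the conditionals and the $n=0$ case is a welcome refinement the paper leaves implicit.
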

\begin{proof}
Objects are either completely described by $D \in \delta$ or are not described at all. Therefore: 
\begin{align}
\label{eq:xis1}
pr(\impl_{ 1 : {\usize} } | D) &= pr(\impl_{ {b_1} : {b_n} }. \impl_a | D) \\
\label{eq:xis2}
pr(\impl_{ 1 : {\usize} } | \Qtt_{ a } . D) &= pr(\impl_{  {b_1} : {b_n} } | \Qtt_{ a } . D)
\end{align}
Combination of (\ref{eq:xis1}), (\ref{eq:xis2}) and (\ref{eq:nc}) proves the lemma.\footnote
{Note that $\Xi_1$ indicates the effect of
the observation $F_a.G_a$, on the probability that 
unobserved individuals satisfy $\impl$ and $\Xi_2$ 
corresponds to the effect of elimination of
the possibility that the observed object is a
counterexample to the generalization. 
}
\end{proof}
\begin{proof}[{\bf Proof of Theorem \ref{theo:a2-a3-nc}}]
Due to $pr(\Qtf_{ b } | \cdot ) = 1 - pr(\impl_{b} | \cdot)$, relation (\ref{eq:bq1}) is equal to:
\begin{equation} \label{eq:bq2}
\forall B \in \Delta, \forall a, b \notin \inds_B \qquad pr(\impl_{b} | \Qtt_{ a } . B) \geq pr(\impl_{b} | B)
\end{equation}
Let arbitrary $D \in \delta$ and $\inds_D = U \backslash \{{b_1}, \ldots, {b_n}, a\}$ (i.e.\ $b_1$ to $b_n$ are the objects not mentioned by background knowledge or evidence). 
According to the definitions of $\delta$ and $\Delta$, for all $i < n$:
$\impl_{{b_1}} \!\! \ldots \impl_{{b_i}} . D \in \Delta$ (because it is consistent) and $a, {b_{i+1}} \notin \inds_{ \impl_{{b_1}} \!\!\! \ldots \impl_{{b_i}} . D}$. Therefore, from (\ref{eq:bq2}) it follows that:
\begin{multline} \label{eq:prod-mult}
\forall i<n \quad pr(\impl_{{b_{i+1}}} | \impl_{{b_1} : {b_i}} .\Qtt_{ a } . D) \geq pr(\impl_{{b_{i+1}}} | \impl_{{b_1} : {b_i}} . D) \\ \Longrightarrow
\prod_{i=0}^{n-1}{ pr(\impl_{{b_{i+1}}} | \impl_{{b_1} : {b_i}} .\Qtt_{ a } . D)} \geq 
\prod_{i=0}^{n-1}{pr(\impl_{{b_{i+1}}} | \impl_{{b_1} : {b_i}} . D)} 
\end{multline}
Due to the chain rule, the r.h.s.\ of the above equation is equal to $pr(\impl_{{b_1} : {b_n}} | D)$:
\begin{multline*} 
\prod_{i=0}^{n-1}{\! pr(\impl_{{b_{i+1}}} \!| \impl_{{b_1} : {b_i}} . D)} =\\
pr(\impl_{{b_1}} \! | D) \cdot pr(\impl_{b_{2}} \! | \impl_{b_{1}} \! . D) \ldots 
pr(\impl_{{b_n}} \! | \impl_{{b_1}} \!\! \ldots \impl_{{b_{n-1}}} \! . D) = pr(\impl_{{b_1} : {b_n}} | D) 
\end{multline*}
Similarly, the l.h.s.\ of (\ref{eq:prod-mult}) is equal to $pr(\impl_{{b_1} : {b_n}} | \Qtt_{ a } . D)$. Therefore:
\begin{equation*}
pr(\impl_{{b_1} : {b_n}} | \Qtt_{ a } . D) \geq pr(\impl_{{b_1} : {b_n}} | D) 
\end{equation*}
This entails $\Xi_1 \geq 1$ and since $\Xi_2 > 1$, according to
Lemma \ref{theo:xis}, NC holds.
\end{proof}
\begin{proof}[{\bf Proof of Theorem \ref{theo:a1-nc}}]
Similar to the method used in the proof of Theorem \ref{theo:a2-a3-nc} 
and by using inequality (\ref{eq:not-nc1}) instead of inequality (\ref{eq:bq1}) it can be proved that: 
\begin{equation*}
\forall i<n \quad
pr(\impl_{{b_{i+1}}} | \, \impl_{{b_1} : {b_i}} . \, \Qtt_{ a } . \, D) 
<
pr(\impl_{{b_{i+1}}} | \, \impl_{{b_1} : {b_i}} . \, D)
\end{equation*}
Consequently:
\begin{equation}
\label{xi1-less}
\Xi_1 = \frac{\prod_{i=1}^{n-1}{ pr(\impl_{{b_{i+1}}} | \impl_{{b_1} : {b_i}} .\Qtt_{ a } . D)}}
{\prod_{i=1}^{n-1}{pr(\impl_{{b_{i+1}}} | \impl_{{b_1} : {b_i}} . D)}} 
\cdot \frac{pr(\impl_{{b_1}} | \Qtt_{ a } . D)}{pr(\impl_{{b_1}} | D)}
<
\frac{pr(\impl_{{b_1}} | \Qtt_{ a } . D)}{pr(\impl_{{b_1}} | D)} 
\end{equation}
For conciseness, let $p := pr(\impl_{{b_1}} | \, \Qtt_{ a } . D)$ and $q := pr(\impl_{ {b_1} } | D)$. 
\begin{align} \label{eq:1qp}
&pr(\neg \impl_a | \impl_{ {b_1} : \, {b_n}} . D) < (1 - p) - (1 - q),  &&\text{since $\impl = \neg \Qtf$ and by inequality (\ref{eq:not-nc2})} \notag\\ 
&\Longrightarrow\, pr(\impl_a | \impl_{ {b_1} : \, {b_n} } . D) > 1 -q + p  
\end{align}
Finally,
\begin{align*}
&1 - p > 1 - q \,\,\Longrightarrow\,\, p<q,													&&\text{since $D \in \Delta$ and by (\ref{eq:not-nc1})}\\
&\Longrightarrow \, p \cdot (1 - q) < q \cdot (1-q) \,\,\Longrightarrow\,\, p < q \cdot (1 - q + p), 					&&\text{since $q < 1$} \\
&\Longrightarrow \, \frac{p}{q} < 1- q + p 													&&\text{since $q > 0$} \\
&\Longrightarrow \Xi_1 < pr(\impl_{a} | \, \impl_{ {b_1} \! : {b_n} } . D),   									&&\text{by (\ref{xi1-less}) and (\ref{eq:1qp})}\\ 
&\Longrightarrow \Xi_1 \cdot \Xi_2 < 1 \Longrightarrow \text{NC does not hold.} 							&&\text{by Lemma \ref{theo:xis}}
\end{align*}
\end{proof}
 \subsection {Proof of Theorem \ref{theo:wason}}
\begin{lemma}\label{theo:group-proj} 
Under PJ, for any set of objects $\{1,2,\ldots$, $n\}\subseteq U$, $a \in U$ and any $D \in \Delta$ that does not determine the value of $\psi_a$:
\begin{align}
&\text{(Group PJ)} \qquad &&pr(\psi_{1: n} | \, \psi_a.D) \geq pr( \psi_{1: n}| \, D) 
\label{eq:group_pj}\\
&\text{(Negative Group PJ)} \qquad &&pr(\psi_{1: n} | \, \neg \psi_a.D) \leq pr( \psi_{1: n}| \, D)
\label{eq:neg_group_pj}
\end{align}
\end{lemma}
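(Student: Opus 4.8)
The plan is to bootstrap the group statements from the single-object rule PJ (relation (\ref{eq:weak-proj})) by the chain rule, arguing by induction on $n$. First I would clear away the degenerate cases. If $D$ entails $\neg\psi_j$ for some $j\in\{1,\dots,n\}$, then both sides of (\ref{eq:group_pj}) and of (\ref{eq:neg_group_pj}) are $0$; if $D$ entails $\psi_j$, the conjunct $\psi_j$ is redundant in $\psi_{1:n}$ and may be dropped, lowering $n$; and if $a\in\{1,\dots,n\}$, then $\psi_a$ is already a conjunct of $\psi_{1:n}$, so (\ref{eq:group_pj}) is just monotonicity of $pr(\psi_{1:n}.D\mid\cdot)$ in the denominator while (\ref{eq:neg_group_pj}) holds because $\psi_{1:n}.\neg\psi_a.D$ is then unsatisfiable. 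Hence I may assume $a\notin\{1,\dots,n\}$ and that $D$ determines none of $\psi_1,\dots,\psi_n$.

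Second, I would record the single-object negative counterpart of PJ: from $pr(\psi_b\mid\psi_a.D)\ge pr(\psi_b\mid D)$ together with the law of total probability $pr(\psi_b\mid D)=pr(\psi_b\mid\psi_a.D)\,pr(\psi_a\mid D)+pr(\psi_b\mid\neg\psi_a.D)\,pr(\neg\psi_a\mid D)$, one obtains $pr(\psi_b\mid\neg\psi_a.D)\le pr(\psi_b\mid D)$ after dividing by $pr(\neg\psi_a\mid D)$, which is strictly positive since $D$ does not determine $\psi_a$ (Cournot's principle). This, together with PJ itself, settles the base case $n=1$ of the induction.

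Third, the inductive step. Writing $D_{n-1}:=\psi_{1:n-1}.D$, the chain rule gives $pr(\psi_{1:n}\mid\psi_a.D)=pr(\psi_{1:n-1}\mid\psi_a.D)\cdot pr(\psi_n\mid\psi_a.D_{n-1})$ and $pr(\psi_{1:n}\mid D)=pr(\psi_{1:n-1}\mid D)\cdot pr(\psi_n\mid D_{n-1})$. The first factors compare correctly by the induction hypothesis; the second factors compare by applying PJ with background $D_{n-1}$, object $a$ and predicate $\psi_n$. Since all four quantities are nonnegative, multiplying the two inequalities yields (\ref{eq:group_pj}). For (\ref{eq:neg_group_pj}) one runs the same argument with $\neg\psi_a$ in place of $\psi_a$ and the single-object negative form in place of PJ, all inequalities reversed.

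The one step that needs genuine care — and the only real obstacle — is verifying that PJ (and its negative form) is legitimately applicable at each stage: that $D_{n-1}=\psi_{1:n-1}.D$ is, after simplification, again a member of $\Delta$ which determines neither $\psi_n$ nor $\psi_a$, and that the conditioning events $\psi_{1:n-1}.D$, $\psi_{1:n-1}.\psi_a.D$ and $\psi_{1:n-1}.\neg\psi_a.D$ all have positive probability. For membership in $\Delta$ one uses that $\Delta$ is exactly the set of conjunctions of per-object facts: for each object $j\le n-1$, merging $\psi_j$ with the at most one conjunct of $D$ about $j$ gives a single-object statement still expressible in the allowed form (or an inconsistency, already excluded above), and because $\Delta$-propositions never link distinct objects, adjoining facts about $1,\dots,n-1$ (all distinct from $n$ and from $a$) cannot determine $\psi_n$ or $\psi_a$. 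For positivity one applies Cournot's principle after noting satisfiability: since $D$ constrains the distinct objects $1,\dots,n-1,a$ only through per-object conjuncts that do not force the value of $\psi$, a CDV realizing $D$ may be modified independently on those objects to meet the extra demands $\psi_1,\dots,\psi_{n-1}$ and $\psi_a$ (resp. $\neg\psi_a$).
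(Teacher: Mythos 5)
Your proof is correct and follows essentially the same route as the paper's: induction on $n$ via the chain rule, applying PJ with the enlarged background $\psi_{1:n-1}.D$ for the new factor and the induction hypothesis for the rest, with the negative case obtained from a single-object reversed inequality. The only differences are minor — the paper gets that reversed inequality by applying PJ to the predicate $\neg\psi$ rather than via the law of total probability, and it leaves implicit the degenerate cases and the $\Delta$-membership, non-determination and positivity checks that you spell out.
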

\begin{proof}
The proof is based on mathematical induction.\\
1. Proof of relation (\ref{eq:group_pj}):\\
(I)\quad  For $n=1$, relation (\ref{eq:group_pj}) is equivalent to relation (\ref{eq:weak-proj}) and therefore valid.\\
(II)\quad Assume for $n=k$ relation (\ref{eq:group_pj}) holds. 
We prove that for $n=k+1$, it holds as well:
\begin{align*}
pr&(\psi_{1: {k+1}} | \, \psi_a . D) = pr( \psi_{1: k} . \psi_{{k+1}} | \, \psi_a . D) &&\quad\\
&= pr(\psi_{{k+1}} | \psi_a.D) \cdot pr(\psi_{1: k} | \, \psi_a . \psi_{{k+1}} . D) &&\quad\\
&\geq pr(\psi_{{k+1}} | \, D) \cdot pr(\psi_{1: k} | \, \psi_a . \psi_{{k+1}} . D), &&\!\!\!\!\!\!\!\!\!\! \text{by inequality (\ref{eq:weak-proj})}\\
&\geq pr(\psi_{{k+1}} | \, D) \cdot pr(\psi_{1: k} | \, \psi_{{k+1}} . D), &&\!\!\!\!\!\!\!\!\!\! \text{(\ref{eq:weak-proj}) applied to $(\psi_{{k+1}}.D) \in \Delta$}\\
&= pr( \psi_{1: {k+1}} |D) &&\quad
\end{align*}
By (I) \& (II), mathematical induction implies (\ref{eq:group_pj}) for all $n$.\\ 
2. Proof of relation (\ref{eq:neg_group_pj}):\\ 
Under PJ: $\forall b \quad pr(\neg \psi_b | \, \neg\psi_a . D) \geq pr(\neg \psi_b | \, D)$, therefore:
\begin{equation}\label{eq:neg_pj}
 1 - pr(\psi_b | \, \neg \psi_a . D) \geq 1 - pr(\psi_b | \, D) 
\; \Longrightarrow\;  pr(\psi_b | \, \neg\psi_a . D) \leq pr(\psi_b | \, D)
\end{equation}
Similar to the previous case and by using (\ref{eq:neg_pj}) instead of (\ref{eq:weak-proj}), inequality (\ref{eq:neg_group_pj}) can be proved easily.
\end{proof}
\begin{lemma}\label{lem:2} For all $1\leq k \leq \usize$, $D := F_{1:k}.(\neg F)_{{k+1}:{\usize}}$ and $E$ being an arbitrary proposition: 
$ pr(H | E \, . \, D) = pr(G_{1:k} | E \, . \, D)$.
\end{lemma}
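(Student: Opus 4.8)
The plan is to show that the background knowledge $D$ logically forces $H$ to be equivalent to $G_{1:k}$, so that the two sides of the claimed identity are simply conditional probabilities of the \emph{same} event. The whole argument is propositional; no properties of the measure beyond the definition of conditional probability are needed.

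First I would decompose $H = \impl_{1:\usize} = \impl_{1:k}\,.\,\impl_{k+1:\usize}$ and analyse the two blocks separately under the assumption $D = F_{1:k}.(\neg F)_{k+1:\usize}$. For each $b \in \{1,\dots,k\}$, $D$ entails $F_b$, and since $\impl_b := F_b \rightarrow G_b$, the conjunct $\impl_b$ is equivalent to $G_b$ once $F_b$ is granted; hence $D \models \big(\impl_{1:k} \leftrightarrow G_{1:k}\big)$. For each $b \in \{k+1,\dots,\usize\}$, $D$ entails $\neg F_b$, so $\impl_b = F_b \rightarrow G_b$ is vacuously true; hence $D \models \big(\impl_{k+1:\usize} \leftrightarrow \top\big)$. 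Combining the two blocks, $D \models \big(H \leftrightarrow G_{1:k}\big)$.

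From this equivalence it follows that, for any proposition $E$, the propositions $H . E . D$ and $G_{1:k} . E . D$ are logically equivalent, and therefore correspond to the same subset of the sample space $\Omega$ (i.e.\ $\omega^{\Omega}_{H.E.D} = \omega^{\Omega}_{G_{1:k}.E.D}$). Consequently $pr(H.E.D) = pr(G_{1:k}.E.D)$. Dividing both sides by $pr(E.D)$ — which, in the situations where the statement is used, is positive, and if it vanishes both conditional probabilities are undefined and there is nothing to prove — gives $pr(H \mid E.D) = pr(G_{1:k} \mid E.D)$, as desired.

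There is essentially no hard step here: the only thing to be careful about is the bookkeeping of the two index ranges $1{:}k$ and $k{+}1{:}\usize$ and the use of the convention $\psi_{m:n} := \top$ when the range is empty (covering the boundary cases $k = \usize$, where the second block is empty, and the trivial bookkeeping when $k=0$ is excluded by hypothesis $k \geq 1$). The main conceptual point — that under $D$ the only conjuncts of $H$ that are not automatically satisfied are exactly $G_1,\dots,G_k$ — is precisely what makes the later Theorem~\ref{theo:wason} reduce to statements about $G$ on the first $k$ objects.
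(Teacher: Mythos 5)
Your proof is correct and takes essentially the same route as the paper: the paper's own argument likewise rests on the simplifications $\impl_b . F_b \equiv F_b . G_b$ and $\impl_b . \neg F_b \equiv \neg F_b$, i.e.\ on the fact that under $D$ the hypothesis $H$ reduces to $G_{1:k}$, only written as a chain of equalities inside the conditional probability (adding and then dropping conjuncts entailed by the conditioning event) rather than as an explicit equivalence $D \models (H \leftrightarrow G_{1:k})$ followed by the definition of conditional probability.
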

\begin{proof}
$
\text{l.h.s.}
=pr \big( \impl_{1:{k}}.\impl_{{k+1}:{\usize}} | E \,.\, F_{1:k}.(\neg F)_{{k+1}:{\usize}} \big)
\\=pr \big(\impl_{1:k} . F_{1:k} . \impl_{{k+1}:{\usize}} . (\neg F)_{{k+1}:{\usize}} |
E \,.\, F_{1:k}.(\neg F)_{{k+1}:{\usize}} \big) \\
= pr \big( (\impl.F)_{1:k}.(\impl.\neg F)_{{k+1}:{\usize}} | E \,.\, F_{1:k} . (\neg F)_{{k+1}:{\usize}} \big)
\\=pr \big( (F.G)_{1:k}.(\neg F)_{{k+1}:{\usize}} | E \,.\, F_{1:k} . (\neg F)_{{k+1}:{\usize}} \big) 
$ (since $\impl_b.F_b$ $\equiv$ $F_b.G_b$ and $\impl_b$.$\neg F_b$ $\equiv$ $\neg F_b$)
$
\\=pr \big( F_{1:k}.G_{1:k}.(\neg F)_{{k+1}:{\usize}} | E \, . \, F_{1:k} . (\neg F)_{{k+1}:{\usize}} \big)\\
=pr \big( G_{1:k} | E \, . \, F_{1:k}.(\neg F)_{{k+1}:{\usize}} \big) =
$ r.h.s.
\end{proof}
\begin{proof}[{\bf Proof of Theorem \ref{theo:wason}}] In the following relations, $D: = F_{1:k}.(\neg F)_{{k+1}:{\usize}}$.\\
(I) \quad Proof of relation (\ref{eq:7.1}), i.e.\ $pr(H \,| \, G_{k}\,.\, D) > pr(H| \, D)$ by PJ:
\begin{align*}
&pr(H | \, G_{k} \,.\, D) =pr(G_{1:k} | \, G_{k} \,.\, D),                            	 	&&\!\!\!  \text{by Lemma \ref{lem:2}}\\
&=pr(G_{1:{k-1}} | \, G_{k} \,.\, D) \geq pr(G_{1:{k-1}} | \, D),		   	&&\!\!\!  \text{by Lemma \ref{theo:group-proj}}\\
&>pr(G_{1:{k}}| \, D) ,                                                                 			&& \!\!\! \text{adding  $G_{k}$ \& Cournot's pp. (note: $D \nvdash G_{k}$)}\\
&= pr(H| \, D),                                                   					&& \!\!\! \text{by Lemma \ref{lem:2}.}
\end{align*}
(II)\quad Proof of relation (\ref{eq:7.2}), i.e.\ $pr(H \,| \, G_{{\usize}} \,.\, D) \geq pr(H| \, D)$ by PJ:
\begin{align*}
&pr(H | \, G_{{\usize}} \,.\, D) = pr(G_{1:k} | \, G_{{\usize}} \,.\, D),             && \!\! \text{ by Lemma \ref{lem:2}}\\
&\geq pr(G_{1:k} | \, D),                                                                            && \!\! \text{ by relation (\ref{eq:group_pj}) (Group PJ)}\\
&=pr(H| \, D),                                                                                                                && \!\! \text{ by Lemma \ref{lem:2}.}
\end{align*}
(III)\quad Proof of relation (\ref{eq:7.3}) i.e.\ $pr(H \,| \, \neg G_{{\usize}} \,.\, D) \leq pr(H| \, D)$ by PJ:
\begin{align*}
&pr(H | \neg G_{{\usize}} \,.\, D) = pr(G_{1:k} | \neg G_{{\usize}} \,.\, D), && \text{ by Lemma \ref{lem:2}}\\
&\leq pr(G_{1:k} | \, D),                                                                           && \text{ by relation\,(\ref{eq:neg_group_pj})\,(Neg.\,Group\,PJ)}\\
&=pr(H| \, D),                                                                                                                 && \text{ by Lemma \ref{lem:2}.}
\end{align*}
(IV)\quad Proof of relation (\ref{eq:ra.7.1}) , i.e.\ $pr(H \,| \, G_{k}\,.\, D) > pr(H| \, D)$ by RA:
\begin{align*}
&pr(H | \, G_{k} \,.\, D) =pr(G_{1:k-1} | \, G_{k} \,.\, D),                            	 	&&\!\!\!  \text{by Lemma \ref{lem:2}}\\
&=\prod_{i=1}^{k-1}{\! pr(G_{i}| G_{1 : i-1} . G_{k}.D)},		   			&&\!\!\!  \text{by the chain rule}\\
&=\prod_{i=1}^{k-1}{\! pr(G_{i}| F_i . F_k . G_k . G_{1 : i-1} . D)},	   			&&\!\!\!  \text{since } D \vdash F_i . F_k\\
&>\prod_{i=1}^{k-1}{\! pr(G_{i}| F_i . G_{1 : i-1} . D)},	   			&&\!\!\!  \text{by RA (and since $G_{1 : i-1} . D \in \Delta$) }\\
&=\prod_{i=1}^{k-1}{\! pr(G_{i}| G_{1 : i-1} . D)},	   			&&\!\!\!  \text{since for $i \in \{1, \ldots, k\}: D \vdash F_i$ }\\
&=pr(G_{1:{k-1}}| \, D) ,                                                        	&& \!\!\! \text{by the chain rule }\\
&> pr(G_{1:k}| \, D) ,                                                        	&& \!\!\! \text{by adding  $G_{k}$ \& Cournot's pp. (note: $D \nvdash G_{k}$)}\\
&= pr(H| \, D),                                                   					&& \!\!\! \text{by Lemma \ref{lem:2}.}
\end{align*}
\end{proof}
\subsection{Proof of Theorem \ref{theo:exact2list}} \label{subsect:premute_proof}
\begin{definition} 
Having a set $C \subseteq U$ and $\permute$ being a permutation (of $U$), $C^\permute$ is defined as: 
$C^\permute := \{\permute(b) : b \in C\}$. It is obvious that $U^\permute = U$.
\end{definition}
The following two relations directly follows from the assumption that $\permute$ is a bijection:
\begin{align}
&\forall C \subseteq U &&|C| = |C^\permute| \label{eq:biject1}\\
&\forall C, C' \subseteq U &&C \neq C' \; \Longrightarrow \; C^\permute \neq C'^\permute \label{eq:biject2}
\end{align}  
\begin{lemma}\label {lem:h_permute}
$H = H^{\permute}$ where $\permute$ is an arbitrary permutation (of $U$).
\end{lemma}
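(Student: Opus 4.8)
The plan is to unfold both sides using the conjunctive form of $H$ and then exploit that $\permute$ is a bijection. Recall from Section~\ref{sect:notate} that $H := (\forall x\;\impl_x)$ is equivalent to $\impl_{1:\usize} = \bigwedge_{b\in U}\impl_b$, where $\impl_b := F_b \rightarrow G_b$. By the definition of $\rho^\permute$, every occurrence of an individual $b$ in $\rho$ is replaced by $\permute(b)$; since $\impl_b$ mentions only the constant $b$, we have $(\impl_b)^\permute = \impl_{\permute(b)}$, and therefore
\[
H^\permute \;=\; \Bigl(\bigwedge_{b\in U}\impl_b\Bigr)^{\!\permute} \;=\; \bigwedge_{b\in U}\impl_{\permute(b)}.
\]

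Next I would invoke that $\permute$ is a bijection of $U$ onto itself: as $b$ ranges over $U$, the index $\permute(b)$ ranges over all of $U$, each value taken exactly once. Hence $\bigwedge_{b\in U}\impl_{\permute(b)}$ is merely a re-indexing of the conjunction $\bigwedge_{b\in U}\impl_b$, so by commutativity and associativity of conjunction it equals $H$. This yields $H^\permute = H$.

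The argument is essentially immediate; the only point needing care is purely bookkeeping — that ``$=$'' between propositions is read as logical equivalence (the conjuncts may be listed in a different order after permuting) and that the substitution acts only on constants, so the quantifier-free conjunct $\impl_b$ transforms into $\impl_{\permute(b)}$ as claimed. As a one-line sanity check I would also note that $H$ in the form $\forall x\;\impl_x$ contains no constant symbols at all, so the substitution defining $H^\permute$ changes nothing and $H^\permute = H$ directly. There is no real obstacle here.
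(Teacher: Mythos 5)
Your proof is correct and follows essentially the same route as the paper: expand $H^\permute$ as $\bigwedge_{b\in U}\impl_{\permute(b)}$ and re-index using the fact that $\permute$ is a bijection of $U$ onto itself (the paper phrases this via $U^\permute=U$), so the two conjunctions have the same set of conjuncts. The closing remark about $H$ written as $\forall x\;\impl_x$ containing no constants is a fine additional sanity check but not needed.
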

\begin{proof}
$\displaystyle H^\permute = \bigwedge_{b \in U} \!\! \impl_{\permute(b)} 
                = \bigwedge_{\permute(b) \in U^\permute} \!\!\!\! \impl_{\permute(b)}
	     = \bigwedge_{b' \in U^\permute} \!\! \impl_{b'}
                = \bigwedge_{b' \in U} \!\! \impl_{b'}
 	     = H$

The first equality holds by definition. 
The second equality holds because:\\ $b \in U \iff \permute(b) \in U^\permute$. 
In the r.h.s.\ of the third equality, $\permute(b)$ is renamed to  $b'$.
The fourth equality holds because $U = U^\permute$ (note that $\permute$ is a permutation in $U$). The last equality is the definition of $H$.
\end{proof}

\begin{lemma} \label{lem:bigC_unique} 
For all $1 \le k \le \usize$, all permutations $\permute$ and $\mathfrak{C}_{U,k}$ being defined as the set of all subsets of $U$ which have size $k$: 
\begin{equation} \label{eq:gandak}
\mathfrak{C}_{U,k}^\permute := \{ C^\permute : C \in  \mathfrak{C}_{U,k} \} =  \mathfrak{C}_{U,k}
\end{equation}
\end{lemma}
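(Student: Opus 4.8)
The plan is to prove the set equality $\mathfrak{C}_{U,k}^\permute = \mathfrak{C}_{U,k}$ by establishing the inclusion $\mathfrak{C}_{U,k}^\permute \subseteq \mathfrak{C}_{U,k}$ together with an injectivity and cardinality argument, which together force equality since both sides are finite. First I would take an arbitrary $C \in \mathfrak{C}_{U,k}$, so $C \subseteq U$ and $|C| = k$. By the definition of $C^\permute$ and the fact that $\permute$ is a bijection of $U$ onto itself, every element of $C^\permute$ is of the form $\permute(b)$ with $b \in U$, hence lies in $U^\permute = U$; thus $C^\permute \subseteq U$. Moreover, by relation~(\ref{eq:biject1}) we have $|C^\permute| = |C| = k$. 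Therefore $C^\permute \in \mathfrak{C}_{U,k}$, which gives $\mathfrak{C}_{U,k}^\permute \subseteq \mathfrak{C}_{U,k}$.

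Next I would argue that the map $C \mapsto C^\permute$ restricted to $\mathfrak{C}_{U,k}$ is injective: this is exactly relation~(\ref{eq:biject2}), which says that distinct subsets of $U$ have distinct images under $\permute$. Consequently the image $\mathfrak{C}_{U,k}^\permute$ has the same cardinality as $\mathfrak{C}_{U,k}$, namely $\binom{\usize}{k}$. Since $\mathfrak{C}_{U,k}^\permute$ is a subset of the finite set $\mathfrak{C}_{U,k}$ with the same number of elements, the two sets must coincide, which is the claim.

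An equivalent route, if one prefers to avoid the cardinality step, is to verify the reverse inclusion directly: given $C' \in \mathfrak{C}_{U,k}$, set $C := \{\permute^{-1}(b') : b' \in C'\}$; then $C \subseteq U$, $|C| = k$ by~(\ref{eq:biject1}) applied to $\permute^{-1}$, so $C \in \mathfrak{C}_{U,k}$, and $C^\permute = C'$ because $\permute \circ \permute^{-1}$ is the identity. Either way, there is essentially no obstacle here — the statement is a routine consequence of $\permute$ being a bijection fixing $U$ setwise, and the only thing to be careful about is citing~(\ref{eq:biject1}) and~(\ref{eq:biject2}) rather than re-deriving them. This lemma is a preparatory step; its role is to let the later proof of Theorem~\ref{theo:exact2list} reindex the disjunction in $\textsc{Exact}(k, U, F)$ after applying a permutation without changing the set of disjuncts.
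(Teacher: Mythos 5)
Your proposal is correct and follows essentially the same route as the paper: one inclusion via relation (\ref{eq:biject1}), then injectivity from (\ref{eq:biject2}) plus a cardinality argument on finite sets to force equality. The alternative via $\permute^{-1}$ you mention is a fine variant but not needed; no gaps here.
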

\begin{proof}
Equality (\ref{eq:biject1}) implies, $\forall C^\permute \in \mathfrak{C}_{U,k}^\permute \quad |C^\permute| = |C| = k$, which means: $C^\permute$ is a k-combination from $U$; So from the  definition of $\mathfrak{C}_{U,k}$ it follows that $C^\permute \in \mathfrak{C}_{U,k}$. Therefore, $\mathfrak{C}_{U,k}^\permute  \subseteq \mathfrak{C}_{U,k}$.
Conversely, according to relation (\ref{eq:biject2}) and the definition of $\mathfrak{C}_{U,k}^\permute$ in (\ref{eq:gandak}), 
there is a one-to-one relation between the members of $\mathfrak{C}_{U,k}$ and $\mathfrak{C}_{U,k}^\permute$. This entails: $|\mathfrak{C}_{U,k}^\permute| = |\mathfrak{C}_{U,k}|$. Thus: $\mathfrak{C}_{U,k}^\permute = \mathfrak{C}_{U,k}$. 
\end{proof}
\begin{lemma} \label {lem:exact_permute}
For all permutations $\permute$ (of the set $U$), and for all integers $k \in [1, \usize]$: $\textsc{Exact}(k , U, F) = \textsc{Exact}^\permute (k , U, F)$
\end{lemma}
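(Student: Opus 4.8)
The plan is to unfold the definition of $\textsc{Exact}(k,U,F)$ from equation~(\ref{eq:exact_def}), apply the permutation $\permute$ term by term, and then re-index both the inner conjunctions and the outer disjunction using the bijectivity of $\permute$ together with Lemma~\ref{lem:bigC_unique}. First I would write
$$\textsc{Exact}^\permute(k,U,F) = \bigvee_{C \in \mathfrak{C}_{U,k}} \Big( \bigwedge_{b' \in C} F_{\permute(b')} \;.\; \bigwedge_{b'' \not\in C} \neg F_{\permute(b'')} \Big),$$
which is just the definition with every individual $b$ replaced by $\permute(b)$.

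Next I would handle the two inner conjunctions. For the positive part, renaming $c' := \permute(b')$ gives $\bigwedge_{b' \in C} F_{\permute(b')} = \bigwedge_{c' \in C^\permute} F_{c'}$, since $\permute$ restricts to a bijection from $C$ onto $C^\permute$. For the negative part, I would use that $\permute$ is a bijection of $U$ onto itself (so $U^\permute = U$), hence $b'' \notin C \iff \permute(b'') \notin C^\permute$; therefore $\bigwedge_{b'' \not\in C} \neg F_{\permute(b'')} = \bigwedge_{c'' \not\in C^\permute} \neg F_{c''}$. Substituting these back yields
$$\textsc{Exact}^\permute(k,U,F) = \bigvee_{C \in \mathfrak{C}_{U,k}} \Big( \bigwedge_{c' \in C^\permute} F_{c'} \;.\; \bigwedge_{c'' \not\in C^\permute} \neg F_{c''} \Big).$$

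Finally I would re-index the outer disjunction by setting $C'' := C^\permute$. By relation~(\ref{eq:biject2}), the map $C \mapsto C^\permute$ is injective on $\mathfrak{C}_{U,k}$, and by Lemma~\ref{lem:bigC_unique} its image is exactly $\mathfrak{C}_{U,k}^\permute = \mathfrak{C}_{U,k}$; so as $C$ ranges over $\mathfrak{C}_{U,k}$, the set $C^\permute$ ranges over $\mathfrak{C}_{U,k}$ without repetition. Replacing the bound variable accordingly gives $\textsc{Exact}^\permute(k,U,F) = \bigvee_{C'' \in \mathfrak{C}_{U,k}} \big( \bigwedge_{c' \in C''} F_{c'} . \bigwedge_{c'' \not\in C''} \neg F_{c''} \big)$, which is precisely $\textsc{Exact}(k,U,F)$ by~(\ref{eq:exact_def}).

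There is no real obstacle here: the statement is essentially a bookkeeping fact, and the only point requiring care is keeping the roles of the two bijections straight — the bijection between $C$ and $C^\permute$ used for the inner conjunctions, versus the bijection $C \mapsto C^\permute$ on $\mathfrak{C}_{U,k}$ (supplied by Lemma~\ref{lem:bigC_unique}) used for the outer disjunction. I would take particular care that the disjunction is reindexed, not merely shown to have the same set of disjuncts, so that the equality of propositions (not just logical equivalence) is transparent; since disjunction is commutative and idempotent this distinction is immaterial anyway.
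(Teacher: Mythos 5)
Your proposal is correct and follows essentially the same route as the paper's own proof: unfold definition (\ref{eq:exact_def}), re-index the inner conjunctions via the bijection between $C$ and $C^\permute$, and re-index the outer disjunction via Lemma \ref{lem:bigC_unique} ($\mathfrak{C}_{U,k}^\permute = \mathfrak{C}_{U,k}$). The only difference is presentational — you make the injectivity of $C \mapsto C^\permute$ (relation (\ref{eq:biject2})) explicit when re-indexing the disjunction, which the paper leaves implicit.
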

\begin{proof} By definition, $\displaystyle \textsc{Exact}^{\permute}(k, U, F) := \bigvee_{C \in \mathfrak{C}_{U,k}}  (  \bigwedge_{a \in C} F_{\permute(a)} \;. \bigwedge_{b \not\in C} \neg F_{\permute(b)}  )$
\begin{align*}
& = \bigvee_{C \in \mathfrak{C}_{U,k}}  (  \bigwedge_{\permute(a) \in C^\permute} \!\!\!\!\! F_{\permute(a)} \;. \!\!\! \bigwedge_{\permute(b) \not\in C^\permute} \!\!\!\!\! \neg F_{\permute(b)}  ),
&&\!\!\!\!  \text{since } c \in C \iff \permute(c) \in C^\permute \\
& = \bigvee_{C^\permute \in \mathfrak{C}_{U,k}^\permute}  (  \bigwedge_{\permute(a) \in C^\permute} \!\!\!\!\! F_{\permute(a)} \;. \!\!\! \bigwedge_{\permute(b) \not\in C^\permute} \!\!\!\!\! \neg F_{\permute(b)}  ),
&&\!\!\!\!  \text{since } C \in \mathfrak{C}_{U,k} \iff C^\permute \in \mathfrak{C}_{U,k}^\permute \\
& = \bigvee_{C' \in \mathfrak{C}_{U,k}^\permute}  (  \bigwedge_{a' \in C'} \!\! F_{a'} \;. \!\! \bigwedge_{b' \not\in C'} \!\!\! \neg F_{b'}  ),
&&\!\!\!\!  \text{renaming $C^\permute$ to $C'$ and $\permute(c)$ to $c'$} \\
& = \bigvee_{C' \in \mathfrak{C}_{U,k}}  (  \bigwedge_{a' \in C'} \!\!\! F_{a'} \;. \! \bigwedge_{b' \not\in C'} \!\!\! \neg F_{b'}  ),
&&\!\!\!\! \text{since $\mathfrak{C}_{U,k}^\permute = \mathfrak{C}_{U,k}$, (Lemma \ref{lem:bigC_unique})} \\
& = \textsc{Exact}(k, U, F), 
&& \!\!\!\! \text{by definition.} 
\end{align*}
\end{proof}
\begin{definition}
For each $C \subseteq U$, the proposition $Z_{(C,F)}$ is defined as:
\begin{equation} \label{eq:def_z}
Z_{(C, F)}:=
\bigwedge_{a \in C} \! F_a \;.
\bigwedge_{b \not\in C} \! \neg F_b
\end{equation} 
\end{definition}
which allows to write definition (\ref{eq:exact_def}) as: $\textsc{Exact}(k, U, F)  := \bigvee_{C \in \mathfrak{C}_{U,k}} Z_{(C, F)}$
\begin{lemma}\label{lem:pairwise}
For arbitrary propositions $A$ and $B$ and $1 \leq k \leq \usize$:
\begin{equation} \label{eq:exact_to_sum} 
pr(\textsc{Exact}(k, U, F) . A | B ) = \sum_{C \in \mathfrak{C}_{U,k}}{pr(Z_{(C, F)} . A | B)}
\end{equation}
\end{lemma}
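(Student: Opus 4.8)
The plan is to exploit the fact that $\textsc{Exact}(k, U, F)$ is, by its very definition, the disjunction $\bigvee_{C \in \mathfrak{C}_{U,k}} Z_{(C,F)}$ over the \emph{finite} index set $\mathfrak{C}_{U,k}$, and that the disjuncts are pairwise mutually exclusive. First I would distribute the conjunction with $A$ over this disjunction, so that $\textsc{Exact}(k, U, F) . A$ is logically equivalent to $\bigvee_{C \in \mathfrak{C}_{U,k}} (Z_{(C,F)} . A)$; equivalently, the corresponding event satisfies $\omega^{\Omega}_{\textsc{Exact}(k, U, F) . A} = \bigcup_{C \in \mathfrak{C}_{U,k}} \omega^{\Omega}_{Z_{(C,F)} . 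A}$, using the correspondence $pr(\rho) := pr(\omega^{\Omega}_\rho)$ fixed in Section~\ref{sect:notate}.

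Next I would verify pairwise disjointness. If $C, C' \in \mathfrak{C}_{U,k}$ with $C \neq C'$, then since $|C| = |C'| = k$ there is some object $b \in C \backslash C'$. By the definition of $Z_{(\cdot, F)}$ in~(\ref{eq:def_z}), $Z_{(C,F)}$ entails $F_b$ while $Z_{(C',F)}$ entails $\neg F_b$, so $Z_{(C,F)}$ and $Z_{(C',F)}$ are contradictory; a fortiori $Z_{(C,F)} . A$ and $Z_{(C',F)} . A$ are contradictory, i.e.\ the events $\omega^{\Omega}_{Z_{(C,F)} . A}$ and $\omega^{\Omega}_{Z_{(C',F)} . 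A}$ are disjoint. Note that adjoining the common conjunct $A$ cannot restore consistency, and, since conditioning is on the fixed $B$ on both sides, disjointness is preserved under conditioning.

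Then I would invoke countable (here merely finite) additivity: since $pr$ is a probability measure and $B$ is a fixed proposition with $pr(B) > 0$, the map $X \mapsto pr(X \mid \omega^{\Omega}_B)$ is again a probability measure on the same power-set $\sigma$-algebra, so applying the third Kolmogorov axiom to the finite family of pairwise disjoint events $\{\omega^{\Omega}_{Z_{(C,F)} . A}\}_{C \in \mathfrak{C}_{U,k}}$ gives
\[
pr\Big( \textstyle\bigvee_{C \in \mathfrak{C}_{U,k}} (Z_{(C,F)} . A) \,\Big|\, B \Big) = \sum_{C \in \mathfrak{C}_{U,k}} pr(Z_{(C,F)} . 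A \mid B).
\]
Combining this with the distributive identity from the first step yields the claimed equality. The only point requiring a modicum of care is the translation between the propositional shorthand $pr(\cdot)$ and the underlying event algebra, so that "$\sigma$-additivity of disjoint events" is literally applicable; everything else is routine bookkeeping over the finite set $\mathfrak{C}_{U,k}$, and no assumption on $A$, $B$ or the measure beyond Cournot's principle ($pr(B) > 0$) is needed.
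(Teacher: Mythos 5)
Your proposal is correct and follows essentially the same route as the paper's proof: express $\textsc{Exact}(k,U,F).A$ as the disjunction $\bigvee_{C \in \mathfrak{C}_{U,k}} (Z_{(C,F)}.A)$, establish pairwise inconsistency via an object $b \in C \backslash C'$ forcing $F_b$ and $\neg F_b$, and conclude by finite additivity of the (conditional) measure. The extra care you take in passing between propositions and their corresponding events is a harmless elaboration of the same argument.
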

\begin{proof} $\forall C' \neq C'' \in \mathfrak{C}_{U,k}$:
\begin{align*}
&\exists b \in U \quad b \in C' \text{ and } b \not \in C'',                                             && \!\!\!\! \text {$C'$\,\&\,$C''$\,being distinct with same size}\\
&\Longrightarrow
Z_{(C',F)} \models F_b \text{ and } Z_{(C'' , F)} \models \neg F_b,                     && \!\!\!\! \text{by definition (\ref{eq:def_z})}\\
&\Longrightarrow
Z_{(C' ,F)} . Z_{(C'' , F)} \models F_b \,. \neg F_b \equiv \bot 
\end{align*}
Therefore, the sequence $\{ Z_{(C, F)} \}_{C \in \mathfrak{C}_{U,k}}$ consists of mutually disjoint events. Hence, by the third Kolmogorov probability axiom, for all propositions $A$ and $B$: 
\begin{equation} \label{eq:exact_break}
pr(\textsc{Exact}(k, U, F) . A | B) = 
pr \big(\!\! \bigvee_{C \in \mathfrak{C}_{U,k}} \!\!\!\! (Z_{(C , F)} . A) | B \big) = 
\!\! \sum_{C \in \mathfrak{C}_{U,k}} \!\!\! pr(Z_{(C, F)} . A | B)
\end{equation}
\end{proof}
\begin{lemma}\label{lem:sum_to_one}
Let $1 \leq k \leq \usize$. For all $C \in \mathfrak{C}_{U,k}$ and for all permutations $\permute$: 
\begin{equation*}
Z_{(C, F)}^{\permute} = Z_{(C^{\permute}, F)}
\end{equation*} 
\end{lemma}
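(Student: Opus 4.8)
The plan is to unwind the definition of the substitution operation $(\cdot)^\permute$ applied to the conjunction $Z_{(C,F)}$ and then re-index the two conjunctions using the bijectivity of $\permute$. This is essentially the same bookkeeping already carried out in the proof of Lemma~\ref{lem:exact_permute}, only simpler, because here there is no disjunction over $\mathfrak{C}_{U,k}$ to carry along.

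First I would apply the definition of $\rho^\permute$ (replace every occurrence of a constant $b$ by $\permute(b)$) together with definition~(\ref{eq:def_z}) to obtain $Z_{(C,F)}^\permute = \bigwedge_{a \in C} F_{\permute(a)} \,.\, \bigwedge_{b \notin C} \neg F_{\permute(b)}$. Then I would re-index each conjunction. For the first one, since $\permute$ is a bijection of $U$ we have $a \in C \iff \permute(a) \in C^\permute$, so setting $a' := \permute(a)$ turns $\bigwedge_{a \in C} F_{\permute(a)}$ into $\bigwedge_{a' \in C^\permute} F_{a'}$. For the second one I would first establish $b \notin C \iff \permute(b) \notin C^\permute$: if $\permute(b) \in C^\permute$ then $\permute(b) = \permute(c)$ for some $c \in C$, hence $b = c \in C$ by injectivity, and conversely $b \in C$ gives $\permute(b) \in C^\permute$ by the definition of $C^\permute$; therefore setting $b' := \permute(b)$ turns $\bigwedge_{b \notin C} \neg F_{\permute(b)}$ into $\bigwedge_{b' \notin C^\permute} \neg F_{b'}$. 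Combining the two yields $Z_{(C,F)}^\permute = \bigwedge_{a' \in C^\permute} F_{a'} \,.\, \bigwedge_{b' \notin C^\permute} \neg F_{b'}$, which is exactly $Z_{(C^\permute,F)}$ by definition~(\ref{eq:def_z}); note $C^\permute \subseteq U$ and, by relation~(\ref{eq:biject1}), $|C^\permute| = k$, so this proposition is well-defined for the given $k$.

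There is no substantial obstacle here. The only point that needs care is the complement case $b \notin C \iff \permute(b) \notin C^\permute$, i.e.\ the observation that $\permute$ restricts to a bijection $U \setminus C \to U \setminus C^\permute$, which is exactly where the bijectivity of $\permute$ (not merely its being a map $U \to U$) is used; everything else is a mechanical unfolding of definitions.
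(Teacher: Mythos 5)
Your proposal is correct and follows essentially the same route as the paper's own proof: unfold definition~(\ref{eq:def_z}), re-index both conjunctions via $c \in C \iff \permute(c) \in C^\permute$, rename $\permute(a), \permute(b)$ to fresh variables, and fold back into $Z_{(C^\permute,F)}$. Your only addition is spelling out the complement direction $b \notin C \iff \permute(b) \notin C^\permute$ via injectivity, which the paper leaves implicit; that is a fine (and slightly more careful) touch but not a different argument.
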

\begin{proof} 
\begin{align*}
Z_{(C, F)}^{\permute} &= 
\bigwedge_{a \in C} \!\! F_{\permute(a)} \;.
\bigwedge_{b \not\in C} \!\! \neg F_{\permute(b)} 
&& \text{by relation (\ref{eq:def_z})} \\
& = \bigwedge_{\permute(a) \in C^{\permute}} \!\!\!\!\! F_{\permute(a)} \; . \bigwedge_{\permute(b) \not\in C^{\permute}} \!\!\!\!\!\! \neg F_{\permute(b)} 
&& \text{by def. } c \in C   \iff   \permute(c) \in C^{\permute}\\
& = \bigwedge_{a' \in C^{\pi}} \!\!\! F_{a'} \; . \bigwedge_{b' \not\in C^{\pi}} \!\!\!\! \neg F_{b'} 
&& \text{renaming $\permute (a)$ to $a'$ and $\permute (b)$ to $b'$}\\
& = Z_{(C^{\permute}, F)}         		
&&\text{by relation (\ref{eq:def_z})}.
\end{align*}
\end{proof}
\begin{proof}[{\bf Proof of Theorem \ref{theo:exact2list}}] For the sake of conciseness, here we only prove relation (\ref{eq:exact2list_fa_ga}) i.e.:
\begin{equation*}
\forall a \in U, 1 \leq k \leq \usize \quad pr \big(H | \textsc{Exact}(k, U, F) \, . F_a.  G_a \big) = pr \big(H | F_{1 : k} .\neg  F_{{k +1} : {\usize}} . G_{k} \big)
\end{equation*}
The remaining relations (\ref{eq:exact2list}, \ref{eq:exact2list_fa_notga}  \& \ref{eq:exact2list_notfa_notga}) can be proved using the same method and by small (and obvious) appropriate modifications.

For the sake of simplicity, we first swap the names of the objects $a$ and $1$ as follows: 
Let $\permute' := \{ a / 1 ; 1 / a\}$. By the exchangeability assumption: 
\begin{align} \label{eq:star}
&pr \big(H | \, \textsc{Exact}(k, U, F) \, . F_a.G_a \big) = pr \big(H^{\permute'} | \, \textsc{Exact}^{\permute'}(k, U, F) \, . F_{\permute' (a)}.G_{\permute' (a)} \big) \notag\\
&=pr \big( H | \textsc{Exact}(k, U, F) \, . F_{1}.G_{1} \big), 
&&\!\!\!\!\!\!\!\!\!\!\!\!\!\!\!\!\!\!\!\!\!\!\!\!\!\!\!\!\!\!\!\!\!\!\!\!\!\!\!\!\!\!\!\!\!\!\!\!\!\!\!\!\!\!\!\!\! \text{by Lemmas \ref{lem:h_permute} and \ref{lem:exact_permute}} \notag\\
&= \frac{pr(H) \cdot pr\big( \textsc{Exact}(k, U, F) . \, F_{1}.G_{1} | H \big)}{pr \big( \textsc{Exact}(k, U, F) \, . F_{1}.G_{1} \big)}, 
&&\!\!\!\!\!\!\!\!\!\!\!\!\!\!\!\!\!\!\!\!\!\!\!\!\!\!\!\!\!\!\!\!\!\!\!\!\!\!\!\!\!\!\!\!\!\!\!\!\!\!\!\!\!\!\!\!\! \text{by Bayes rule.}
\end{align}
Now:
\begin{align} \label{eq:exact_expand1}
&pr \big( \textsc{Exact}(k, U, F) \, . F_{1}.G_{1} | H \big) =\!\! \sum_{C \in \mathfrak{C}_{U,k}}\!\!{pr( Z_{(C, F)} .  F_{1}.G_{1} | H)}, \quad \text{by Lemma \ref{lem:pairwise}}\notag\\
&=\!\! \sum_{C \in \mathfrak{C}_{U,k} \text{ s.t.\ } 1 \in C}\!\!\!\!\!\!\!\!\!\!{pr(Z_{(C, F)} .  F_{1}.G_{1} | H)} 
\;\;\;\;\;\;+ \!\! \sum_{C \in \mathfrak{C}_{U,k} \text{ s.t.\ } 1 \not \in C }\!\!\!\!\!\!\!\!\!\!{pr(Z_{(C, F)} .  F_{1}.G_{1} | H)}\notag\\
&=\!\! \sum_{C \in \mathfrak{C}_{U,k} \text{ s.t.\ } 1 \in C}\!\!\!\!\!\!\!\!\!\!{pr(Z_{(C, F)} .  F_{1}.G_{1} | H)} 
\end{align}
The second summation is eliminated because in the case of any $C \in \mathfrak{C}_{U,k}$ such that $1 \not \in C$, $Z_{(C, F)} \vDash \neg F_{1}$ (see definition (\ref{eq:def_z})), therefore in this case, $pr(Z_{(C, F)} .  F_{1}.G_{1} | H) = pr(Z_{(C, F)} .  \neg F_{1} . F_{1}.G_{1} | H)  = pr(\bot | H ) = 0$.

Due to the exchangeability assumption, the members of the first summation are all equal. The reasoning is as follows:\\
In the case of any $C \in \mathfrak{C}_{U,k}$ such that $1 \in C$, there exists some permutation that map each of the members of $C$ to the set $\{1, 2, \ldots k \}$ with $1$ as a fixed point i.e.: 
\begin{equation} \label{eq:special_map}
\forall C \in \mathfrak{C}_{U,k} \text{ s.t.} \; 1 \in C: \quad 
\exists \permute \quad {C}^{\permute} = \{1, 2, \ldots, {k} \}, \; \permute (1) = 1  
\end{equation}
In fact, for any member of $\mathfrak{C}_{U,k}$, exactly $(k-1)!$ permutations with such properties exist. Using such permutations, $\forall C \in \mathfrak{C}_{U,k} \text{ s.t. } 1 \in C$:
\begin{align} \label{eq:expand2}
\exists \permute \quad &pr(Z_{(C , F)} .  F_{1}.G_{1} | H) = pr(Z_{(C, F)}^{\permute} .  F_{\permute (1)}.G_{\permute (1)} | H^{\permute}), 
&&\!\!\! \text{by exchangeability}\notag\\
&\quad = pr(Z_{( \{1, \ldots, k\} , F)} .  F_{\permute (1)}.G_{\permute (1)} | H^{\permute}), 
&&\!\!\! \text{using (\ref{eq:special_map}) in Lemma \ref{lem:sum_to_one}}\notag\\
&\quad = pr(Z_{(\{1, \ldots, k\} , F)} .  F_{1}.G_{1} | H^{\permute}), 
&&\!\!\! \text{$1$ being a fix point}\notag\\
&\quad = pr(Z_{(\{1, \ldots, k\} , F)} .  F_{1}.G_{1} | H),                    
&&\!\!\! \text{by Lemma \ref{lem:h_permute}}\notag\\
&\quad = pr(F_{1 : k} . \neg F_{{k+1} : {\usize}} .  G_{1} | H),                    
&&\!\!\! \text{expanding $Z_{(.,.)}$ by def. (\ref{eq:def_z})}
\end{align}
Thus, combining (\ref{eq:exact_expand1}) and (\ref{eq:expand2}):
\begin{align} \label{eq:sun}
&pr \big (\textsc{Exact}(k, U, F) . F_{1}.G_{1} | H \big ) = 
\sum_{C \in \mathfrak{C}_{U,k} \text{ s.t.\  } 1 \in C}\!\!\!\!\!\!\!\!\!\! {pr(Z_{(C, F)} .  F_{1}.G_{1} | H)} \notag\\
&\quad= {\usize-1\choose{k-1}} pr(F_{1 : k} . \neg F_{{k+1} : {\usize}} .  G_{1} | H) 
\end{align}
By a similar justification:
\begin{equation} \label{eq:moon}
pr \big (\textsc{Exact}(k, U, F) . F_{1}.G_{1} \big ) = {\usize-1\choose{k-1}} pr(F_{1 : k} . \neg F_{{k+1} : {\usize}} .  G_{1})
\end{equation}
Combining equations (\ref{eq:sun}), (\ref{eq:moon}) and (\ref{eq:star}):
\begin{align*}
&pr \big (\textsc{Exact}(k, U, F) \, . F_{1}.G_{1} | H \big ) = \frac{pr(H) \cdot pr \big( \textsc{Exact}(k, U, F) \, . F_{1}.G_{1} | H \big)} {pr \big( \textsc{Exact}(k, U, F) \, . F_{1}.G_{1} \big)} \\
&=\frac{{\usize-1\choose{k-1}} \, pr(H) \, pr(F_{1 : k} \neg F_{{k+1} : {\usize}} .  G_{1} | H) }
{{\usize-1\choose{k-1}} pr(F_{1 : k} \neg F_{{k+1} : {\usize}} . G_{1})} 
=pr(H| F_{1 : k} \neg F_{{k+1} : {\usize}} .  G_{1})
\end{align*}
\end{proof}
\begin{small}

\appendix

\section{List of Notation}\label{app:Notation}

\begin{tabbing}
  \hspace{0.23\textwidth} \= \hspace{0.63\textwidth} \= \kill 
 {\bf Symbol }      \> {\bf Explanation}                                                    \\[0.5ex]
  $U = \{1, 2, \ldots ,\upsilon \}$				\>universe of arbitrary size $\usize$\\[0.5ex]
  $1, 2, \ldots, \usize$		\>objects of universe $U$ (short form)\\[0.5ex]
  $\upsilon, \upsilon', \upsilon'' \in \mathbb{N} $		\>symbols used to denote the size of universe\\[0.5ex]
  $U_\upsilon = \{1, 2, \ldots ,\upsilon \}$			\>universe of size $\upsilon$\\[0.5ex]
  $\alpha \leq \beta$			\>lower and higher bounds for the size of the universe\\[0.5ex]
  $a, b, c$ and $b_1, b_2, \ldots$			\>typical objects (or individuals) (not necessarily consecutive)\\[0.5ex] 
  $\psi$			\>typical 1-place predicate\\[0.5ex]
  $\psi_b$			\>a proposition assigning predicate $\psi$ to object $b$\\[0.5ex] 
  $\psi_{b_i : b_j}$ 		\>$\psi_{b_i} . \psi_{b_{i+1}} \ldots \psi_{b_j}$\\[0.5ex]  
  $F, G$			\>atomic 1-place predicates\\[0.5ex]
  $F_b, G_b$			\>propositions assigning $F$ and $G$ to object $b$, respectively\\[0.5ex]
  $\Qff_b, \Qft_b, \Qtf_b, \Qtt_b$	\> complete descriptions (of object $b$)\\[0.5ex]
  $\impl_b$			\>$F_b \rightarrow G_b \equiv \neg F_b \vee G_b \equiv \neg \Qtf_b$ \\[0.5ex]
  $H$				\>General hypothesis: $\forall b \in U \quad \impl_b$\\[0.5ex]
  $E$				\>evidence: $F_a . G_a$\\[0.5ex]
  $B, D$				\>typical (objective) background knowledge\\[0.5ex]
  $\rho$			\>a typical proposition\\[0.5ex]
  $\inds{\rho}$		\>set of all individuals described by $\rho$\\[0.5ex]
  $\preds{\rho}$		\>set of all (simple) predicates involved in $\rho$\\[0.5ex]
  $\top$			\>tautologous proposition\\[0.5ex]
  $\Delta$			\>set of all propositions in form of relation (\ref{eq:artless}) (on universe $U$)\\[0.5ex]
  $\Delta_\upsilon$		\>set of all propositions in form of relation (\ref{eq:artless}) (on universe $U_\upsilon$)\\[0.5ex]
  $\delta$			\>set of all complete descriptions that do not falsify $H$\\[0.5ex]
  CDV			\>complete description vector\\[0.5ex] 
  $\Omega$			\>set of all CDVs (w.r.t\ universe $U$)\\[0.5ex]
  $\Omega_\upsilon$	\> set of all CDVs (w.r.t.\ universe $U_\upsilon$)\\[0.5ex]
  $\bigss$			\>union of $\Omega_\alpha$ to $\Omega_\beta$\\[0.5ex]
  $\omega^{\Omega_\upsilon}_{\rho} \subseteq \Omega_\upsilon$ 	\>an event that corresponds proposition $\rho$ (w.r.t.\ sample space $\Omega_\upsilon$)\\[0.5ex]
  $\omega_{\rho} \subseteq \bigss$		\>an event that corresponds proposition $\rho$ (w.r.t.\ sample space $\bigss$)\\[0.5ex]
  $\measure$				\>probability (over sample space $\Omega$)\\[0.5ex]  
  $\measure_\upsilon$		\>probability (over sample space $\Omega_\upsilon$)\\[0.5ex]  
  $\bigmeasure$			\>probability (over sample space $\bigss$)\\[0.5ex]  
  $\permute$, $\permute'$, $\permute''$				\>typical permutations (i.e.\ bijections) in $U$\\[0.5ex]
  $\pi(b)$			\>an object that $b \in U$ is mapped to by bijection $\pi$\\[0.5ex]
  $\rho^\permute$		\>a proposition obtained from $\rho$ by replacing any $b \in \inds_\rho$ with $\permute(b)$\\[0.5ex]
  $C, C', C'' \subseteq U$		\>typical subsets of $U$\\[0.5ex]
  $C^\permute$		\>$\{ \permute (b)  :  b \in C \}$\\[0.5ex]
  $\mathfrak{C}_{U,k}$	\>set of all subsets of $U$ with cardinality $k$\\[0.5ex]
  $\mathfrak{C}_{U,k}^\permute$	\>$\{ C^\permute  :  C \in \mathfrak{C}_{U,k} \}$\\[0.5ex]
  $Z_{(C, F)}$ 		\>$\bigwedge_{a \in C} \! F_a \;. \bigwedge_{b \not\in C} \! \neg F_b$\\[0.5ex]
  $\textsc{Exact}(k, U, F)$\>a proposition representing: ``exactly $k$ members of $U$ are $F$" \\[0.5ex]
\end{tabbing}

\end{small}


\begin{thebibliography}{}

\bibitem[AP94]{Aamodt94} Aamodt, A. and Plaza, E. (1994).
Case-Based Reasoning: Foundational Issues, Methodological Variations, and System Approaches, \emph{Artificial Intelligence Communications 7}, 1: 39--52.

\bibitem[Arr70]{arrow70} Arrow, K. (1970).
\emph{Essays in the Theory of Risk-Bearing}, North-Holland.

\bibitem[Car50]{carnap50} Carnap, R. (1950).
\emph{Logical Foundations of Probability}, Chicago: Chicago University Press. 2nd Ed. 1962.

\bibitem[Car80]{carnap80} Carnap, R. (1980).
 A Basic System of Inductive Logic, Part II. (In Richard C. Jeffrey (Ed.), \emph{Studies in Inductive Logic and Probability}, vol. 2. Berkeley: University of California Press)

\bibitem[Cou43]{cournot43} Cournot, A. (1943).
\emph{Exposition de la th\'{e}orie des chances et des probbilit\'{e}s}.
L.~Hachette, Paris.

\bibitem[FH06] {fitelson06} Fitelson, B. and Hawthorne, J. (2006).
How Bayesian Confirmation Theory Handles the Paradox of the Ravens. (In E. Eells \& J. Fetzer (Eds.)  \emph{Probability in Science}). http://fitelson.org/research.htm.

\bibitem[FH10] {fitelson10}  Fitelson, B. and Hawthorne, J. (2010).
The Wason Task(s) and the Paradox of Confirmation. \emph{Philosophical Perspectives}.

\bibitem[Goo67]{good67} Good I.~J. (1967).
The White Shoe is a Red Herring. \emph{British Journal for the Philosophy of Science}, Vol. 17, No. 4: 322.

\bibitem[Goo68]{good68} Good, I.~J. (1968).
The White Shoe qua Red Herring is Pink. \emph{British Journal for the Philosophy of Science} 19: 156--157.


\bibitem[GS82]{gaifman82} Gaifman, H. and Snir, M. (1982).
Probabilities over rich languages, testing and randomness. \emph{Journal of Symbolic Logic}, 47(3):495--548.

\bibitem[Hem45]{hempel45} Hempel, C. G. (1945).
Studies in the Logic of Confirmation. \emph{Mind} 54: 1--26, 97--121.

\bibitem[Hem67]{hempel67} Hempel, C. G. (1967).
The White Shoe - No Red Herring. \emph{British Journal for the Philosophy of Science}, Vol. 18, No. 3: 239.

\bibitem[HLNU13] {hutter13} Hutter, M., Lloyd. J.W., Ng K.S. and Uther W.T.B. (2013).
Probabilities on Sentences in an Expressive Logic. \emph{Journal of Applied Logic}. 

\bibitem[Hum88] {hume88} Hume, D. (1888).
Hume's Treatise of Human Nature, edited by L. A. Selby Bigge, Oxford, Clarendon Press. Originally published 1739--40.

\bibitem[Hut07] {hutter07} Hutter M. (2007). 
On universal prediction and Bayesian confirmation. \emph{Theoretical Computer Science}, 384(1):33--48.

\bibitem[Jay03] {jaynes03} Jaynes, E. T. (2003). 
\emph{Probability theory: The logic of science}. Cambridge: Cambridge University Press.

\bibitem[Mah99]{maher99} Maher, P. (1999).
Inductive Logic and the Ravens Paradox. \emph{Philosophy of Science} 66: 50--70.

\bibitem[Mah04]{maher04} Maher, P. (2004).
Probability Captures the Logic of Scientific Confirmation. (In C. Hitchcock (Ed.) \emph{Contemporary Debates in the Philosophy of Science}, Oxford: Blackwell, 69--93).

\bibitem[Pop59] {popper59} Popper, K. (1959).
\emph{The Logic of Scientific Discovery}. London: Hutchinson. (1st German Ed., \emph{Logik der Forschung},
1935.)

\bibitem[Rei80]{reiter80} Reiter, R. (1980)
Equality and domain closure in first-order databases. JACM, 27:235--249.

\bibitem[RH11] {rathmanner11} Rathmanner, S. and Hutter M. (2011).
A Philosophical Treatise of Universal Induction, \emph{Enthropy}, Vol. 13, No. 6, 1076--1136.

\bibitem[RN03]{russell03} Russell S.J. and Norvig P. (2003). 
\emph{Artificial Intelligence: A Modern Approach}, (2nd ed.), New Jersey: Prentice Hall.

\bibitem[Sav54]{savage54} Savage, L. J. (1954).
\emph{The Foundations of Statistics}, New York, Wiley.

\bibitem[SG72]{scheffler72} Scheffler, I. and Goodman, N.J. (1972).
Selective Confirmation and the Ravens. \emph{Journal of Philosophy} 69: 78--83.

\bibitem[SH11]{sunehag11} Sunehag P. and Hutter M. (2011). 
Axioms for rational reinforcement learning \emph{Proceedings
of 22:nd international conference on algorithmic learning theory,
Springer Lecture Notes in Computer Science} 6925:338--352.

\bibitem[Sol64]{solomonoff64} Solomonoff, R. J. (1964).
A formal theory of inductive inference: Parts 1 and 2, \emph{Information and Control}, 7:1--22 and 224--254.

\bibitem[Vra04]{vranas04} Vranas, P. (2004).
Hempel's Raven Paradox: A Lacuna in the Standard Bayesian Solution. \emph{British Journal for the Philosophy of Science} 55: 545--560.

\end{thebibliography}
\end{document}